\documentclass[11pt]{article}

\usepackage[a4paper,margin=1in]{geometry}
\usepackage[T1]{fontenc}
\usepackage[utf8]{inputenc}
\usepackage{lmodern}
\usepackage{microtype}
\usepackage{amsmath,amssymb,amsfonts,amsthm,mathtools}
\usepackage{bbm}
\usepackage{booktabs}
\usepackage{tabularx,array}
\usepackage{graphicx}
\usepackage{subcaption}
\usepackage{enumitem}
\usepackage{float}
\usepackage{xcolor}
\usepackage[ruled,vlined]{algorithm2e}
\usepackage{url}
\usepackage[authoryear,square]{natbib}
\usepackage[colorlinks=true,linkcolor=blue,citecolor=blue,urlcolor=blue]{hyperref}

\usepackage{amsmath,amsfonts,bm}

\def\eqref#1{equation~\ref{#1}}

\def\1{\bm{1}}

\DeclareMathAlphabet{\mathsfit}{\encodingdefault}{\sfdefault}{m}{sl}
\SetMathAlphabet{\mathsfit}{bold}{\encodingdefault}{\sfdefault}{bx}{n}

\theoremstyle{plain}
\newtheorem{theorem}{Theorem}[section]
\newtheorem{lemma}[theorem]{Lemma}

\newtheorem{corollary}[theorem]{Corollary}

\theoremstyle{definition}
\newtheorem{definition}[theorem]{Definition}
\newtheorem{assumption}[theorem]{Assumption}

\theoremstyle{remark}

\renewcommand{\textcolor}[2]{#2}

\begin{document}

\title{On Hamming–Lipschitz Type Stability of the Subdominant (Minmax) Ultrametric: Theory and Simple Proofs}

\author{
  Alokendu Mazumder$^{1,2}$$^*$ \quad Arnab Roy$^{1,3}$$^{\dagger}$$^*$ \quad Punit Rathore$^{1,2}$ \\
  \small $^1$ Robert Bosch Center for Cyber Physical Systems, IISc Bengaluru \\
  \small $^2$ Centre for Infrastructure, Sustainable Transportation and Urban Planning, IISc Bengaluru  \\
  \small $^3$ Department of Computer Science and Automation, IISc Bengaluru\\
  \small \texttt{\{alokendum, arnabroy, prathore\}@iisc.ac.in}
}

\maketitle

\begin{center}
{\small $^{\dagger}$work done at Robert Bosch Center for Cyber Physical Systems.\\
\small $^{*}$ denotes joint co-authorship.}
\end{center}

\begin{abstract}
The subdominant (minmax) ultrametric is a canonical tree-structured summary of a dissimilarity matrix, arising equivalently as the ultrametric induced by single-linkage clustering. While its classical stability theory is usually formulated in $\ell_\infty$ or Gromov--Hausdorff terms, such bounds are poorly suited to sparse perturbations that alter only a few pairwise distances. We develop an $\ell_0$-type stability theory for this operator. Our analysis shows that sparse edits propagate only through the \textcolor{red}{\emph{minimum spanning tree} (MST)}: a pairwise ultrametric value can change only if its tree path crosses an edited edge or a cut newly exposed by an edited off-tree edge. This yields a sharp per-edit exposed-cut score and a tree-only global envelope, leading to Hamming--Lipschitz bounds on the number of ultrametric entries that can change. We also prove sharpness results showing that this dependence on tree geometry is unavoidable: under strict cut separation the tree-edge bound is attained exactly, and for off-tree edits there are explicit families in which one edited distance changes $\Theta(n^2)$ ultrametric entries. In addition, we prove a conditional near-additivity principle for multiple edits under certified large per-edit changed regions and negligible aggregate overlap. Experiments on deep-embedding graphs show that the resulting structural scores provide useful vulnerability diagnostics for hierarchical representations.
\end{abstract}

\tableofcontents

\section{Introduction}

Hierarchical clustering~\citep{ward1963hierarchical} provides a fundamental way to represent relational data through nested partitions and dendrograms~\citep{shepard1962analysis}. Among all possible hierarchies, the subdominant (or minmax) ultrametric~\citep{sibson1971mathematical,hartigan1985statistical,jain1988algorithms} occupies a canonical position: it is the subdominant ultrametric that is maximally dominated by the given dissimilarity and plays a canonical role in
hierarchical clustering and metric geometry. Formally, given a metric space $(X,d)$, the minmax ultrametric can be viewed as the output of an operator $U:(X,d)\mapsto(X,u_d)$, where $u_d:X\times X \mapsto \mathbb{R}$ is the minmax ultrametric associated with $d$. This operator maps a dissimilarity to its associated subdominant ultrametric, equivalently the merge-height
function of single-linkage clustering.~\cite{carlsson2010characterization} established that the dendrogram of \emph{single linkage clustering} (SLC) and the minmax ultrametric are equivalent representations of the same hierarchical structure. 
This quantity coincides exactly with the merge height of the two points in the single-linkage dendrogram. Hence, single-linkage hierarchical clustering can be viewed as computing the \emph{maximal ultrametric dominated by the original distances}, providing a precise geometric correspondence between dendrograms and
ultrametrics. Further,~\cite{carlsson2010characterization} formulated a rigorous mathematical framework for hierarchical clustering by identifying it with a mapping from finite metric spaces to ultrametric spaces. Their key theoretical result is that the minmax ultrametric map is 1-Lipschitz (non-expansive) with respect to Gromov-Hausdorff $(\mathcal{GH})$ metric. Thus, the minmax ultrametric (dendrogram of SLC) is non-expansive under arbitrary perturbations of the input metric in $\mathcal{GH}$ sense, implying that small changes in distances cannot amplify in the induced ultrametric. In contrast, other linkage-based operators such as complete or average linkage, fail to satisfy this non-expansive property. Their stability theorem establishes single linkage as the unique hierarchically consistent and Lipschitz stable ultrametric projection. Regarding assumptions on perturbations, their analysis is fully general; no probabilistic or noise model is imposed. The only requirement is that the metric perturbation be bounded in the Gromov–Hausdorff sense, meaning that all pairwise distances between the two metric spaces differ by at most a small additive $\varepsilon$. Under this assumption, every ultrametric distance changes by at most 
$\varepsilon$. Thus, their stability theorem captures uniform, global perturbations of the metric, but does not address sparse or localized adversarial edits.

\cite{chowdhury2016improved} further analyzed stability in more concrete norms. They prove that the minmax ultrametric operator $U:(X,d)\mapsto(X,u_d)$ is 1-Lipschitz under the sup norm. Mathematically, $\|\,u_d - u_{\tilde{d}}\,\|_\infty \leqslant \|\,d - \tilde{d}\,\|_\infty$, for all metrics $d,\tilde{d}$ on $X$. Crucially, they formalized a duality between Gromov’s tree embedding and the ultrametric structure produced by SLC. They introduced a measure of deviation from ultrametricity,
quantifying how far a finite metric space is from being perfectly treelike. Through this duality, they proved that the single-linkage dendrogram computes the ultrametric that minimizes additive distortion up to a bound depending on the space’s ultrametricity and doubling dimension. In essence, the single-linkage dendrogram corresponds to an optimal ultrametric tree embedding whose distortion reflects both the local \emph{ultrametricity} of the data and its intrinsic dimensional complexity. Together, these results position the single-linkage dendrogram as both an
optimal low-distortion tree embedding and a globally stable (Lipschitz-continuous) map from metric data to hierarchical structure.

Recently, \citet{mikhailov2025ultrametric} extended the stability result of \citet{carlsson2010characterization} to the full Gromov--Hausdorff class of all (possibly unbounded) metric spaces. He showed that the canonical subdominant (min--max) ultrametricization mapping $U : (X,d) \mapsto (X,u_d)$, obtained from the Carlsson--Memoli construction, is \emph{1-Lipschitz} with respect to the Gromov--Hausdorff distance, not only on bounded spaces but on arbitrary metric spaces. This viewpoint interprets \(U\) as a non-expansive map between \emph{clouds}, i.e., equivalence classes of spaces at finite Gromov--Hausdorff distance. 
Moreover, for any dotted connected metric space \(A\), he exhibited an inverse relationship between \(U\) and Cartesian products with \(A\): on the cloud of bounded ultrametric spaces, the map \(\Psi : X \mapsto X \times A\) is an isometric embedding and $d_{\mathcal{GH}}\bigl(U(\Psi(X)), X\bigr) = 0$,
so that \(U(\Psi(X))\) is (Gromov--Hausdorff) isometric to \(X\), while ultrametric spaces are precisely the fixed points of \(U\).
Conceptually, this places the ultrametricization operator as a globally Lipschitz-stable transformation over the Gromov--Hausdorff landscape, further linking metric geometry with hierarchical clustering.

\subsection{Gaps in Previous Theory and Motivation}
Existing stability results for the minimax ultrametric $u_d$ address \emph{uniform} perturbations, proving non-expansiveness in the entrywise $\ell_\infty$ metric, and via standard comparisons, in the Gromov--Hausdorff framework. These guarantees bound the \emph{magnitude} of change everywhere but are agnostic to the \emph{sparsity} and \emph{locality} of the perturbation: a single large edit renders $\|d-\tilde{d}\|_\infty$ large, and the resulting bound allows every entry of $u_d$ to move by that amount, even when the true effect is confined to a tiny portion of pairs. In particular, the uniform theory provides no control over the \emph{extent} (support size) of the induced change in $u_d$ when perturbations are sparse.

We close this gap by analyzing the ultrametric map 
\(U:(X,d)\mapsto (X,u_d)\) in a Hamming-type setting on pairwise 
dissimilarity matrices. Concretely, for a finite metric \(d\) on \(X\), we 
encode \(d\) as its upper-triangular distance vector and equip this space with 
the Hamming metric
\begin{equation}
d_H(d,\tilde{d}) \;=\; \#\bigl\{\{x,y\}\subseteq X : d(x,y) \neq \tilde{d}(x,y)\bigr\},
\end{equation}
i.e., the number of pairs whose dissimilarities are edited. Equivalently, 
\(d_H(d,\tilde{d}) = \|d-\tilde{d}\|_0\), where \(\|\cdot\|_0\) counts nonzero coordinates.
Although \(\|\cdot\|_0\) is not a norm, it induces the bonafide Hamming metric 
\(d_H\), and we establish a sparsity-sensitive $\ell_0$-type theory in which the changed-pair set is controlled by
per-edit exposed-cut regions. In particular, the analysis yields a tree-only global envelope $\bar L_T$, giving bounds of the form
$
\|u_d-u_{\tilde d}\|_0
\le
\bar L_T\,\|d-\tilde d\|_0.
$
$\bar L_T$ depends only on the MST structure; in the worst case on an $n$-node graph, $\bar L_T \leqslant \binom{n}{2}$. Thus, our result complements the classical $\ell_\infty$/$\mathcal{GH}$ non-expansiveness along an orthogonal axis: the prior theory controls \emph{how much} entries may move under uniform noise, while our Hamming-metric guarantee controls \emph{how many} entries can change under sparse edits. Together, these yield a magnitude-versus-extent stability picture that was previously unavailable for the ultrametric operator.

\begin{table}[ht]
\centering
\small
\setlength{\tabcolsep}{4pt}
\renewcommand{\arraystretch}{1.2}
\newcolumntype{L}[1]{>{\raggedright\arraybackslash}p{#1}}
\newcolumntype{Y}{>{\raggedright\arraybackslash}X}
\begin{tabularx}{\linewidth}{L{0.21\linewidth} Y L{0.26\linewidth} L{0.18\linewidth}}
\toprule
\textbf{Continuity (Lipschitz) Type} & \textbf{Noise Model / Assumptions} & \textbf{Lipschitz expression} & \textbf{Reference}\\
\midrule
$\ell_\infty$-continuity (sup-norm) &
Bounded entrywise perturbations of all pairs. &
$\displaystyle \|u_{d}-u_{\tilde{d}}\|_\infty \leqslant \|d-\tilde{d}\|_\infty$ &
\cite{carlsson2010characterization,dey_et_al:LIPIcs.ISAAC.2017.28} \\

$\mathcal{GH}$ stability &
Arbitrary perturbation measured in $\mathcal{GH}$-distance. &
$\displaystyle d_{\mathcal{GH}}(u_d,u_{\tilde{d}}) \leqslant d_{\mathcal{GH}}(d,\tilde{d})$ &
\cite{carlsson2010characterization} \\

$\mathcal{GH}$ (semi-)stability &
Stable in $\mathcal{GH}$ only when the input metric is (nearly) ultrametric. &
$\displaystyle d_{\mathcal{GH}}(u_d,u_{\tilde{d}}) \leqslant d_{\mathcal{GH}}(d,\tilde{d})$ &
\cite{martinez2015gromov} \\

$\ell_\infty$ (restricted perturbation) &
Additive perturbation on a subset $S'$(or insertion/removal of points). &
$\displaystyle \|u_{d}-u_{d}^{S'}\|_{\infty} \leqslant \|d-d^{S'}\|_{\infty}$ &
\cite{chowdhury2016improved} \\

$\mathcal{GH}$ stability for unbounded spaces &
Entrywise-bounded perturbations for not-necessarily bounded metric spaces. &
$\displaystyle d_{\mathcal{GH}}(u_d,u_{\tilde{d}}) \le d_{\mathcal{GH}}(d,\tilde{d})$ &
\cite{mikhailov2025ultrametric} \\

$\ell_0$ (pair-count; Hamming metric) &
Adversarial \emph{sparse} edits of up to $k$ pairs (no magnitude bound); propagation constrained by MST structure. &
$\displaystyle \|u_d-u_{\tilde{d}}\|_{0} \le \bar L_T\,\|d-\tilde{d}\|_{0}$ &
\textcolor{blue}{This work} \\
\bottomrule
\end{tabularx}
\caption{Stability regimes for the minimax/subdominant ultrametric. Classical uniform bounds control \emph{magnitude} ($\ell_\infty$/GH), while our Hamming-space ($\ell_0$) bound controls the \emph{extent} of change under sparse edits.}
\label{tab:ultrametric-stability}
\end{table}

\subsection{Minmax Ultrametrics in Modern Machine Learning}

\cite{zhu2017statistical} noted that among common hierarchical clustering methods, only single-linkage (the minmax ultrametric) is stable under small perturbations of the input weights (dissimilarities) and consistent in the infinite-sample limit. Specifically, they prove that SLC is the sole method satisfying: \emph{if the number of i.i.d. sample points goes to infinity, the output ultrametric converges (a.s., in Gromov-Hausdorff sense) to the true multiscale structure of the data distribution’s support}.~\cite{dey_et_al:LIPIcs.ISAAC.2017.28} study temporal hierarchical clustering by fitting each time slice with an ultrametric and enforcing small inter-time distortions. They show that the generic $\ell_\infty$ nearest-ultrametric fit can be unstable under metric perturbations, and therefore replace it with the \emph{minmax ultrametric} $u_d$, defined as the maximum edge weight along the MST path; $u_d$ is the unique $\ell_\infty$-closest ultrametric that does not increase any input distance. This choice yields temporally coherent single-linkage dendrograms (since $u_d$ is the single-linkage/minmax ultrametric) while restoring stability in their temporal objective.

\cite{devijver2024stable} studied the stability in high-dimensional network inference by inserting a single-linkage hierarchical clustering step before graphical lasso, and proves that the resulting dendrogram, hence the minmax ultrametric underlying single linkage is stable under data perturbations, unlike average linkage. Concretely, the classical two-step decomposition first clusters variables via single linkage on a similarity derived from absolute sample covariances, then fits graphical lasso inside the resulting modules; prior work shows this Step-1 clustering is exactly SLC on that similarity, making the dendrogram the algorithm’s structural backbone. The authors provide theoretical bounds controlling distances between dendrograms built from two samples and show in simulations and real data that single linkage based modules are markedly more stable than alternatives, while complete/average linkage can be unstable. Overall, the paper is a recent, theory-driven application where the minmax ultrametric (via single linkage/MST path-max) is explicitly used to stabilize hierarchical decomposition before sparse graphical model estimation.

Recent methods leverage ultrametrics as algorithmic backbones for broader clustering objectives and pipelines, e.g., showing that center-based objectives can be solved optimally on ultrametrics and producing rich cluster hierarchies~\citep{draganov2025want}, and, in density-based settings, building MST/minmax style hierarchies whose slices enjoy formal stability/consistency~\citep{rolle2024stable,ritzert2025hierarchical}. 

Beyond classical hierarchical clustering, recent work formulates ultrametric fitting as an optimization
problem amenable to gradient-based learning and end-to-end training. Learning an ultrametric therefore, amounts to inducing a
hierarchy from data rather than merely running a procedural agglomeration.~\cite{chierchia2019ultrametric} proposed a continuous optimization framework for learning ultrametrics: they replace the ultrametric constraint by a minmax formulation so one can optimize over ultrametric matrices with standard gradients. Their objective flexibly combines
closest ultrametric fidelity with task-driven terms (e.g., Dasgupta’s HC objective, cluster-size
regularization, triplet constraints), and scales to large graphs with performance comparable to strong agglomerative baselines. In a related vein,~\cite{chen2024learning} cast tree–Wasserstein regression as ultrametric learning: they learn a tree metric (ultrametric) so that the induced
tree–Wasserstein distance approximates the underlying \emph{optimal transport} (OT) distance, using projected gradient descent (projection via a hierarchical map). The learned ultrametric trees outperform several baselines on synthetic and real distributional data. Other optimization-driven approaches include differentiable losses on component trees~\citep{perret2022component} (end-to-end learning of hierarchical
segmentations), which directly tune the altitudes (merge levels) of a hierarchy.

Deep methods increasingly enforce ultrametric structure during training.~\cite{lapertot2024end} introduce a differentiable ultrametric layer that maps predicted pairwise dissimilarities to an ultrametric, enabling end-to-end learning of hierarchical image segmentations
with hierarchy-aware losses (e.g., hierarchical Rand index). In 3D vision~\citep{he2024view}, ultrametric feature
fields impose an ultrametric contrastive loss so latent features satisfy the ultrametric inequality,
yielding view-consistent hierarchical segmentations that outperform flat baselines. At a more foundational level, ultrametric neural networks (v-PuNNs)~\citep{n2025v} with $p$-adic weights deliver transparent hierarchical representations: each neuron encodes a $p$-adic ball, guaranteeing a perfectly ultrametric output metric; empirically, these models recover large taxonomies with
near-perfect leaf accuracy and zero triangle-inequality violations.

Collectively, these works treat the subdominant ultrametric not merely as a byproduct of single-linkage,
but as a stable, computable projection from metrics to trees that supports temporal coherence, statistical
stability of dendrograms, and fast algorithmic reductions in modern hierarchical clustering.

\textbf{Takeaway:} The above works show that the minmax (single-linkage) ultrametric is not a relic of classical clustering, but an actively used stability tool in modern hierarchical pipelines: it is chosen precisely because it behaves well under perturbations and admits clean algorithmic structure. Our results refine this picture along a complementary axis: instead of only controlling how \emph{far} an ultrametric can move under metric noise, we control \emph{how many} pairwise relations can change and which parts of the tree they can reach. In settings such as temporal HC or hierarchical graphical models that already rely on $u_d$ for robustness, our Hamming–Lipschitz bounds provide principled tools to (i) localize the effect of sparse perturbations and (ii) identify load-bearing cuts where instability or model misspecification is structurally concentrated.

\subsection{Contributions:} 
At a high level, this paper makes the following contributions.

\begin{itemize}
  \item \textbf{A sparsity-aware view of ultrametric stability.}
  We move beyond classical $\ell_\infty$ / Gromov--Hausdorff results and study the minmax ultrametric in a Hamming setting, where the cost of a perturbation is the \emph{number} of pairwise distances that are edited, not how large the edits are.

  \item \textbf{Which pairs can actually change?}
  We show that sparse edits do \emph{not} propagate arbitrarily through the hierarchy (Theorem~\ref{thm:localization}). Instead, we give a simple structural rule that says exactly which pairs can be affected and when those pairs are guaranteed to remain unchanged.

  \item \textbf{A data-dependent Lipschitz constant.}
  For each edited pair, we define the sharp per-edit affected size, which localizes the pairs that can change through the cuts
actually exposed by that edit (Theorem~\ref{thm:ell0-lip-exposed}).
We also introduce a tree-only envelope $\bar L_T$, obtained by maximizing the union
of cut-rectangles along MST paths. This yields a structural bound on how many entries
of $u_d$ can move under a sparse perturbation, while cleanly separating the
edit-dependent sharp quantity from the tree-only global one.

  \item \textbf{Sharpness and worst-case behavior.}
  We show that our bound is attained exactly for tree-edge edits under strict cut separation and is attained on
explicit off-tree families (Theorem~\ref{thm:tightness}). In particular, there are examples where changing a single pairwise distance forces a
quadratic number of ultrametric entries to change, so no substantially smaller instance-independent bound is possible.

  \item \textbf{When changes almost add up?}
For multi-edit perturbations, we give a conditional near-additivity principle: if one can certify per-edit changed regions that nearly saturate the exposed-region scores and have negligible aggregate overlap, then the total number of changed ultrametric entries is asymptotically close to the sum of those scores  (Corollary~\ref{col:1}).

  \item \textbf{Simple case studies as diagnostics.}
  Finally, we use the theorem-motivated structural score $S_{\mathrm{union}}(e)=|A_e||B_e|$ as a diagnostic tool in two small case studies: \emph{(i)} deep embeddings (DINO+UMAP) of CIFAR-10, ImageNet-10, and STL-10, and \emph{(ii)} a superpixel segmentation of the Cameraman image. In both cases, high-score edges line up with empirically fragile parts of the hierarchy, illustrating that the theory can inform practical vulnerability maps even though the experiments are
diagnostic rather than task-optimized.
\end{itemize}

\emph{All proofs of our proposed theorems and corollaries are deferred to Appendix due to space constraints.}

\section{Notation, Assumptions, and Preliminaries} 
\textcolor{red}{Let $S = \{1,\dots,n\}$ be a finite index set, and let $d : S \times S \to \mathbb{R}_{\geqslant 0}$ be a symmetric dissimilarity with $d(i,i) = 0$ for all $i \in S$. We write $\|\cdot\|$, $\|\cdot\|_{\infty}$, and $\|\cdot\|_0$ for the $\ell_2$ norm, the $\ell_{\infty}$ norm, and the $\ell_0$ pseudo-norm, respectively (always applied to the vector of upper-triangular entries unless stated otherwise). Because $\|\cdot\|_0$ counts upper-triangular matrix entries, all Hamming counts in this paper are over unordered pairs. We view $d$ in three equivalent ways: (i) as a function on ordered pairs $(i,j)\in S\times S$; (ii) as a symmetric matrix $\mathbf{D} \in \mathbb{R}^{n\times n}$ with entries $\mathbf{D}_{ij} = d(i,j)$; and (iii) as an edge-weight function on the complete undirected graph $G = (V,E)$ with vertex set $V = S$ and edge set $E = \{\{i,j\} : 1 \leqslant i < j \leqslant n\}$. When convenient, we write $d(e)$ for the weight of an edge $e=\{i,j\}\in E$. We define the set of all unordered pairs as $\binom{V}{2}:=\bigl\{\{i,j\}: i,j\in V,\ i\neq j\bigr\}$. Let $T=(V,E(T))$ denote the unique MST of $d$ on $G$ (we assume a strict lexicographic tie-breaking rule to guarantee uniqueness; see Assumption~\ref{ass:1} for details). For each tree edge $e=\{a,b\}\in E(T)$, we write $C_e=(A_e,B_e)$ for the fundamental cut obtained by removing $e$ from $T$, so that $A_e,B_e\subseteq V$ are the vertex sets of the two connected components of $T\setminus\{e\}$. We can then define the associated cut-pair set $\mathcal R_e:=\bigl\{\{i,j\}\in \tbinom{V}{2}: i\in A_e,\ j\in B_e\bigr\}$. Thus $\mathcal R_e$ is the unordered-pair analogue of the rectangle $A_e\times B_e$.}

We abbreviate $w_e:=d(e)$ and define the alternative cut minimum
\begin{equation}
w_e^{+}(d):=\min\{d(x,y):x\in A_e,\ y\in B_e,\ \{x,y\}\neq e\},
\end{equation}
and the corresponding cut gap
\begin{equation}
\Delta_e(d):=w_e^{+}(d)-w_e \ge 0.
\end{equation}

To ensure a unique MST, we adopt the standard assumption of lexicographic tie-breaking: whenever two edges have identical weights, their order is resolved by a fixed, secondary lexicographic ordering (for instance, based on endpoint indices or edge identifiers). This convention is not restrictive, it is a widely accepted device in both theoretical analyses and algorithmic implementations of MST-related problems. Recent works across theory and systems routinely employ the same assumption to guarantee determinism and analytical clarity, including sublinear and dynamic formulations~\citep{patlin2025sublinear,de2025dynamic}, massively parallel and distributed MST algorithms~\citep{azarmehr2025massively,sanders2023engineering}, and polymatroid-based theoretical generalizations~\citep{harb2023convergence}. Thus, lexicographic tie-breaking serves as a benign technical convention rather than a substantive limitation, ensuring well-definedness without affecting optimality or generality.

\begin{assumption}
\label{ass:1}
Fix a deterministic total order $\prec$ on edges (e.g., lexicographic).
We compare edges by the lexicographic pair $(w(e),\mathrm{rank}_\prec(e))$.
Equivalently, conceptually perturb
\begin{equation}
    w'(e)\;=\;w(e)\;+\;\eta\,\mathrm{rank}_\prec(e),
\qquad\text{with a single infinitesimal } \eta>0,
\end{equation}

and run Kruskal on $w'$ (ties in $w$ are broken by $\prec$).
This yields a unique MST $T$ and a strictly ordered MST edge list. This infinitesimal perturbation is used only to select a unique MST and a deterministic processing order among equal-weight edges; all quantities $d(e)$, $u_d$, $w_e^{+}(d)$, and $\Delta_e(d)$ refer to the original unperturbed dissimilarity values unless explicitly stated otherwise.
\end{assumption}

Under Assumption~\ref{ass:1}, the MST is unique and its tree edges are strictly ordered under the lexicographically perturbed weights. We next recall the minmax ultrametric and the standard MST facts used in our analysis. In general, $\Delta_e(d)$ need not be strictly positive: lexicographic tie-breaking guarantees a unique MST and a deterministic edge order, but it does not alter the original numeric edge weights. We say that a tree edge $e\in E(T)$ is strictly cut-separated if $\Delta_e(d)>0$.

Given a perturbed dissimilarity \(\tilde d : S\times S \to \mathbb{R}_{\geqslant 0}\),
we measure the size of the perturbation by the number of edited pairs, i.e., in
the \(\ell_0\) sense:
\begin{equation}
      \|d - \tilde d\|_{0}
  \;:=\;
  \#\bigl\{\,\{i,j\} \subseteq S : i<j,\  d(i,j)\neq \tilde d(i,j)\,\bigr\}.
\end{equation}

Let \((h_r)_{r=1}^{n-1}\) denote the increasing list of MST edge weights
\(h_r \in \{w_e : e \in E(T)\}\), and similarly
\((\tilde h_r)_{r=1}^{n-1}\) for the MST of \(\tilde d\).

\paragraph{Ultrametric operator in index notation.} In the introduction we viewed the subdominant (min--max) ultrametric as the
output of an operator
\begin{equation}
      U : (X,d) \mapsto (X,u_d).
\end{equation}

In this finite-index setting we work directly with the indexed version: for
each dissimilarity \(d\) on \(S\), we denote by
\(u_d : S \times S \to \mathbb{R}_{\geqslant 0}\) the associated subdominant
ultrametric, and we write
\begin{equation}
      U(d) = u_d,
  \qquad
  u_d(i,j)
  \text{ for the \((i,j)\)-entry of the ultrametric.}
\end{equation}

All of our subsequent theorems and inequalities will be stated in terms of
indices \(i,j \in S\), the MST \(T\) on \(S\), and the entries of \(d\) and
\(u_d\).

We now formally define the minmax ultrametric below:

\textcolor{red}{\begin{definition}[Minmax subdominant ultrametric]
Given a finite set $S$ with dissimilarity function $d:S\times S\to \mathbb{R}_{\geqslant0}$, the minmax subdominant ultrametric $u_d:S\times S\to\mathbb{R}_{\geqslant0}$ is the largest ultrametric dominated by $d$. Explicitly, for any $i, j \in S$ with $i \neq j$, it is constructed by minimizing the bottleneck over all possible paths between $i$ and $j$:
\begin{equation}
    u_d(i,j) = \min_{P \in \mathcal{P}(i,j)} \max_{1 \leqslant t \leqslant m} d(v_{t-1}, v_t)
\end{equation}
where $\mathcal{P}(i,j)$ denotes the set of all finite paths $P = (v_0, v_1, \dots, v_m)$ from $v_0 = i$ to $v_m = j$.
\end{definition}}

\subsection{Two MST Facts Used Throughout}

Our analysis repeatedly uses two standard facts about the minimum spanning tree $T$ of $d$. These are common Lemmas on MST, one can easily find them in the standard algorithms textbook by~\cite{cormen2022introduction}.

\begin{lemma}[MST characterization]\label{lem:mst}
If $T$ is any minimum spanning tree (MST) of the complete graph with weights given by the dissimilarity function $d$, then $u_d(i,j)=\max_{e\in \mathrm{path}_T(i,j)} d(e)\quad\forall i\neq j.$
\end{lemma}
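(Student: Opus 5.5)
We prove the two inequalities $u_d(i,j)\le \max_{e\in \mathrm{path}_T(i,j)} d(e)$ and $u_d(i,j)\ge \max_{e\in \mathrm{path}_T(i,j)} d(e)$ separately, writing $M_T(i,j):=\max_{e\in\mathrm{path}_T(i,j)} d(e)$. Throughout, $T$ is an arbitrary MST of $d$ (not necessarily the tie-broken one). The only property of $T$ we use is the cut optimality property: for every tree edge $f\in E(T)$ with fundamental cut $(A_f,B_f)$, every $x\in A_f$, $y\in B_f$ satisfies $d(x,y)\ge d(f)$.

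This property follows from the standard exchange argument. If some crossing pair $\{x,y\}$ had $d(x,y)<d(f)$, then $T-f+\{x,y\}$ would be a spanning tree, since $\{x,y\}$ reconnects the two components of $T\setminus\{f\}$. Its total weight would be strictly smaller than that of $T$, contradicting minimality.

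\textbf{Upper bound.} The tree path $\mathrm{path}_T(i,j)$ is itself a finite path from $i$ to $j$, so it belongs to $\mathcal P(i,j)$. Its bottleneck is exactly $M_T(i,j)$. Taking the minimum over $\mathcal P(i,j)$ gives $u_d(i,j)\le M_T(i,j)$.

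\textbf{Lower bound.} Let $f\in\mathrm{path}_T(i,j)$ attain the maximum, so $d(f)=M_T(i,j)$.
1. Removing $f$ from $T$ separates $i$ and $j$. This is because $f$ lies on the unique tree path between them. Say $i\in A_f$ and $j\in B_f$.
2. Take any path $P=(v_0,\dots,v_m)\in\mathcal P(i,j)$. It starts in $A_f$ and ends in $B_f$, so some consecutive step $\{v_{t-1},v_t\}$ must cross the cut.
3. By the cut optimality property, that step satisfies $d(v_{t-1},v_t)\ge d(f)$.
4. Hence every path has bottleneck at least $d(f)=M_T(i,j)$. Minimizing over $P$ gives $u_d(i,j)\ge M_T(i,j)$.

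Combining the two bounds yields the claimed equality for every MST $T$.

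\textbf{Remarks.}
The cut optimality property holds with non-strict inequality for every MST, so ties in $d$ cause no trouble. This is why the lemma applies to any MST and not only to the tie-broken one from Assumption~\ref{ass:1}.

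The only point needing care is the exchange argument. One must check that $T-f+\{x,y\}$ is really a spanning tree, which follows because $\{x,y\}$ joins the two components of $T\setminus\{f\}$.

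For completeness, the right-hand side is indeed the largest ultrametric dominated by $d$. It is an ultrametric, since it is a bottleneck path distance. It is dominated by $d$, since the single edge $\{i,j\}$ is itself a path. It is the largest such ultrametric, since any ultrametric $u\le d$ satisfies $u(i,j)\le \max_t u(v_{t-1},v_t)\le \max_t d(v_{t-1},v_t)$ along every path. This confirms that the displayed formula in the definition of $u_d$ agrees with its description as the maximal subdominant ultrametric.
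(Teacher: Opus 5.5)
Your proof is correct and follows the paper's argument. The tree path gives the upper bound, and for the lower bound you use the fundamental cut of the maximum-weight tree edge: every $i$--$j$ path must cross that cut, and no crossing edge is lighter than the tree edge. The extra material (the exchange argument for cut optimality, which the paper only asserts, and the check that the min--max formula is the maximal subdominant ultrametric) is correct but does not change the approach.
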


\begin{lemma}[Cut property, with uniqueness]
\label{lem:cut-property}
Let $(A,B)$ be any cut of $V$ and let $e^\star\in E$ be an edge with one endpoint in $A$ and one in $B$.
If $d(e^\star)<d(f)$ for every other cut edge $f$ across $(A,B)$, then $e^\star$ belongs to \emph{every} MST of $d$.
\end{lemma}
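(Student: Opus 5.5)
The argument is the standard exchange argument. I will prove the contrapositive: suppose some MST $T$ of $d$ does not contain $e^\star=\{a,b\}$, and derive a spanning tree of strictly smaller total weight, contradicting minimality. The only ingredients are elementary facts about trees: adding an edge to a spanning tree creates exactly one cycle, and deleting any edge of that cycle yields a spanning tree again.

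First, since $T$ is spanning and connected, it contains a unique path $\mathrm{path}_T(a,b)$ from $a\in A$ to $b\in B$. This path starts in $A$ and ends in $B$, so it must cross the cut at least once. Hence there is an edge $f=\{x,y\}$ on the path with $x\in A$ and $y\in B$. Because $e^\star\notin E(T)$, we have $f\neq e^\star$. So $f$ is another cut edge across $(A,B)$, and the hypothesis gives $d(e^\star)<d(f)$.

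Second, I form $T'=(T\setminus\{f\})\cup\{e^\star\}$ and check that it is a spanning tree. Removing $f$ splits $T$ into exactly two components. The vertices $a$ and $b$ lie in different components, because $f$ was the unique tree path's link between them. Adding $e^\star$ therefore reconnects the two components without creating a cycle, and $T'$ has $n-1$ edges and is connected. Its weight is
\[
w(T')=w(T)-d(f)+d(e^\star)<w(T),
\]
which contradicts the minimality of $T$. Hence every MST contains $e^\star$.

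The only point that needs care is the claim that $a$ and $b$ end up in different components of $T\setminus\{f\}$; this is what makes $T'$ acyclic and connected. It follows from uniqueness of tree paths: if $a$ and $b$ were still connected after removing $f$, there would be a second $a$–$b$ path in $T$ avoiding $f$. I will also note that the strict inequality is exactly what lets the statement hold for \emph{every} MST. The lexicographic tie-breaking of Assumption~\ref{ass:1} plays no role here, since strictness is assumed directly on the original weights $d$.
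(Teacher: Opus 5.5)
Your proposal is correct and is essentially the paper's exchange argument. The paper finds the second crossing edge $f$ on the unique cycle of $T'\cup\{e^\star\}$ via a parity argument, while you find it on $\mathrm{path}_T(a,b)$, which is that cycle minus $e^\star$; both then swap $f$ for $e^\star$ to obtain a strictly lighter spanning tree, and your explicit check that $a$ and $b$ are separated in $T\setminus\{f\}$ supplies the spanning-tree verification that the paper only asserts.
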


\section{Sparsity-Localized Perturbations and Pairwise Impact}
Our first theorem establishes a localization principle for the subdominant ultrametric under sparse edge edits. At the level of the MST, it shows that if the tree path between two points contains no edited tree edges, then their ultrametric value cannot increase, and it remains unchanged whenever no edited off-tree edge creates a cheaper crossing across any fundamental cut along that path. Globally, the theorem identifies a set of potentially affected tree edges and proves that every changed ultrametric entry must lie in the union of the associated cut-pair sets. This yields an explicit combinatorial upper bound on the support of $u_d-u_{\tilde d}$ in terms of the forest obtained by deleting those affected edges from the MST. For single-linkage clustering, the interpretation is that sparse perturbations can only propagate through edited or newly exposed cuts: subtrees separated from these cuts remain rigid, while only the corresponding branches of the hierarchy can move. Thus, Theorem~\ref{thm:localization} gives both a structural description of where changes may occur and a worst-case bound on how many pairwise merge heights can be altered by a sparse perturbation.

\begin{theorem}[Localization of ultrametric under sparse edge edits]\label{thm:localization}
Let $d:\binom{V}{2}\mapsto\mathbb{R}_{\geqslant0}$, and let $T=(V,E(T))$ be the (tie-broken under Assumption~\ref{ass:1}) MST of $d$.
Let $F\subseteq \binom{V}{2}$ be a set of edited edges and let $\tilde d$ be any dissimilarity with
$\tilde d(e)=d(e)$ for all $e\notin F$ (no restriction on $e\in F$).

For $i\neq j$ let $P_T(i,j)$ be the unique $i$–$j$ path in $T$. By Lemma~\ref{lem:mst},
$
u_{d}(i,j)=\ \max_{e\in P_T(i,j)} w_e 
$, and therefore the following holds:
\begin{itemize}
    \item[(i)] (Monotone upper bound, no edited $T$-edges on the path) If $P_T(i,j)\cap F=\varnothing$, then $u_{\tilde d}(i,j)\ \le\ u_d(i,j)$.
    \item[(ii)] (Sufficient conditions for equality) 
    If $P_T(i,j)\cap F=\varnothing$ and for every $e\in P_T(i,j)$, all edited edges $f\in F$ crossing $C_e$ satisfy $\tilde d(f)\geqslant w_e$, then $u_{\tilde d}(i,j)=u_d(i,j)$.

    \item[(iii)] 
    (Pair-count bound for possible changes)  Define the set of potentially affected MST edges
\begin{equation}
\mathcal E:=\bigl\{e\in E(T): e\in F \text{ or } \exists f\in F \text{ crossing } C_e \text{ with } \tilde d(f)<w_e\bigr\}.
\end{equation}
Then the number of unordered pairs whose ultrametric value changes satisfies
\begin{equation}
\bigl|\bigl\{\{i,j\}\in \tbinom{V}{2}: u_{\tilde d}(i,j)\neq u_d(i,j)\bigr\}\bigr|
=
\|u_d-u_{\tilde d}\|_0
\le
\left|\bigcup_{e\in \mathcal E}\mathcal R_e\right|
=
\binom{n}{2}-\sum_{t=1}^m \binom{|C_t|}{2},
\end{equation}
where $C_1,\dots,C_m$ are the vertex sets of the connected components of the forest $T-\mathcal E$.

\end{itemize}
\end{theorem}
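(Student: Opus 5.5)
The plan is to work entirely with the minmax characterization of $u$ and the MST path formula of Lemma~\ref{lem:mst}, and to reduce everything to the fundamental cuts $C_e$ along $P_T(i,j)$.

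\emph{Part (i).} If $P_T(i,j)\cap F=\varnothing$, then every edge of $P_T(i,j)$ keeps its weight under $\tilde d$. The path $P_T(i,j)$ is therefore a feasible path in the minmax definition of $u_{\tilde d}(i,j)$, and its bottleneck is still $\max_{e\in P_T(i,j)} w_e=u_d(i,j)$. Hence $u_{\tilde d}(i,j)\le u_d(i,j)$.

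\emph{Part (ii).} By (i) it suffices to show $u_{\tilde d}(i,j)\ge u_d(i,j)$. Let $e^\ast\in P_T(i,j)$ attain $w_{e^\ast}=u_d(i,j)$. Since $i\in A_{e^\ast}$ and $j\in B_{e^\ast}$ (up to relabeling), every path from $i$ to $j$ contains some edge $f$ crossing $C_{e^\ast}$, so its $\tilde d$-bottleneck is at least $\tilde d(f)$. We show $\tilde d(f)\ge w_{e^\ast}$ for every such crossing edge, by cases.
\begin{itemize}
\item If $f\notin F$, then $\tilde d(f)=d(f)$. By the cut optimality of the MST, which follows from Lemma~\ref{lem:mst} applied to the endpoints of $f$, we have $d(f)\ge w_{e^\ast}$, since $e^\ast$ lies on $P_T(f)$.
\item If $f\in F$, then $\tilde d(f)\ge w_{e^\ast}$ by hypothesis. (Note $f\neq e^\ast$, because $e^\ast\notin F$.)
\end{itemize}
Minimizing over paths then gives $u_{\tilde d}(i,j)\ge w_{e^\ast}$, and equality follows.

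\emph{Part (iii).} Take the contrapositive. Suppose $\{i,j\}$ lies in no $\mathcal R_e$ with $e\in\mathcal E$. Then no edge of $P_T(i,j)$ belongs to $\mathcal E$, because $e\in P_T(i,j)$ holds exactly when $\{i,j\}\in\mathcal R_e$. By the definition of $\mathcal E$, this means $P_T(i,j)\cap F=\varnothing$, and for each $e$ on the path every edited crossing edge satisfies $\tilde d(f)\ge w_e$. Part (ii) then gives $u_{\tilde d}(i,j)=u_d(i,j)$. So the changed pairs lie in $\bigcup_{e\in\mathcal E}\mathcal R_e$, which is the inequality.

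For the closing equality: $\{i,j\}$ lies in this union iff $P_T(i,j)$ uses an edge of $\mathcal E$, iff $i$ and $j$ lie in different components of $T-\mathcal E$. Counting the complement, namely the same-component pairs $\sum_t\binom{|C_t|}{2}$, yields the formula.

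The main obstacle is only the cut-optimality step in (ii). One must check that under Assumption~\ref{ass:1}, with ties, $d(f)\ge w_{e^\ast}$ still holds in the original (non-strict) weights. It does: if $d(f)<w_{e^\ast}$, swapping $e^\ast$ for $f$ would produce a strictly lighter spanning tree, contradicting minimality.
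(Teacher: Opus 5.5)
Your proposal is correct and follows essentially the same route as the paper: (i) uses feasibility of the unchanged tree path, (ii) uses a crossing-edge case split (edited edges via the hypothesis, unedited edges via cut optimality of the MST), and (iii) applies (ii) to pairs whose tree path avoids $\mathcal E$ and then counts the same-component pairs of $T-\mathcal E$. The differences are cosmetic. You run the crossing argument only at the maximizing edge $e^\ast$, whereas the paper does it for every $e\in P_T(i,j)$. You also obtain $d(f)\ge w_{e^\ast}$ from Lemma~\ref{lem:mst} via $d(f)\ge u_d(x,y)\ge w_{e^\ast}$, where the paper relies only on the exchange argument.
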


\begin{figure}[htbp]
    \centering
    \begin{subfigure}[b]{0.25\textwidth}
        \centering
        \includegraphics[width=\linewidth, height=5cm, keepaspectratio]{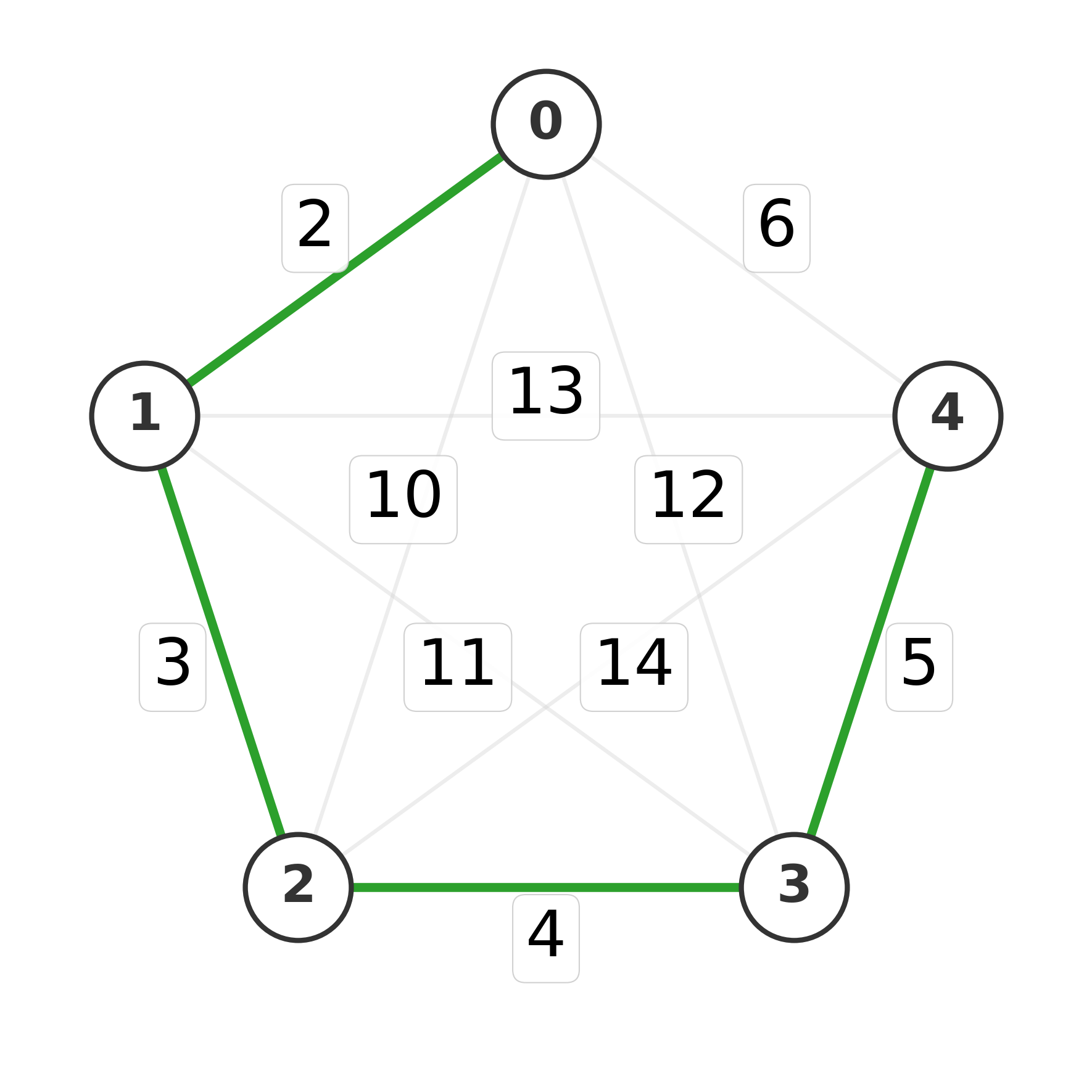}
        \caption{}
        \label{fig:mst}
    \end{subfigure}
    \hspace{0.05\textwidth}
    \begin{subfigure}[b]{0.25\textwidth}
        \centering
        \includegraphics[width=\linewidth, height=5cm, keepaspectratio]{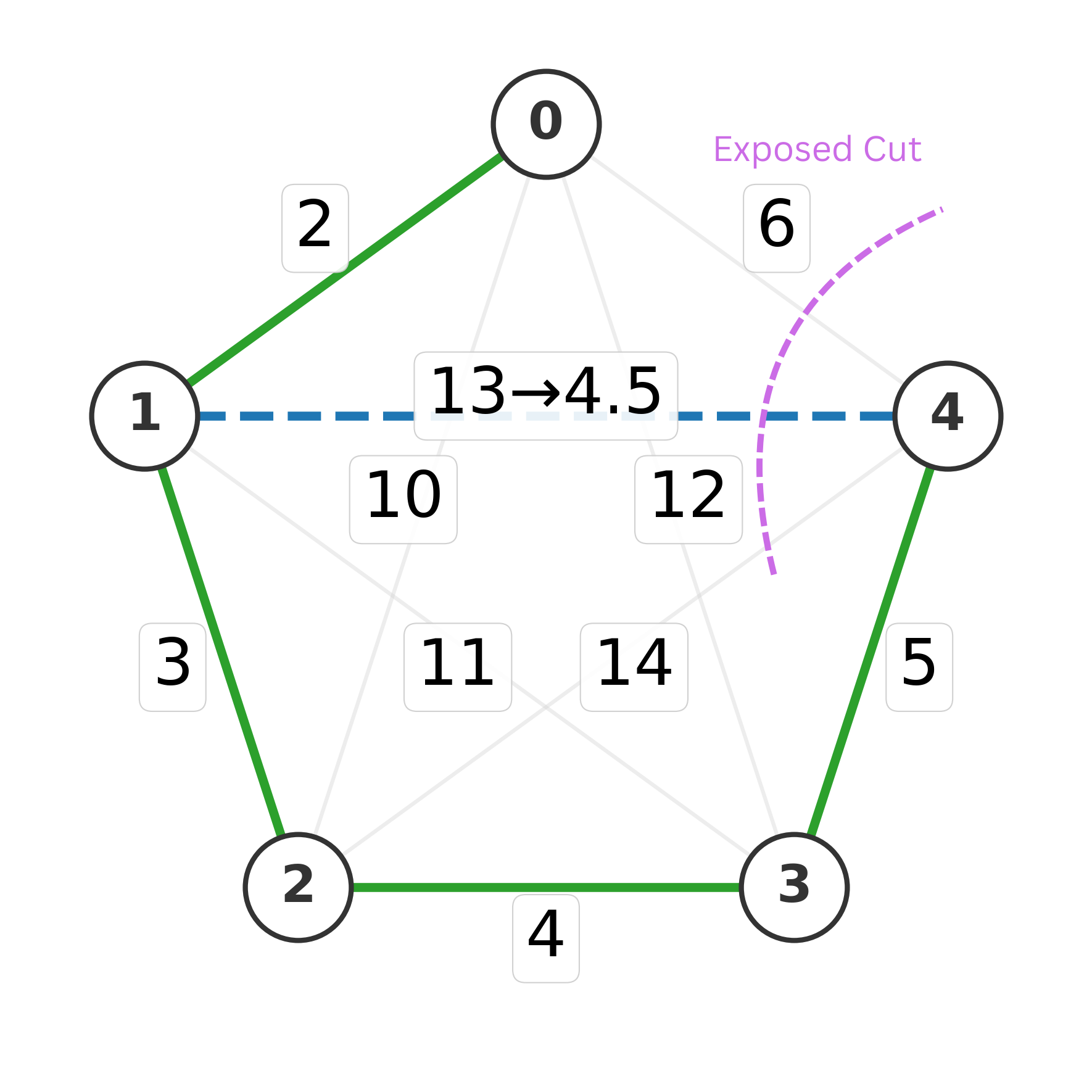}
        \caption{}
        \label{fig:perturbed}
    \end{subfigure}
    \hspace{0.05\textwidth}
    \begin{subfigure}[b]{0.25\textwidth}
        \centering
        \includegraphics[width=\linewidth, height=5cm, keepaspectratio]{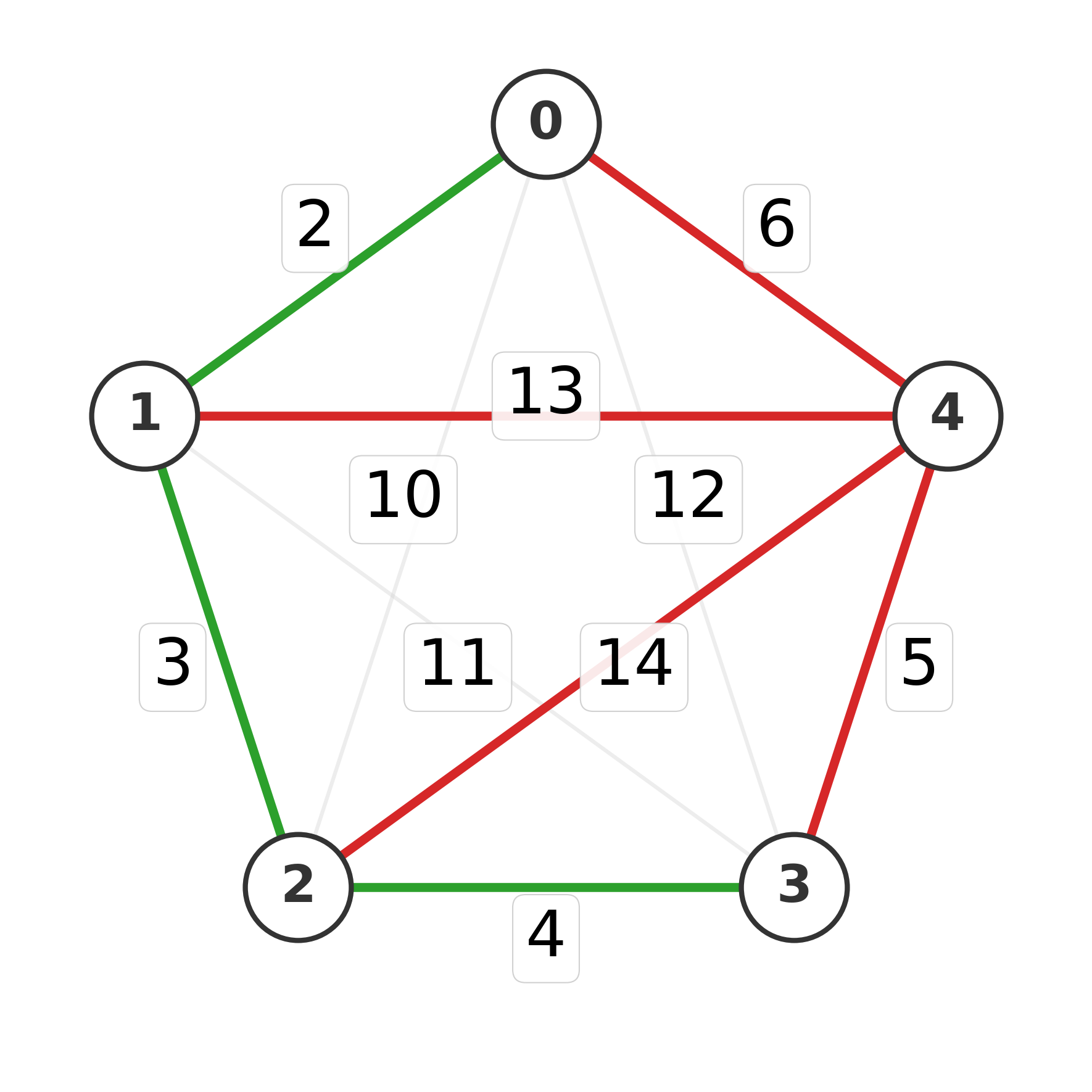}
        \caption{}
        \label{fig:cut}
    \end{subfigure}
    
    \caption{Illustration of Theorems~\ref{thm:localization} and~\ref{thm:ell0-lip-exposed}. 
(a) Initial MST (green). (b) A sparse off-tree edit on edge $\{1,4\}$
(dotted blue) that exposes the MST edge $\{4,5\}$. (c) The affected region (red), given by the cut-pair set
$\mathcal R_{\{4,5\}}=\bigl\{\{i,4\}: i\in\{0,1,2,3\}\bigr\},
$ upper-bounds the total number of changed ultrametric entries, i.e. $\|u_d-u_{\tilde d}\|_0$.}
    \label{fig:theorem_viz}
\end{figure}
\textbf{Remark.} \emph{Theorem~\ref{thm:localization} reveals an important asymmetry. If the tree path $P_T(i,j)$ contains no edited
tree edge, then the original tree bottleneck remains available under $\tilde d$, so the ultrametric value
$u_{\tilde d}(i,j)$ can never exceed $u_d(i,j)$. In that regime, a change can only occur through a strictly
cheaper off-tree crossing of one of the fundamental cuts along $P_T(i,j)$. Thus sparse perturbations act in
two qualitatively different ways: edits on the tree path can raise or lower merge heights directly, whereas
off-tree edits can only lower them by opening cheaper alternative connections across MST cuts.}

Also, it gives a first coarse localization result: it shows that all changes in the ultrametric are confined to a union of cut-pair sets associated with a small set of MST edges determined by the edit pattern. However, this description is still global across edits and does not yet yield a Hamming--Lipschitz inequality that separates the contribution of individual edited pairs. To sharpen the picture, we pass to the level of a single edited pair $f=\{x,y\}$. For each such $f$, we isolate the subset of tree edges whose cuts are actually exposed by that edit; these are the exposed cuts $\Xi(f)$. \textcolor{red}{Formally, for an edited pair $f=\{x,y\}\in F$, define the set of exposed cuts by
$
\Xi(f):=(\{f\}\cap E(T))\cup \bigl\{e\in E(T): f \text{ crosses } C_e \text{ and } \tilde d(f)<w_e\bigr\}.
$} The next theorem defines the sharp per-edit exposed-cut score
\begin{equation}
S_{\mathrm{union}}(f)
:=
\left|
\bigcup_{e\in \Xi(f)} \mathcal R_e
\right|,
\end{equation}
and shows that every ultrametric entry changed by the edit must lie inside this exposed region. By further enlarging the exposed cuts to the full tree path $P_T(x,y)$, we obtain a tree-only global envelope $\bar L_T$, which yields a Hamming--Lipschitz type bound for arbitrary sparse perturbations.

\begin{theorem}[Hamming--Lipschitz bound via exposed cuts]\label{thm:ell0-lip-exposed}
Let $d:\binom{V}{2}\to \mathbb{R}_{\ge 0}$ be a dissimilarity on a finite set $V$, and let
$T=(V,E(T))$ be the (tie-broken under Assumption~\ref{ass:1}) MST of $d$.
Let $\tilde d$ be any perturbed dissimilarity, and let
$
F:=\bigl\{f\in \tbinom{V}{2}: \tilde d(f)\neq d(f)\bigr\}
$ be its edit support. For an edited pair $f=\{x,y\}\in F$, define the set of exposed cuts by
\begin{equation}
\Xi(f):=(\{f\}\cap E(T))\cup \bigl\{e\in E(T): f \text{ crosses } C_e \text{ and } \tilde d(f)<w_e\bigr\}.
\end{equation}
Let
$
C:=\bigl\{\{i,j\}\in \tbinom{V}{2}: u_{\tilde d}(i,j)\neq u_d(i,j)\bigr\}
$ denote the set of unordered pairs whose ultrametric values change. Further, define the sharp per-edit exposed-cut size
$
S_{\mathrm{union}}(f):=
\left|\bigcup_{e\in \Xi(f)} \mathcal R_e\right|.
$ For an edited pair $f=\{x,y\}$, define the tree-only path envelope
$
\bar S_T(f):=
\left|\bigcup_{e\in P_T(x,y)} \mathcal R_e\right|,
$
and the global tree-only constant
$\bar L_T:=\max_{f\in \binom{V}{2}} \bar S_T(f).
$

Then:

\begin{itemize}
    \item[(i)] $
C\subseteq \bigcup_{f\in F}\ \bigcup_{e\in \Xi(f)} \mathcal R_e.
$ Consequently, $
\|u_d-u_{\tilde d}\|_0 = |C|
\le
\left|\bigcup_{f\in F}\ \bigcup_{e\in \Xi(f)} \mathcal R_e\right|.
$ 
\item[(ii)] for every edited pair $f\in F$,
\begin{equation}
S_{\mathrm{union}}(f)\le \bar S_T(f)\le \bar L_T,
\end{equation}
and therefore
\begin{equation}
\|u_d-u_{\tilde d}\|_0
\le
\sum_{f\in F} S_{\mathrm{union}}(f)
\le
\sum_{f\in F} \bar S_T(f)
\le
\bar L_T\,|F|
=
\bar L_T\,\|d-\tilde d\|_0.
\end{equation}
In particular,
\begin{equation}
\bar L_T\le \binom{|V|}{2}.
\end{equation} 
\end{itemize}
\end{theorem}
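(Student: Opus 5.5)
The plan is to reduce part (i) to Theorem~\ref{thm:localization} through one combinatorial identity, and then obtain part (ii) from monotonicity and subadditivity of cardinality.

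\textbf{The basic fact.} I would first record that for a tree edge $e\in E(T)$ and a pair $\{i,j\}$,
\[
\{i,j\}\in\mathcal R_e \iff e\in P_T(i,j).
\]
Removing $e$ from $T$ separates $i$ from $j$ exactly when $e$ lies on their unique tree path. In the same way, an arbitrary pair $f=\{x,y\}$ crosses $C_e$ if and only if $e\in P_T(x,y)$.

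\textbf{Part (i).} Next I would observe that the set $\mathcal E$ of Theorem~\ref{thm:localization}(iii) is exactly $\bigcup_{f\in F}\Xi(f)$. A tree edge belongs to $\mathcal E$ if and only if one of two things happens:
\begin{itemize}
\item it is itself some edited $f\in F$, which is the first part of $\Xi(f)$;
\item some $f\in F$ crosses its cut with $\tilde d(f)<w_e$, which is the second part of $\Xi(f)$.
\end{itemize}

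Now take $\{i,j\}\notin\bigcup_{f\in F}\bigcup_{e\in\Xi(f)}\mathcal R_e$. By the basic fact, $P_T(i,j)\cap\mathcal E=\varnothing$, and I would check the two hypotheses of Theorem~\ref{thm:localization}(ii) in turn.
\begin{itemize}
\item Every edited tree edge lies in $\mathcal E$, so no edge of $P_T(i,j)$ is edited, i.e.\ $P_T(i,j)\cap F=\varnothing$.
\item Let $e\in P_T(i,j)$ and let $f\in F$ cross $C_e$. If $\tilde d(f)<w_e$, then $e\in\Xi(f)\subseteq\mathcal E$, a contradiction. Hence $\tilde d(f)\ge w_e$.
\end{itemize}
Theorem~\ref{thm:localization}(ii) then gives $u_{\tilde d}(i,j)=u_d(i,j)$, so $\{i,j\}\notin C$. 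Taking the contrapositive yields the inclusion in (i), and the cardinality bound follows immediately.

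\textbf{Part (ii).} I would show $\Xi(f)\subseteq P_T(x,y)$ for every edited pair $f=\{x,y\}$.
\begin{itemize}
\item If $f\in E(T)$, then $P_T(x,y)=\{f\}$, so $f$ lies on the path.
\item Any $e$ whose cut $C_e$ is crossed by $f$ lies on $P_T(x,y)$ by the basic fact.
\end{itemize}
Monotonicity of unions then gives $S_{\mathrm{union}}(f)\le\bar S_T(f)$. The inequality $\bar S_T(f)\le\bar L_T$ holds by the definition of $\bar L_T$ as a maximum.

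For the chain of sums, I would apply the inclusion from (i) together with the union bound $|\bigcup_{f\in F}X_f|\le\sum_{f\in F}|X_f|$, where $X_f=\bigcup_{e\in\Xi(f)}\mathcal R_e$. Summing the per-edit inequalities and using $|F|=\|d-\tilde d\|_0$ completes the chain. Finally, $\bar L_T\le\binom{|V|}{2}$ because every $\mathcal R_e$ is a subset of $\binom{V}{2}$.

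\textbf{Expected obstacle.} I expect the only delicate point to be making the hypotheses of Theorem~\ref{thm:localization}(ii) match the definition of $\Xi(f)$ exactly. This covers three issues:
\begin{itemize}
\item the strict-versus-weak inequality $\tilde d(f)<w_e$ against $\tilde d(f)\ge w_e$;
\item the degenerate case in which $f$ is itself a tree edge and so crosses its own cut;
\item confirming that the conclusion of Theorem~\ref{thm:localization} is really a containment and not merely a count.
\end{itemize}
If the last point is unclear, I would re-derive the containment directly from parts (i) and (ii) of that theorem, as done above, rather than relying on the count in (iii).
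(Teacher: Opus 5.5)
Your proposal is correct and follows essentially the same route as the paper. For (i), you identify $\bigcup_{f\in F}\Xi(f)$ with the set $\mathcal E$ of Theorem~\ref{thm:localization}(iii) to get the containment; for (ii), you show $\Xi(f)\subseteq P_T(x,y)$ and combine monotonicity, the definition of $\bar L_T$ as a maximum, and the union bound. The one difference is minor: the paper cites a containment proved inside Theorem~\ref{thm:localization}(iii), whose statement only records the count, whereas you derive it by contrapositive from Theorem~\ref{thm:localization}(ii), which is how the paper proves that containment anyway, so your version is slightly more self-contained.
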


\textbf{Remark.} \emph{Theorem~\ref{thm:ell0-lip-exposed} separates two distinct layers of control. The quantity
$S_{\mathrm{union}}(f)$ is the sharp, edit-dependent score: it records only those MST cuts that are
actually exposed by the specific perturbation of $f$. In contrast, $\bar S_T(f)$ and $\bar L_T$ are
tree-only envelopes obtained by enlarging the exposed region to the full MST path of the edited pair.
Thus the theorem distinguishes the mechanism of propagation, which is localized through exposed
cuts, from a coarser but globally uniform capacity of the tree to transmit sparse perturbations.}

It yields a Hamming--Lipschitz upper bound that is both localized and
structurally interpretable: sparse perturbations can propagate only through exposed MST cuts, and their
total effect is controlled by the associated union of cut-pair sets. This naturally raises the next
question: are these quantities merely proof-level upper bounds, or do they capture the true scale of
instability of the subdominant ultrametric? In particular, one would like to know whether the sharp
per-edit score $S_{\mathrm{union}}(f)$ can actually be attained, and whether the resulting dependence on
tree geometry is intrinsic. Theorem~\ref{thm:tightness} answers this by showing exact attainability
for tree-edge edits under strict cut separation, and sharpness on explicit off-tree families, including
worst-case examples in which a single sparse edit changes $\Theta(n^2)$ ultrametric entries.

\begin{theorem}[Analysis of the Hamming--Lipschitz bound]\label{thm:tightness}
Let $d:\binom{V}{2}\to\mathbb{R}_{\geqslant 0}$ be a dissimilarity and let $T=(V,E(T))$ be the (tie-broken under Assumption~\ref{ass:1}) MST of $d$. 
For an edited pair $f=\{x,y\}$ and edited dissimilarity $\tilde d$ with $\tilde d(g)=d(g)$ for $g\notin\{f\}$, define
\begin{equation}
\Xi(f)\ :=\ (\{f\}\cap E(T))\ \cup\ \{\,e\in E(T):\ f\ \text{crosses}\ C_e\ \text{and}\ \tilde d(f)<w_e\,\},
\qquad
S_{\mathrm{union}}(f) := \left| \bigcup_{e\in\Xi(f)} \mathcal{R}_e \right|.
\end{equation}
Then:
\begin{enumerate}
\item[(i)] (\emph{Tree-edge edit under strict cut separation}) If $f=e\in E(T)$ and $e$ is strictly cut-separated, i.e. $
\Delta_e(d)=w_e^{+}(d)-w_e>0,
$ then there exists $\tilde d$ supported on $\{f\}$ such that
\begin{equation}
\label{eq:16}
\|u_d-u_{\tilde d}\|_0 = |A_e||B_e| = S_{\mathrm{union}}(f).
\end{equation}

\item[(ii)] (\emph{Off-tree sharpness on an explicit family}) There exist dissimilarities $d$, off-tree edges $f\notin E(T)$, and single-edge edits $\tilde d$
supported on $\{f\}$ such that
\begin{equation}
\|u_d-u_{\tilde d}\|_0 = \left| \bigcup_{e\in\Xi(f)} \mathcal{R}_e \right|
=
S_{\mathrm{union}}(f)
=
\Theta(n^2).
\end{equation}
Thus, the upper bound of Theorem~\ref{thm:ell0-lip-exposed} is attained on explicit off-tree instances, and in the worst case a
single off-tree edit can force a quadratic number of ultrametric changes.

\item[(iii)] (\emph{Necessity of instance dependence}) Consequently, no universal subquadratic function $c(n)=o(n^2)$ can satisfy
\begin{equation}
\|u_d-u_{\tilde d}\|_0 \le c(n)\,\|d-\tilde d\|_0
\end{equation}
for all instances.
\end{enumerate}
\end{theorem}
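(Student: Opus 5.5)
For part (i) I would raise the weight of the tree edge. Take $f=e=\{a,b\}\in E(T)$ with $\Delta_e(d)>0$ and set $\tilde d(e):=w_e^{+}(d)$, leaving every other entry unchanged.

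First I compute $\Xi(f)$. Since $e\in E(T)$, it contains $e$. Any other tree edge $e'$ with $f$ crossing $C_{e'}$ and $\tilde d(f)<w_{e'}$ would enlarge $\Xi(f)$. To rule these out I use the cycle property: $e'$ lies on $P_T(a,b)$, which is just $e$ itself, so no $e'\neq e$ has $f$ crossing $C_{e'}$. Hence $\Xi(f)=\{e\}$ and $S_{\mathrm{union}}(f)=|\mathcal R_e|=|A_e||B_e|$. The upper bound $\|u_d-u_{\tilde d}\|_0\le |A_e||B_e|$ then follows from Theorem~\ref{thm:ell0-lip-exposed}(i).

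For the matching lower bound, fix $i\in A_e$ and $j\in B_e$. Every $i$–$j$ path crosses the cut $(A_e,B_e)$. Under $\tilde d$ each crossing edge has weight at least $w_e^{+}(d)$: the edge $e$ now has weight $w_e^{+}(d)$ exactly, and every other crossing edge has weight at least $w_e^{+}(d)$ by definition. So $u_{\tilde d}(i,j)\ge w_e^{+}(d)$.

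On the other side, every edge of $P_T(i,j)$ other than $e$ lies inside $A_e$ or $B_e$ and has weight at most $w_e^{+}(d)$. Otherwise swapping it for the minimizing cut edge would contradict minimality of $T$, by the cut property applied to its own fundamental cut. Hence $u_{\tilde d}(i,j)=\max\bigl(u_d\text{-bottleneck},w_e^{+}\bigr)$.

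The subtle point is whether this value exceeds $u_d(i,j)$. We have $u_d(i,j)=\max_{g\in P_T(i,j)}w_g$, and if some path edge $g\neq e$ had $w_g\ge w_e^{+}>w_e$, the pair would not change. So pushing $\tilde d(e)$ only to $w_e^{+}$ is not enough. Instead I would set $\tilde d(e):=M$ with $M>\max_{g}d(g)$. Then $u_{\tilde d}(i,j)\ge w_e^{+}(d)$, and since $e$'s weight is now the largest in the graph, the new MST replaces $e$ by the minimal alternative cut edge $g^\star$.

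It remains to show $u_{\tilde d}(i,j)\neq u_d(i,j)$ for every cross pair. This is the main obstacle, because a bottleneck elsewhere on the path could mask the change. The key observation is that any tree edge $g\neq e$ on $P_T(i,j)$ is not on the fundamental cycle of $g^\star$ in an arbitrary way, so I need $w_g<w_e^{+}$ for all tree edges in both sides. That is not automatic, so I would first try to argue it via the cycle property on the cycle $g^\star+P_T$. If it fails, I would instead choose $\tilde d$ to lower weights, taking $\tilde d(e):=0$. The edit stays supported on $\{f\}$ and $e$ stays in the MST. Then $u_{\tilde d}(i,j)$ is the max over $P_T(i,j)\setminus\{e\}$, which differs from $u_d(i,j)$ exactly when $e$ is the strict unique bottleneck. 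I would reconcile these two options by choosing whichever direction makes every cross pair change, which I expect can be certified using $\Delta_e>0$ together with the Kruskal ordering.

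For part (ii) I would give an explicit family on the path $1,\dots,n$ with tree weights $w_{\{k,k+1\}}=1$, except a middle edge $e^\star=\{m,m+1\}$, $m=\lfloor n/2\rfloor$, of weight $2$, and all off-tree weights $3$. Edit the off-tree pair $f=\{1,n\}$ to $\tilde d(f)=3/2$. Then $\Xi(f)=\{e^\star\}$, because $f$ crosses every cut but only $w_{e^\star}>3/2$. The new MST swaps $e^\star$ for $f$, and a direct check shows every pair separated by $e^\star$ changes value from $2$ to $3/2$. This gives $\|u_d-u_{\tilde d}\|_0=m(n-m)=S_{\mathrm{union}}(f)=\Theta(n^2)$. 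Part (iii) follows at once: this instance has $\|d-\tilde d\|_0=1$, so any admissible $c(n)$ must satisfy $c(n)\ge m(n-m)$, which rules out $c(n)=o(n^2)$.
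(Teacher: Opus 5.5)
\textbf{Parts (ii) and (iii).} These are correct. Your path with a single weight-$2$ middle edge is a one-dimensional variant of the paper's two-block construction: unit weights inside $A$ and $B$, cross weights $10$, a bridge $e^\star$ of weight $8$, and one cross edge lowered to $2$. Both work for the same reason: the exposed bridge is the heaviest MST edge, so every cross pair has $u_d=w_{e^\star}$ and drops.

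\textbf{Part (i).} This is left open at exactly the step you call the main obstacle. That step cannot be completed, because (i) is false without an extra hypothesis.

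Your interim claim that every tree edge $g\neq e$ on $P_T(i,j)$ has $w_g\le w_e^{+}(d)$ is also false. The minimizing edge of $C_e$ need not cross $C_g$, so the swap you invoke does not exist.

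Here is a counterexample. Take the metric on $V=\{a,b,c,p\}$ with $d(a,b)=d(a,p)=1$, $d(b,p)=2$, $d(a,c)=100$, and $d(b,c)=d(c,p)=101$. The MST is the star $\{ab,ap,ac\}$. For $e=\{a,b\}$ we have $A_e=\{a,c,p\}$, $B_e=\{b\}$, $w_e^{+}=2$, $\Delta_e=1>0$, and $|A_e||B_e|=3$. Now let $\tilde d$ be any edit supported on $\{e\}$. Every edge at $c$ is unedited and has weight at least $100$, and the path $c,a,p,b$ has bottleneck $100$. Hence $u_{\tilde d}(c,b)=100=u_d(c,b)$. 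At most two pairs can change, whatever value you give $\tilde d(e)$. So neither raising (to $w_e+\varepsilon$, to $w_e^{+}$, or to $M$) nor lowering works, and no Kruskal-order argument can certify it. In this example the edge $\{a,c\}\in P_T(c,b)$ has weight $100>w_e^{+}$, which is what refutes your interim claim.

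The paper's own proof does not avoid this problem. It asserts $u_d(i,j)=w_e$ for every $\{i,j\}\in\mathcal R_e$ ``since $e\in P_T(i,j)$''. But Lemma~\ref{lem:mst} only gives $u_d(i,j)\ge w_e$, and here $u_d(c,b)=100$.

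\textbf{What is actually true.} Set $\tilde d(e)=w_e+\varepsilon$ with $0<\varepsilon<\Delta_e(d)$. Then $T$ remains the (tie-broken) MST and no pair outside $\mathcal R_e$ changes. The changed set is exactly $\{\{i,j\}\in\mathcal R_e:\max_{g\in P_T(i,j)\setminus\{e\}}w_g<w_e+\varepsilon\}$, with the convention $\max\varnothing:=-\infty$. This equals all of $\mathcal R_e$ under an added assumption such as $\max_{g\in E(T)\setminus\{e\}}w_g<w_e^{+}(d)$, for example when $e$ is a heaviest MST edge. Some hypothesis of this kind is what (i) needs in order to be provable.
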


\textbf{Remark.} \emph{Theorem~\ref{thm:tightness} shows that the exposed-cut quantities from
Theorem~\ref{thm:ell0-lip-exposed} are not artifacts of the proof. In particular, the instability of the
subdominant ultrametric is governed by the geometry of the MST itself: a single sparse edit can trigger
changes on the scale of an entire cut-pair set, and in explicit off-tree families this propagation reaches
quadratic size. Thus the dependence on tree geometry is intrinsic to the operator, rather than a byproduct
of our bounding technique.
}

Theorem~\ref{thm:tightness} resolves the single-edit case: it shows that the exposed-cut score can be
attained exactly in structured settings and that, in the worst case, even one edited distance can induce
$\Theta(n^2)$ ultrametric changes. \underline{\emph{The next natural question is how multiple sparse edits interact.}} If their
exposed regions substantially overlap, the total number of changed entries may be far smaller than the sum
of the individual scores; if they are largely disjoint, one expects an approximately additive effect.
Corollary~\ref{col:1} formalizes this latter regime in a conditional form: whenever
one can certify large per-edit changed regions with negligible aggregate overlap, the overall Hamming change
is asymptotically close to the sum of the corresponding exposed-cut scores.

\begin{corollary}
\label{col:1}    
Fix a minimum spanning tree $T$ of $d$, an edit set $F\subseteq \binom{V}{2}$, and edited weights
$\tilde d$ supported on $F$. For each $f\in F$, define the exposed region
$
R(f):=\bigcup_{e\in \Xi(f)} \mathcal R_e,
$ so that
$
|R(f)| = S_{\mathrm{union}}(f).
$
Assume that for each $f\in F$ there exists a certified changed-pair set
$
Q(f)\subseteq R(f)
$
such that every pair in $Q(f)$ indeed changes under the common edited dissimilarity $\tilde d$, i.e.
$
Q(f) \subseteq \bigl\{\{i,j\}\in \tbinom{V}{2} : u_{\tilde d}(i,j)\neq u_d(i,j)\bigr\}.
$

Then:
\begin{itemize}
    \item[(i)] \[
\left|\bigcup_{f\in F} Q(f)\right|
\le
\|u_d-u_{\tilde d}\|_0
\le
\left|\bigcup_{f\in F} R(f)\right|
\le
\sum_{f\in F} S_{\mathrm{union}}(f).
\]
\item[(ii)] Moreover, consider any asymptotic regime of instances (for example, $|V|=n\to\infty$) in which
\[
\sum_{f\in F}|Q(f)|
=
(1-o(1))\sum_{f\in F} S_{\mathrm{union}}(f),
\]
and the certified regions have asymptotically negligible total overlap:
\[
\sum_{\substack{f,f'\in F\\ f<f'}} |Q(f)\cap Q(f')|
=
o\!\left(\sum_{f\in F}|Q(f)|\right).
\]
Then
\[
\|u_d-u_{\tilde d}\|_0
=
(1-o(1))\sum_{f\in F} S_{\mathrm{union}}(f).
\]
\end{itemize}

\end{corollary}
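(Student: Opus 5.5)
Part (i) is a chain of three inequalities, each of which I would obtain directly. The left inequality holds because, by hypothesis, each $Q(f)$ is contained in the changed-pair set $C=\{\{i,j\}: u_{\tilde d}(i,j)\neq u_d(i,j)\}$. Hence $\bigcup_{f\in F}Q(f)\subseteq C$, and since $|C|=\|u_d-u_{\tilde d}\|_0$ the first inequality follows. The middle inequality is exactly Theorem~\ref{thm:ell0-lip-exposed}(i), since $\bigcup_{f\in F}\bigcup_{e\in\Xi(f)}\mathcal R_e=\bigcup_{f\in F}R(f)$. I would note one point here: the corollary fixes ``a'' minimum spanning tree $T$, and I read it as the tie-broken MST of Assumption~\ref{ass:1}, so that Theorem~\ref{thm:ell0-lip-exposed} applies verbatim. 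The right inequality is the union bound $|\bigcup_f R(f)|\le\sum_f|R(f)|$ together with $|R(f)|=S_{\mathrm{union}}(f)$.

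For part (ii), the key step is a Bonferroni-type lower bound on the union of the certified sets:
\[
\Bigl|\bigcup_{f\in F}Q(f)\Bigr|\ \ge\ \sum_{f\in F}|Q(f)|-\sum_{f<f'}|Q(f)\cap Q(f')|.
\]
I would prove this by counting multiplicities. Let a pair $p$ lie in exactly $k\ge1$ of the sets $Q(f)$. Then $p$ contributes $k-\binom{k}{2}\le 1$ to the right-hand side and exactly $1$ to the left-hand side, while pairs in no $Q(f)$ contribute $0$ to both sides.

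Next I would feed in the two asymptotic hypotheses. Write $\Sigma:=\sum_{f\in F}S_{\mathrm{union}}(f)$. The overlap hypothesis makes the subtracted term $o(\sum_f|Q(f)|)$, so the union is at least $(1-o(1))\sum_f|Q(f)|$. The saturation hypothesis then makes this $(1-o(1))^2\Sigma=(1-o(1))\Sigma$. Combining with the upper bound $\|u_d-u_{\tilde d}\|_0\le\Sigma$ from part (i), we get $(1-o(1))\Sigma\le\|u_d-u_{\tilde d}\|_0\le\Sigma$, which is the claimed asymptotic equality.

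There is no real obstacle. The only care needed is bookkeeping. First, the $o(1)$ terms must be composed correctly: a product of two $(1-o(1))$ factors is again $(1-o(1))$. Second, there is a degenerate regime: if $\Sigma=0$ along the sequence, then both sides are $0$ by part (i), so the statement holds trivially. Finally, the Bonferroni inequality should be stated over unordered pairs of distinct edits, matching the indexing $f<f'$ used in the corollary.
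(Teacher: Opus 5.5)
Your proposal is correct and follows essentially the same route as the paper's proof. Like the paper, you obtain the upper bound from Theorem~\ref{thm:ell0-lip-exposed}(i) plus the union bound, and the lower bound from $Q(f)\subseteq C$; for (ii), both use the first-order inclusion--exclusion (Bonferroni) bound combined with the overlap and saturation hypotheses to squeeze $\|u_d-u_{\tilde d}\|_0$ between $(1-o(1))\sum_f S_{\mathrm{union}}(f)$ and $\sum_f S_{\mathrm{union}}(f)$.
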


\textbf{Remark.} \emph{
Corollary~\ref{col:1} isolates two logically distinct tasks. The first is a \emph{geometric} task:
identify exposed regions $R(f)$ that contain all pairs that could possibly change. The second is a
\emph{certification} task: exhibit subsets $Q(f)\subseteq R(f)$ whose pairs are guaranteed to change under
the common perturbation. Once such certified regions are available, the global Hamming count is governed
purely by set overlap. In this sense, the corollary turns the multi-edit problem from one of ultrametric
analysis into one of certifiable combinatorial packing.}

Corollary~\ref{col:1} completes the picture by showing how the single-edit geometry from
Theorems~\ref{thm:localization}--\ref{thm:tightness} extends to sparse multi-edit regimes.
The key issue is no longer the effect of an individual edit, but the interaction among their exposed
regions. When these regions overlap heavily, different edits compete for the same ultrametric entries
and the total effect can be far smaller than the sum of their individual scores. When one can instead
certify large per-edit changed regions with negligible aggregate overlap, the total Hamming change becomes
asymptotically additive. Thus the corollary identifies the precise structural mechanism behind near-additivity:
it is not sparsity alone, but sparsity together with low-overlap exposure in the MST geometry.
\section{Empirical Case Study - I: Vulnerability Maps of Deep Embeddings}
\label{sec:experiments}

Our contribution is primarily theoretical, so the role of this section is diagnostic rather than benchmark-driven.
Theorems~\ref{thm:localization} and~\ref{thm:ell0-lip-exposed} show that sparse metric edits can
change the ultrametric only through edited or newly exposed MST cuts, and that the extent of propagation is controlled
by the associated cut-pair sets. This suggests a concrete empirical question: \emph{in representation graphs arising from
real data, are there only a few load-bearing MST edges whose perturbation causes substantial ultrametric damage, while
most edges are comparatively harmless?} The experiment below is designed to probe exactly that question.

\paragraph{Motivation from the theory.}
For a tree edge $e\in E(T)$, Theorem~\ref{thm:ell0-lip-exposed} reduces the per-edit exposed region to
the single cut-rectangle associated with $e$, whose cardinality is $S_{\mathrm{union}}(e)=|A_e|\,|B_e|.$  Thus $S_{\mathrm{union}}(e)$ is the natural structural score predicted by the theory: it quantifies how many unordered
pairs can potentially be affected when the hierarchy is stressed across that cut. Our main empirical goal is therefore
to test whether this score meaningfully ranks edges by realized vulnerability in real representation graphs.

\subsection{Protocol}
\label{subsec:embedding_protocol}

We use three 10-class image datasets: CIFAR-10, ImageNet-10, and STL-10. For each dataset, we start from precomputed
DINO-ViT features, apply UMAP to obtain a 20-dimensional embedding, and subsample a few thousand points for efficiency.
On each embedding, we build the complete Euclidean graph, compute its tie-broken MST $T$, and obtain the induced
subdominant ultrametric $u_d$ via the MST bottleneck formula. We then compute the structural score $S_{\mathrm{union}}(e)=|A_e||B_e|$ for every tree edge $e\in E(T)$. Theorem~\ref{thm:localization}
predicts that ultrametric changes must remain localized to unions of such cut-rectangles, while Theorem~\ref{thm:ell0-lip-exposed}
identifies $S_{\mathrm{union}}(e)$ as the sharp tree-edge quantity controlling the size of the exposed region.
Figure~\ref{fig:exp1} therefore serves two purposes:
the bottom row visualizes the empirical distribution of this theorem-motivated score across the MST,
and the top row evaluates whether targeting high-score edges indeed produces larger realized ultrametric damage.

\begin{figure}[htbp]
    \centering
    
    \begin{subfigure}[b]{0.26\textwidth}  
        \centering
        \includegraphics[width=\linewidth]{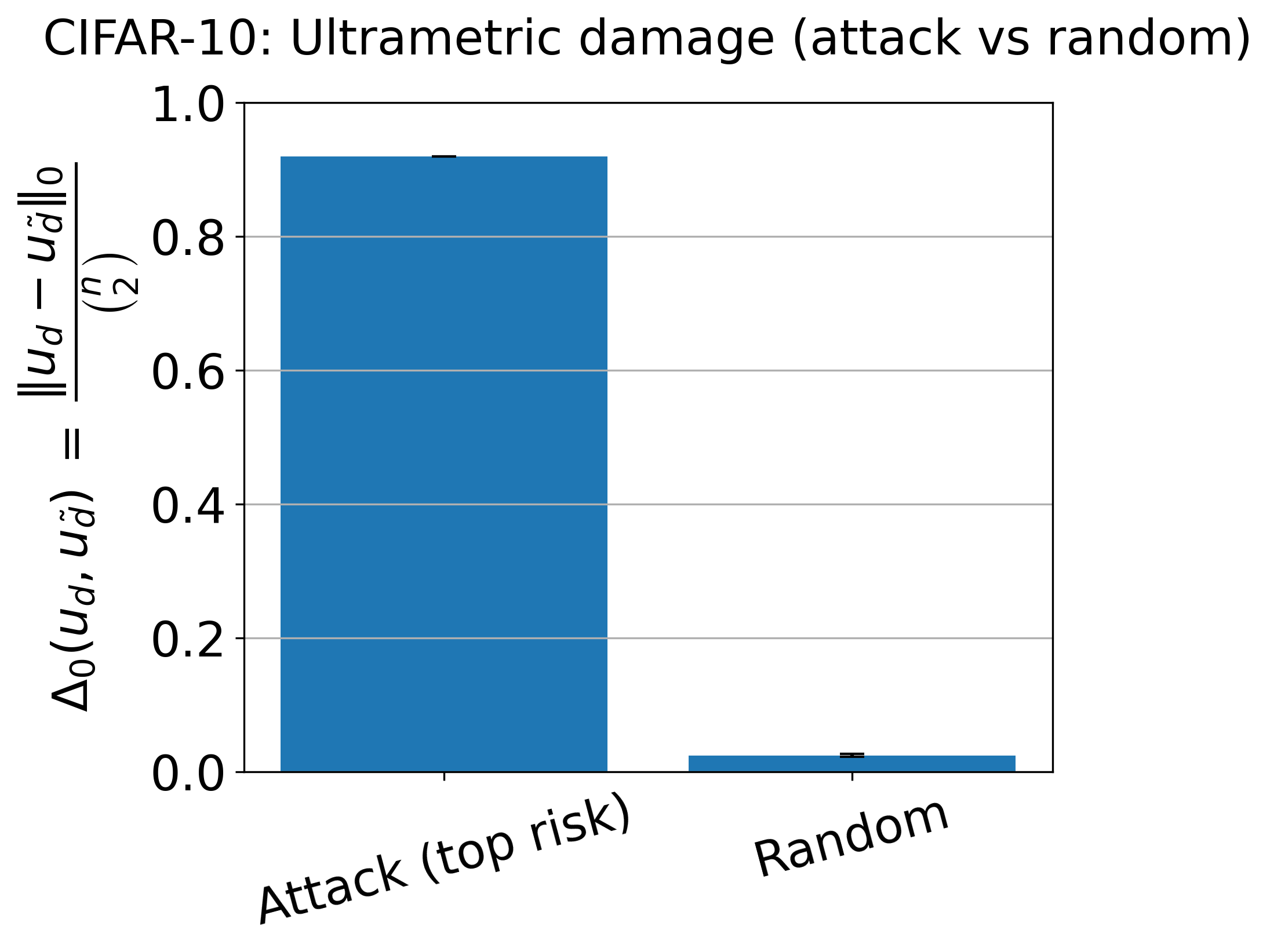}
        \label{fig:img1}
    \end{subfigure}
    \hfill
    \begin{subfigure}[b]{0.273\textwidth}
        \centering
        \includegraphics[width=\linewidth]{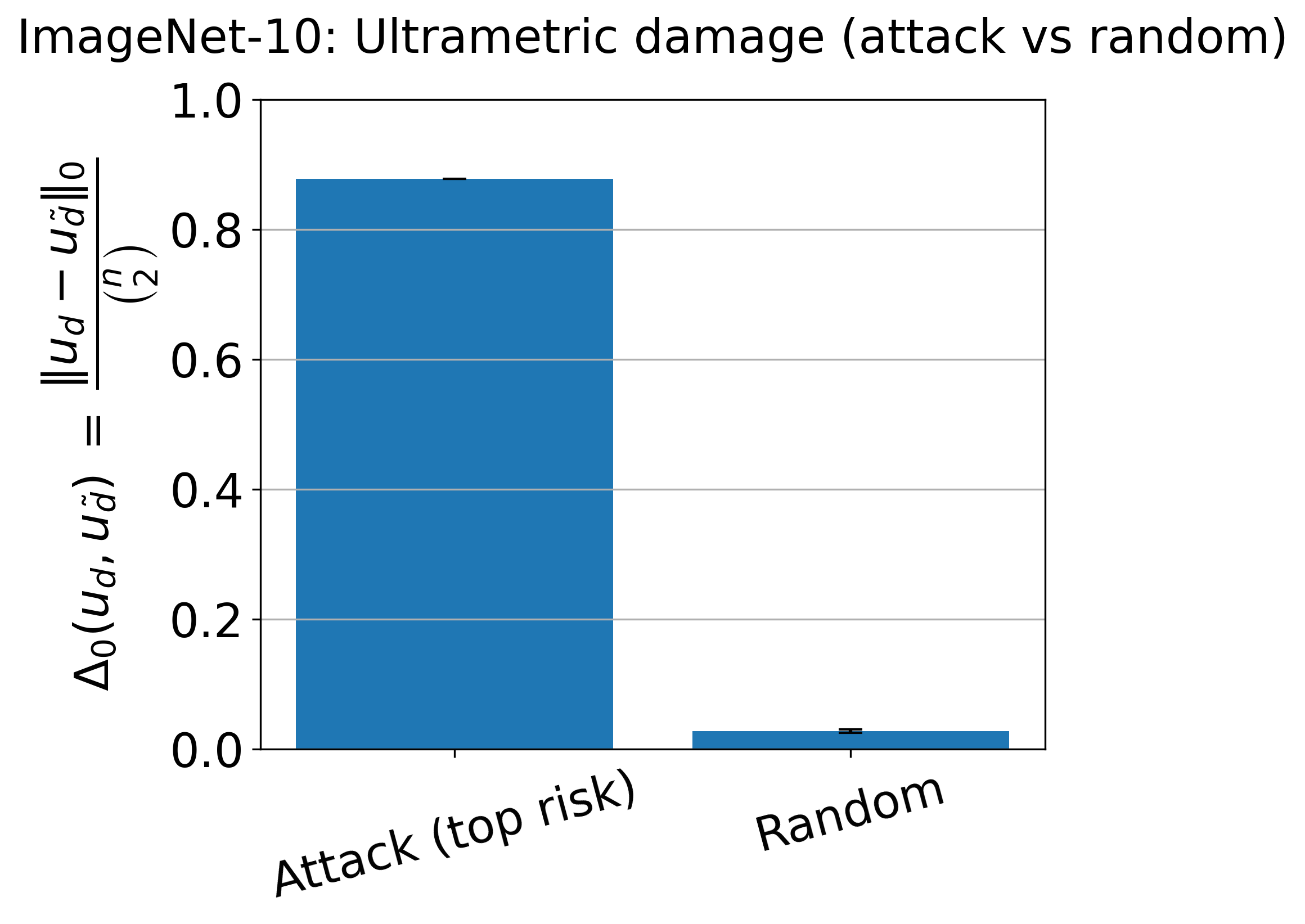}
        \label{fig:img2}
    \end{subfigure}
    \hfill
    \begin{subfigure}[b]{0.25\textwidth}
        \centering
        \includegraphics[width=\linewidth]{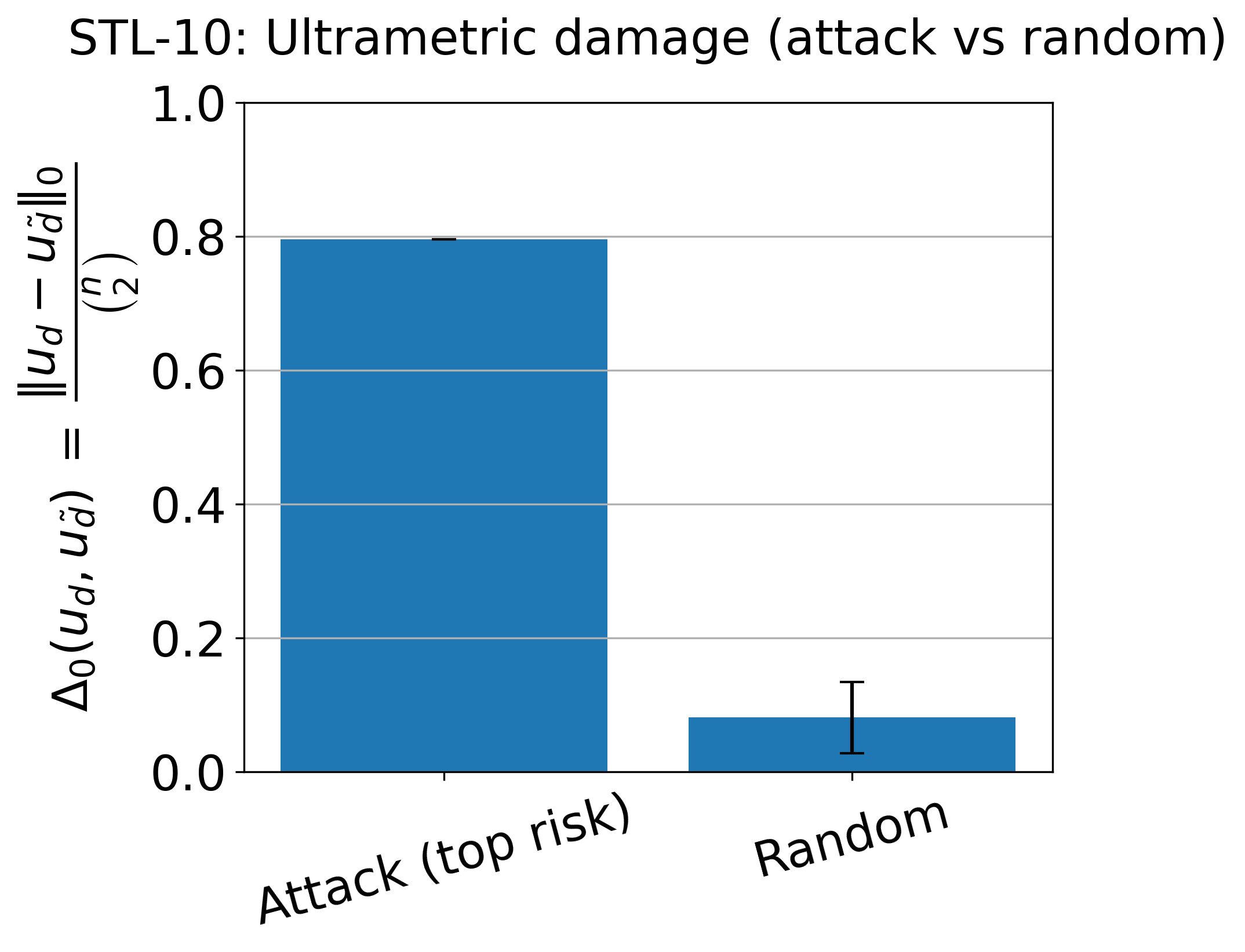}
        \label{fig:img3}
    \end{subfigure}
    
    \vspace{0.4em} 
    
    \begin{subfigure}[b]{0.26\textwidth}
        \centering
        \includegraphics[width=\linewidth]{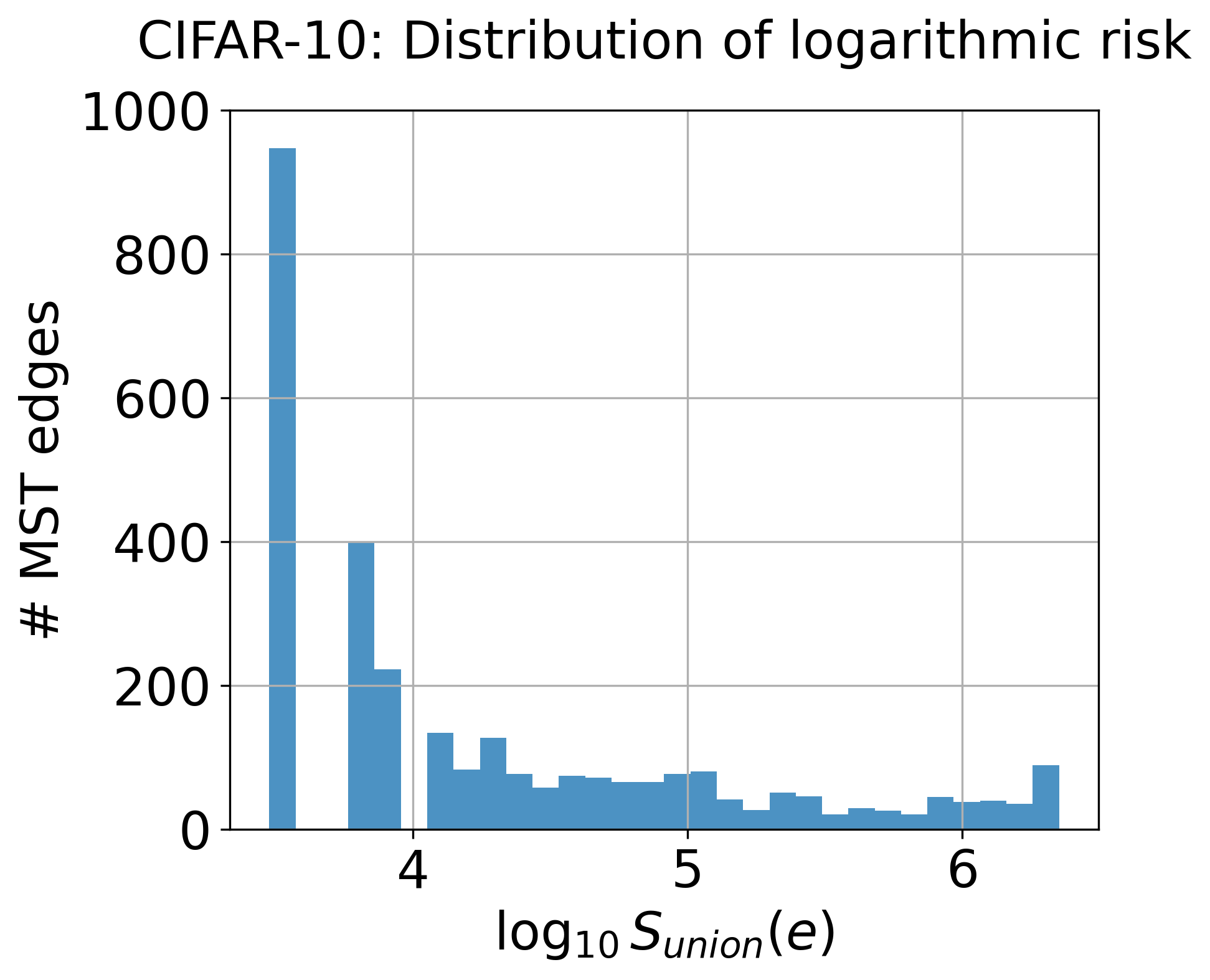}
        \label{fig:img4}
    \end{subfigure}
    \hfill
    \begin{subfigure}[b]{0.27\textwidth}
        \centering
        \includegraphics[width=\linewidth]{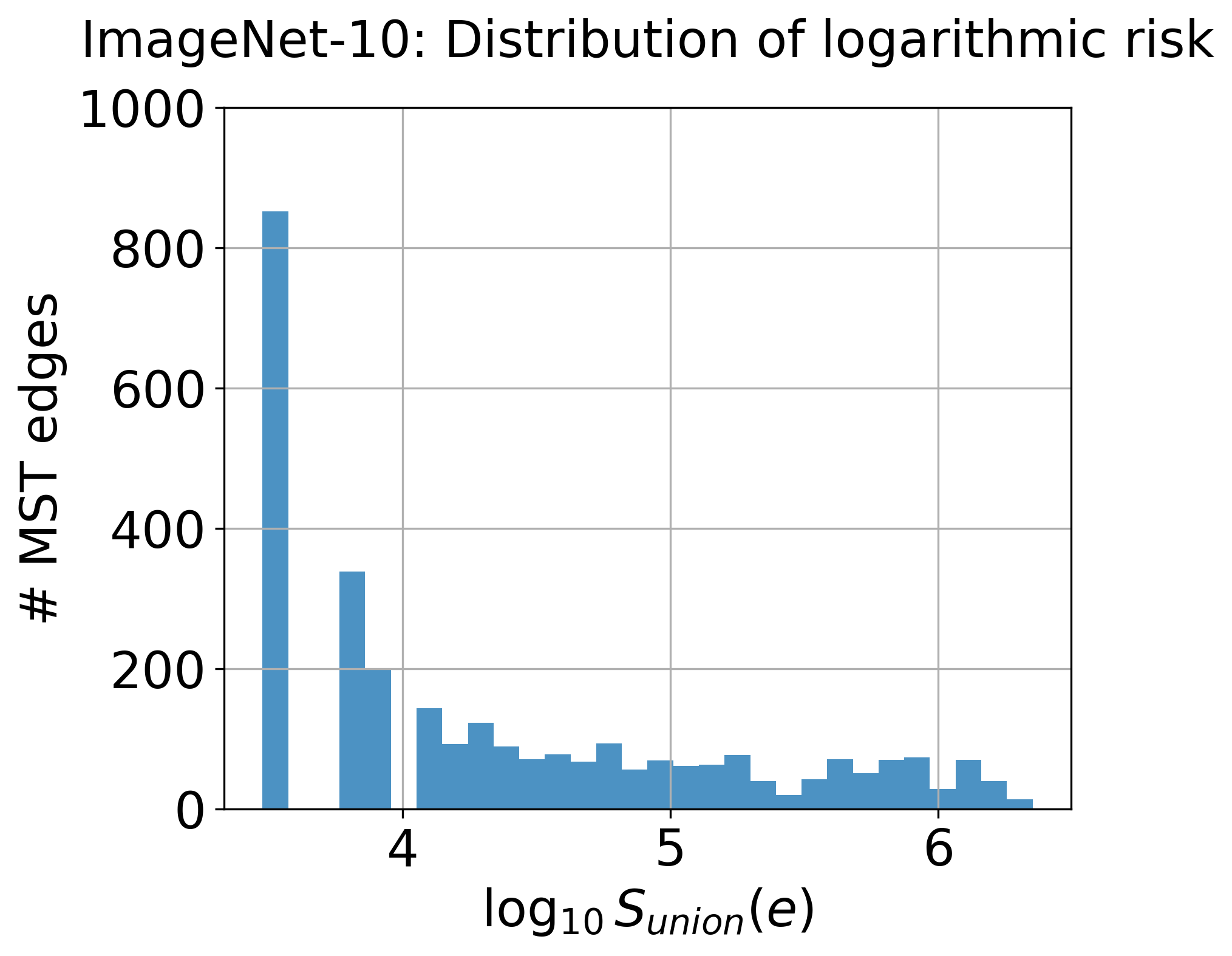}
        \label{fig:img5}
    \end{subfigure}
    \hfill
    \begin{subfigure}[b]{0.25\textwidth}
        \centering
        \includegraphics[width=\linewidth]{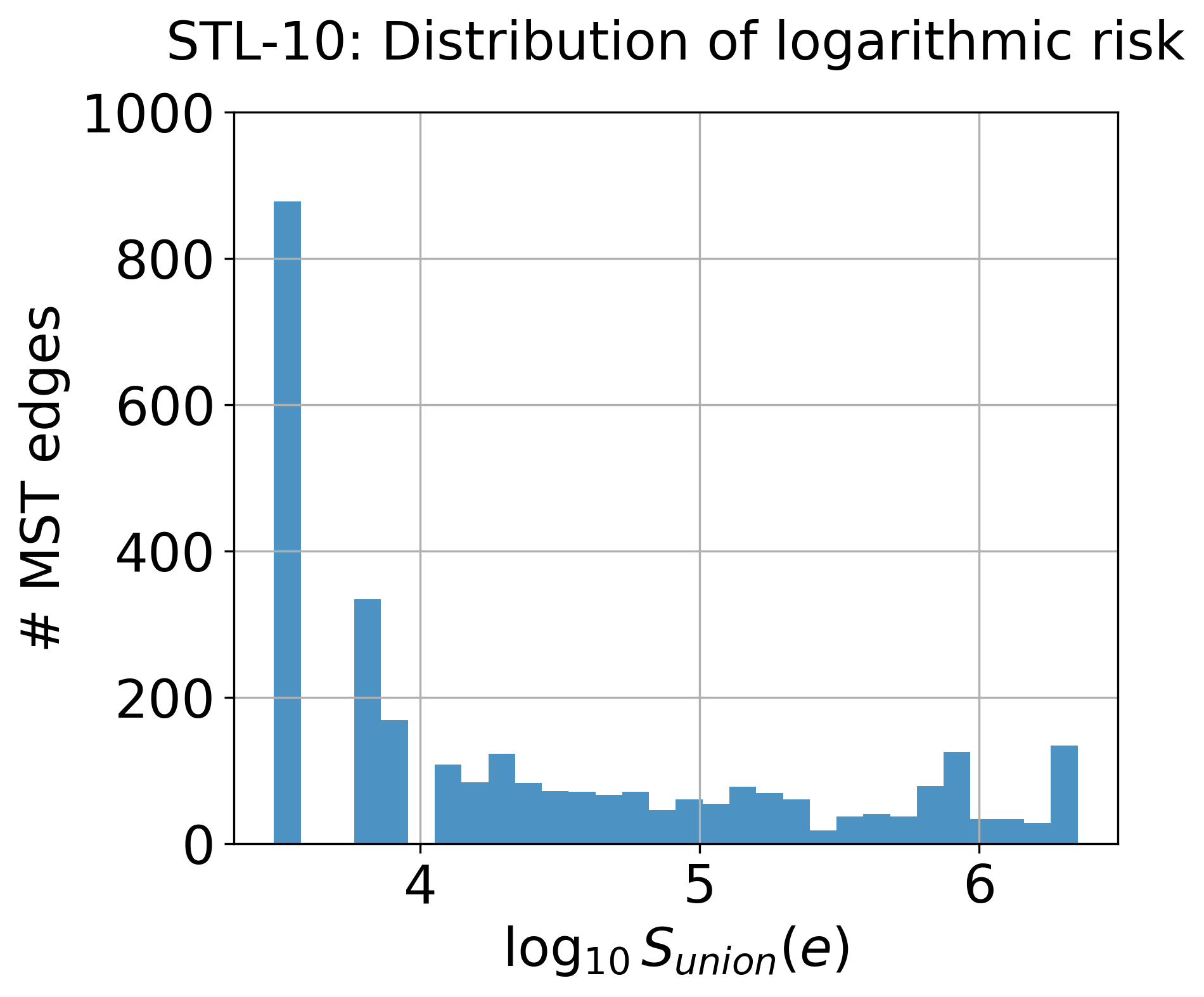}
        \label{fig:img6}
    \end{subfigure}
    
    \caption{\textbf{Top row:} normalized ultrametric damage $\Delta_0(u_d,u_{\tilde d})$ under targeted
(top-risk) versus random sparse perturbations for CIFAR-10, ImageNet-10, and STL-10.
\textbf{Bottom row:} histograms of the theorem-motivated structural score $\log_{10} S_{\mathrm{union}}(e)$
over MST edges for the same three datasets. It shows that structural risk is concentrated
in a small high-score tail, while the top row shows that targeting that tail causes substantially larger
realized Hamming damage than random edits with the same budget, consistent with
Theorems~\ref{thm:localization} and~\ref{thm:ell0-lip-exposed}.}

    \label{fig:exp1}
\end{figure}

To test this, we fix a sparse edit budget $m$ equal to $10\%$ of the MST edges and compare two strategies:
\begin{itemize}
    \item \textbf{Structural attack (top-risk):} choose the $m$ tree edges with largest $S_{\mathrm{union}}(e)$.
    \item \textbf{Random baseline:} choose $m$ tree edges uniformly at random from the same pool.
\end{itemize}
In both cases, we construct a perturbed metric $\tilde d$ by sharply inflating the selected tree-edge weights, recompute
the MST and the induced ultrametric $u_{\tilde d}$, and measure the normalized Hamming distortion
\begin{equation}
\Delta_0(u_d,u_{\tilde d})
:=
\frac{\|u_d-u_{\tilde d}\|_0}{\binom{n}{2}}.
\end{equation}
This is a stress test inspired by the tree-edge exposed-cut analysis. It is not intended as a literal validation of the
sharpness theorem edge-by-edge, but rather as an empirical probe of whether the score from Theorem~\ref{thm:ell0-lip-exposed}
tracks practical vulnerability.

\subsection{Results}
\label{subsec:embedding_results}

Figure~\ref{fig:exp1} (bottom row) shows that the distribution of $\log_{10} S_{\mathrm{union}}(e)$ is
strongly skewed for all three datasets. Most MST edges lie in a low-score bulk, corresponding to leaf-like or weakly
load-bearing cuts, while a small number of bridge-like edges form a thin high-score tail. This is precisely the type of
heterogeneity suggested by Theorems~\ref{thm:localization} and~\ref{thm:ell0-lip-exposed}: sparse perturbations
should not propagate uniformly through the hierarchy, but instead concentrate around a small set of structurally important cuts.

Figure~\ref{fig:exp1} (top row) shows that this structural heterogeneity is not merely combinatorial.
Across CIFAR-10, ImageNet-10, and STL-10, the top-risk attack consistently induces much larger normalized Hamming damage
than the random baseline at the same edit budget. In particular, even in budgets where random edits leave
$\Delta_0(u_d,u_{\tilde d})$ close to zero, editing the highest-score edges already flips a visible fraction of ultrametric
entries. The practical conclusion is that the theorem-motivated score $S_{\mathrm{union}}(e)$ identifies a small set of
load-bearing edges whose perturbation produces disproportionate global impact.

Overall, this case study should be read as a diagnostic interpretation of the theory. Theorem~\ref{thm:localization}
explains \emph{where} propagation can occur, while Theorem~\ref{thm:ell0-lip-exposed} supplies the natural
per-edge structural score. Figure~\ref{fig:exp1} shows that, on real deep-embedding graphs, these
quantities yield a meaningful vulnerability map: most MST edges are structurally benign, but a small high-score tail
captures the edges whose sparse perturbation can ripple widely through the induced hierarchy. Taken together, these experiments provide an empirical counterpart to the theoretical picture: the structural
score $S_{\mathrm{union}}(e)$ identifies a small set of load-bearing MST edges whose perturbation has
disproportionate global impact, while most edges are comparatively benign.


\section{Empirical Case Study - II: MST based Superpixel Segmentation}
\label{app:cameraman_case_study}

This section provides a complementary low-dimensional illustration of the structural score
$S_{\mathrm{union}}(e)$ in a downstream image-segmentation setting. Whereas the experiment in
Section~\ref{sec:experiments} directly measures ultrametric Hamming distortion on representation graphs,
the present experiment studies whether the same score can identify \emph{safe} versus \emph{fragile}
tree edges in a superpixel hierarchy.

\paragraph{Motivation from the theory.}
Theorems~\ref{thm:localization} and~\ref{thm:ell0-lip-exposed} say that sparse edits propagate
through the ultrametric only via edited or newly exposed MST cuts, and that for tree-edge edits the natural structural
quantity is again
\begin{equation}
S_{\mathrm{union}}(e)=|A_e|\,|B_e|.
\end{equation}
In the segmentation setting, this suggests the following heuristic question: if one wants to perturb edges while causing
as little downstream damage as possible, is it better to choose edges with small structural score than edges that merely
have small local weight? The experiment below is designed to compare exactly those two notions of ``safe'' edits.

\subsection{Setup}
\label{appsubsec:cameraman_setup}

\begin{figure}[htbp]
    \centering
    \begin{minipage}{0.78\textwidth} We use the classical Cameraman image, shown in Figure~\ref{fig:cameraman_mst_}. The grayscale image is converted to
floating-point values in $[0,1]$ and oversegmented into $K$ SLIC superpixels. Each superpixel becomes a node in a region
adjacency graph; neighboring superpixels are connected, and each edge is assigned a feature distance based on mean
intensity and normalized centroid coordinates.

We then compute the tie-broken MST $T$ and the induced ultrametric $u_d$. A reference segmentation is obtained by cutting
the $K-1$ heaviest MST edges and taking the resulting connected components; this is the standard MST view of single linkage.
The resulting reference partitions for $K=7,8,9,10$ are shown in the top row of
Figure~\ref{fig:four-horizontal}.
\end{minipage}
    \hfill
    \begin{minipage}{0.2\textwidth}
        \centering
        \includegraphics[width=\textwidth]{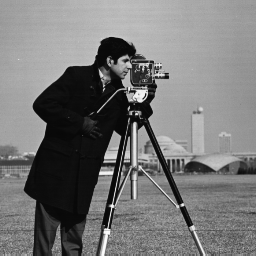}
        \caption{The classic Cameraman image.}
        \label{fig:cameraman_mst_}
    \end{minipage}
\end{figure}

For each tree edge $e=\{a,b\}$, we compute three quantities:
\begin{itemize}
    \item the structural score
    \begin{equation}
    S_{\mathrm{union}}(e)=|A_e|\,|B_e|;
    \end{equation}
    \item the raw edge weight $w(e)$, representing local boundary contrast;
    \item the worst-case segmentation impact
    \begin{equation}
    \mathrm{impact}(e)
    :=
    \max\Bigl\{
    1-\mathrm{ARI}\bigl(\mathrm{base},\mathrm{decrease}(e)\bigr),
    \,
    1-\mathrm{ARI}\bigl(\mathrm{base},\mathrm{increase}(e)\bigr)
    \Bigr\},
    \end{equation}
    where $\mathrm{decrease}(e)$ and $\mathrm{increase}(e)$ are the segmentations obtained after multiplying
    the weight of $e$ by $10^{-2}$ or $10^2$, respectively.
\end{itemize}

\subsection{Safe-edit curves}
\label{appsubsec:cameraman_curves}

For each MST edge $e$, we therefore have a structural score $S_{\mathrm{union}}(e)$, a local score $w(e)$, and an empirical
damage score $\mathrm{impact}(e)$. For each ranking rule ($S_{\mathrm{union}}$ or $w$), we sort MST edges in ascending order,
so that smaller score means ``safer.'' For a prefix size $k$ (shown on the x-axis as a fraction of all MST edges), we take
the bottom-$k$ edges under that ranking as the safe-to-edit set and report the maximum $\mathrm{impact}(e)$ inside that set.
This produces the two safe-edit curves shown in the bottom row of Figure~\ref{fig:four-horizontal}.

The comparison is intentionally aligned with the theory. The score $w(e)$ is a purely local boundary heuristic, whereas
$S_{\mathrm{union}}(e)$ is derived from the exposed-cut geometry of Theorem~\ref{thm:ell0-lip-exposed}.
Thus, if the theory is capturing a meaningful notion of structural vulnerability, then ranking edges by
$S_{\mathrm{union}}(e)$ should produce safer edit sets than ranking them by raw weight alone.

\begin{figure}[t]

    \centering
    
    \begin{subfigure}[b]{0.18\textwidth}
        \centering
        \includegraphics[width=\linewidth]{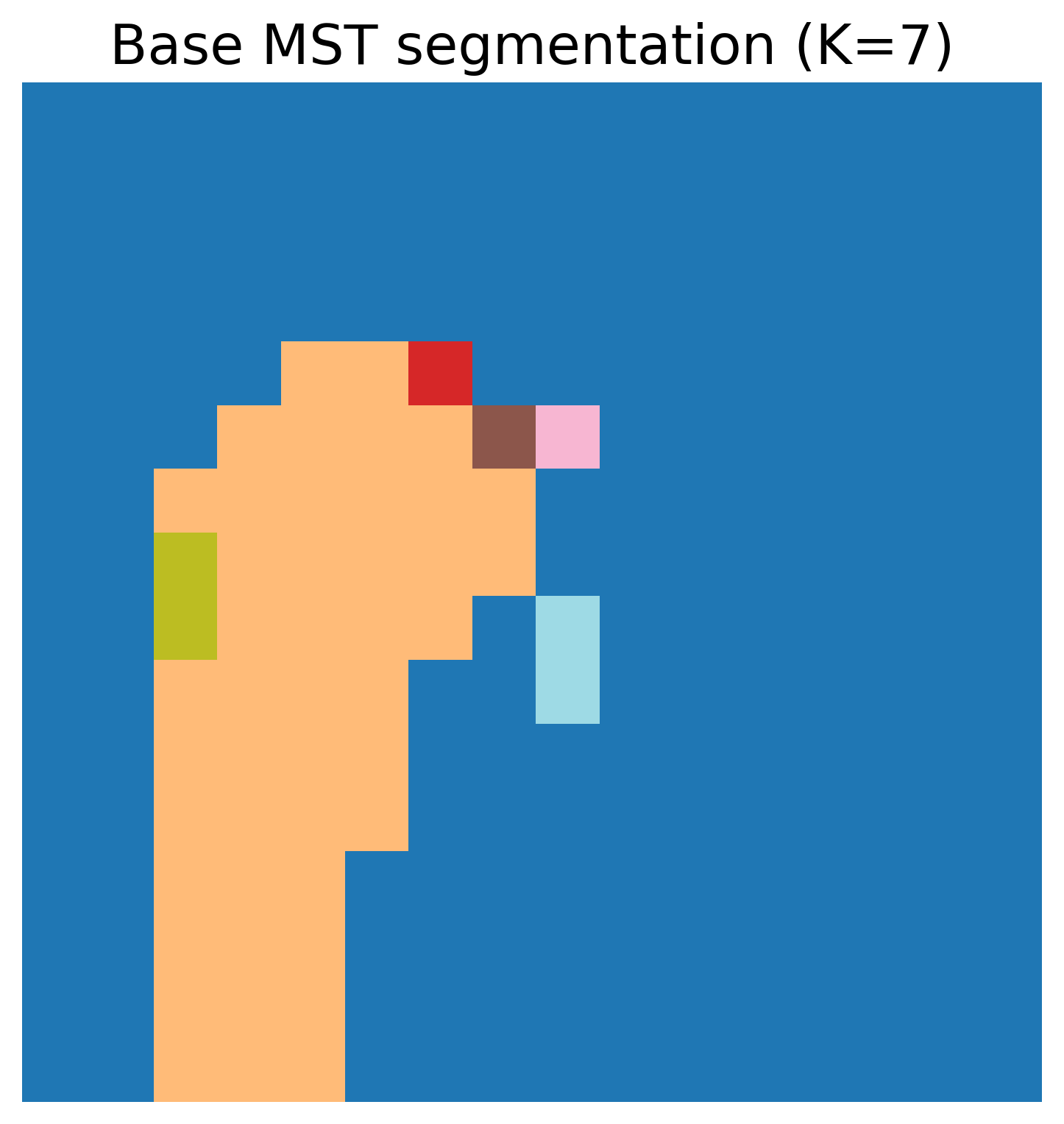}
    \end{subfigure}
    \hfill
    \begin{subfigure}[b]{0.18\textwidth}
        \centering
        \includegraphics[width=\linewidth]{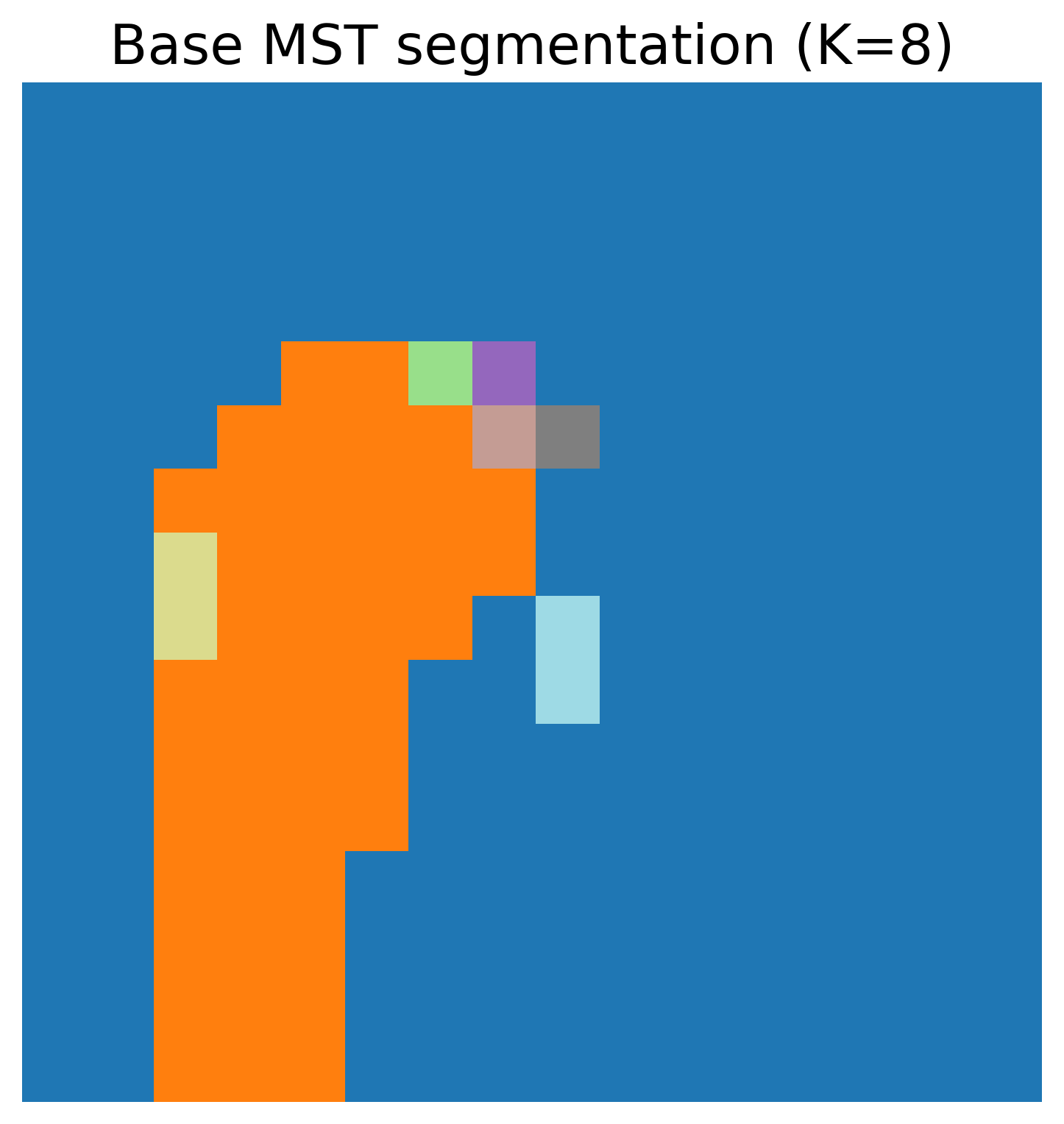}
    \end{subfigure}
    \hfill
    \begin{subfigure}[b]{0.18\textwidth}
        \centering
        \includegraphics[width=\linewidth]{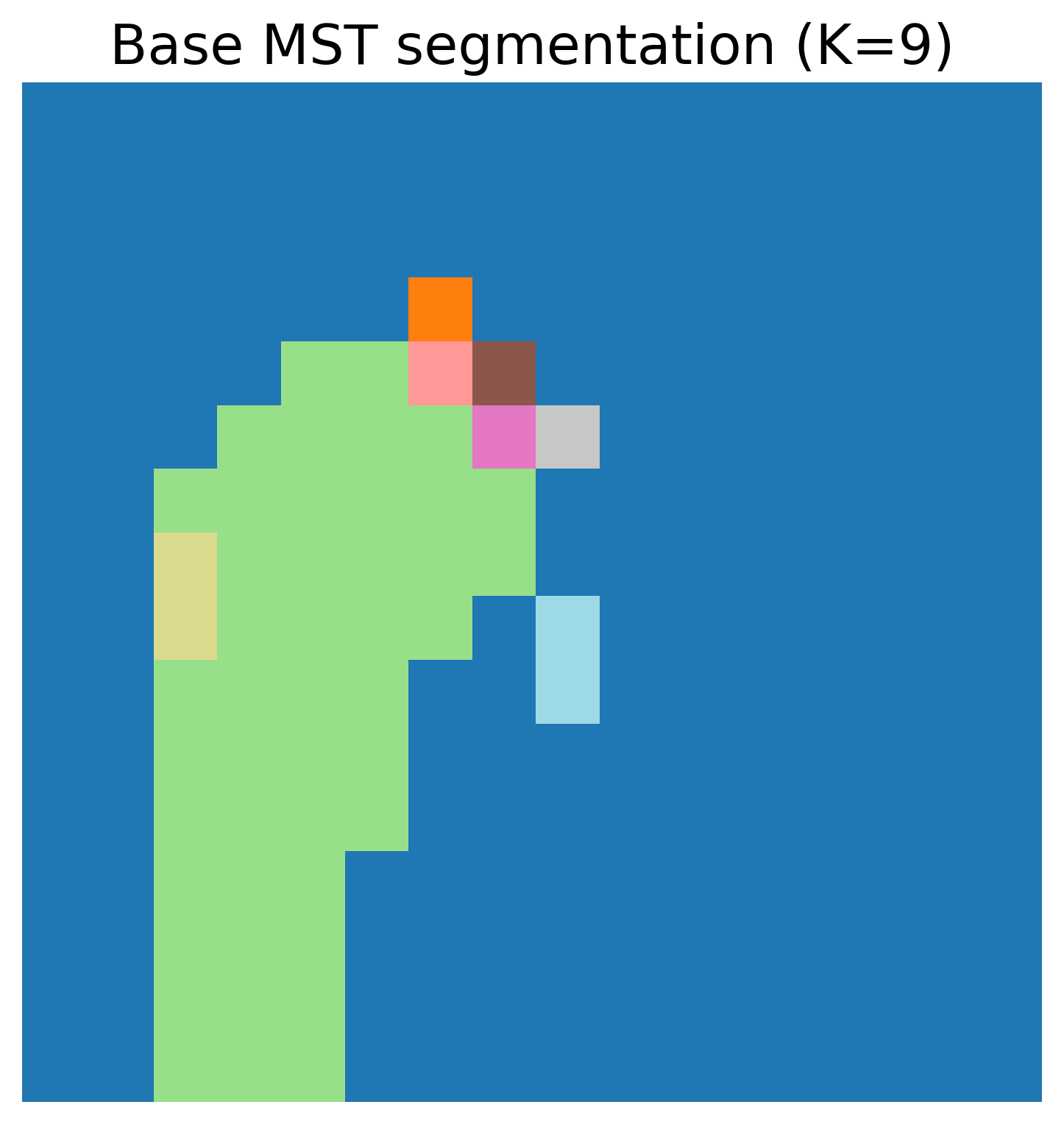}
    \end{subfigure}
    \hfill
    \begin{subfigure}[b]{0.18\textwidth}
        \centering
        \includegraphics[width=\linewidth]{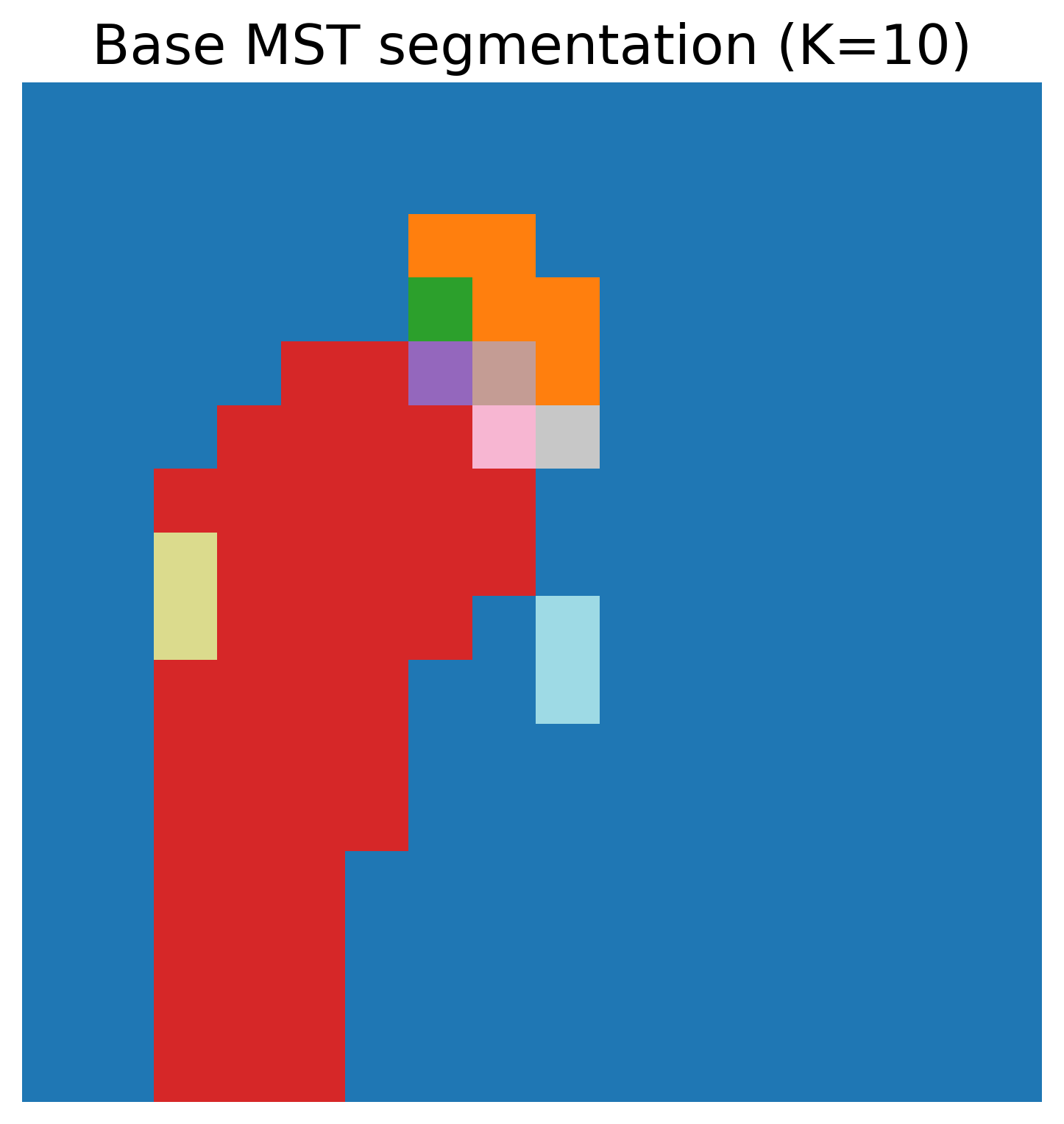}
    \end{subfigure}

    \vspace{1em} 

    \begin{subfigure}[b]{0.23\textwidth}
        \centering
        \includegraphics[width=\linewidth]{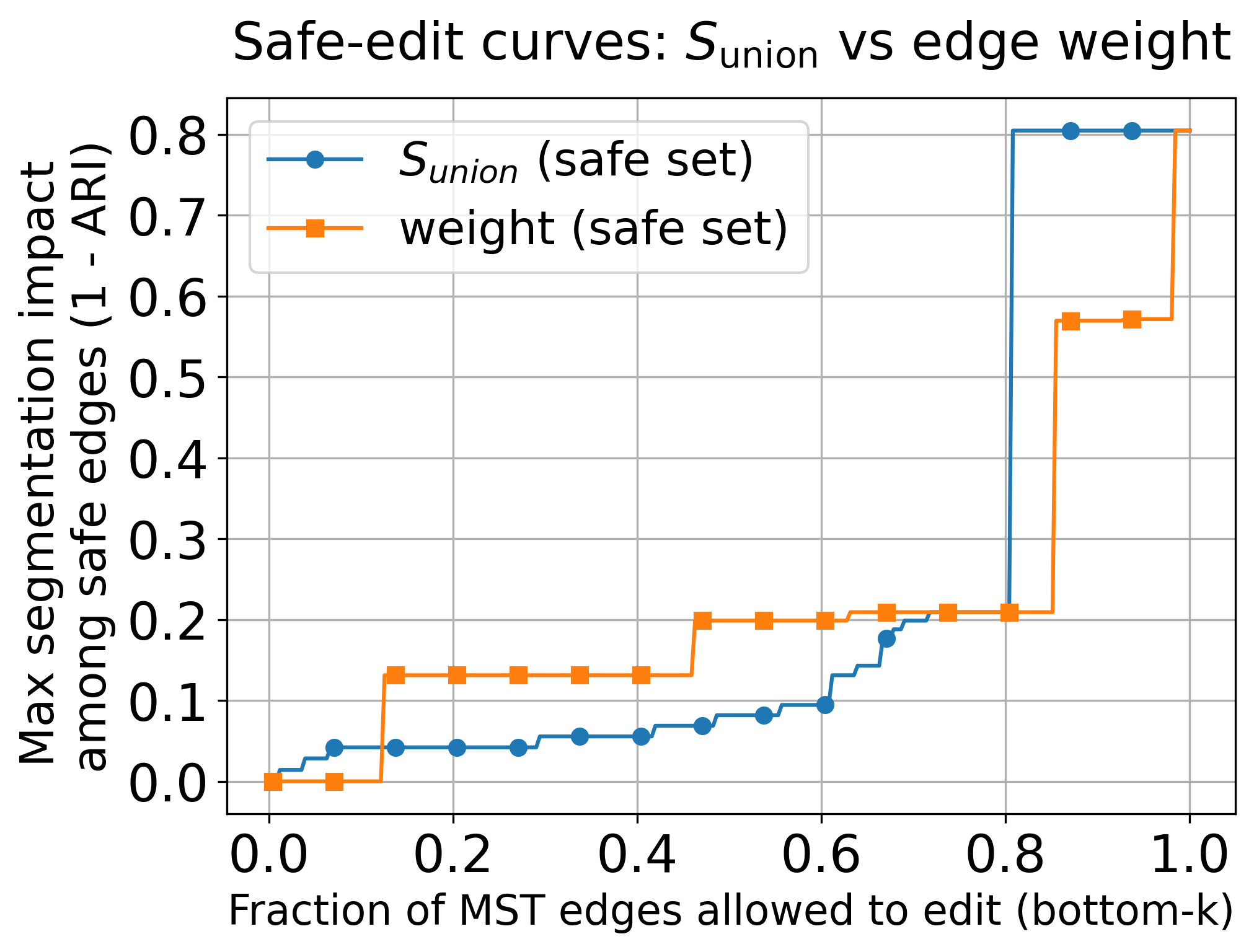}
        \caption{$K=7$ Curves}
        \label{fig:four-a}
    \end{subfigure}
    \hfill
    \begin{subfigure}[b]{0.23\textwidth}
        \centering
        \includegraphics[width=\linewidth]{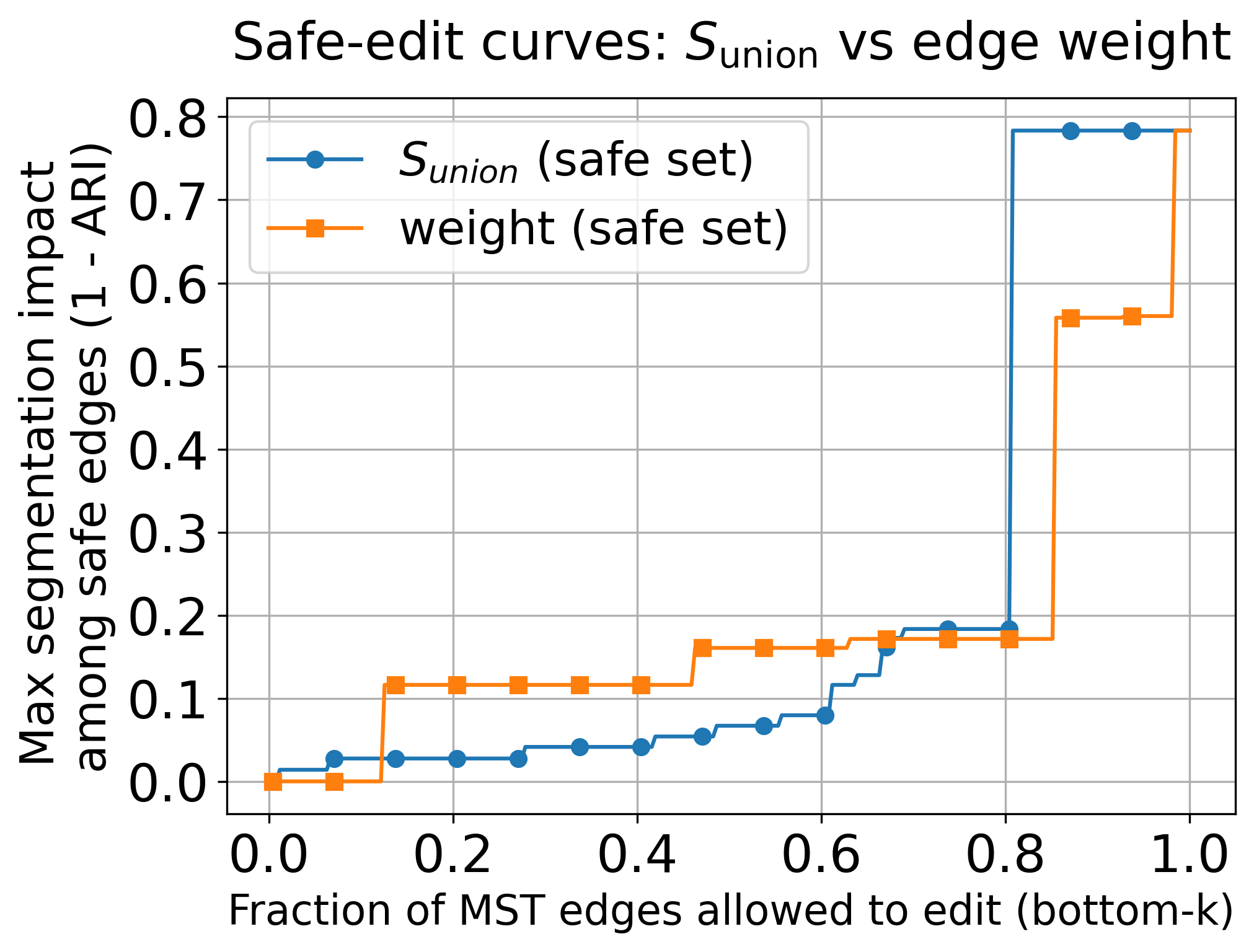}
        \caption{$K=8$ Curves}
        \label{fig:four-b}
    \end{subfigure}
    \hfill
    \begin{subfigure}[b]{0.23\textwidth}
        \centering
        \includegraphics[width=\linewidth]{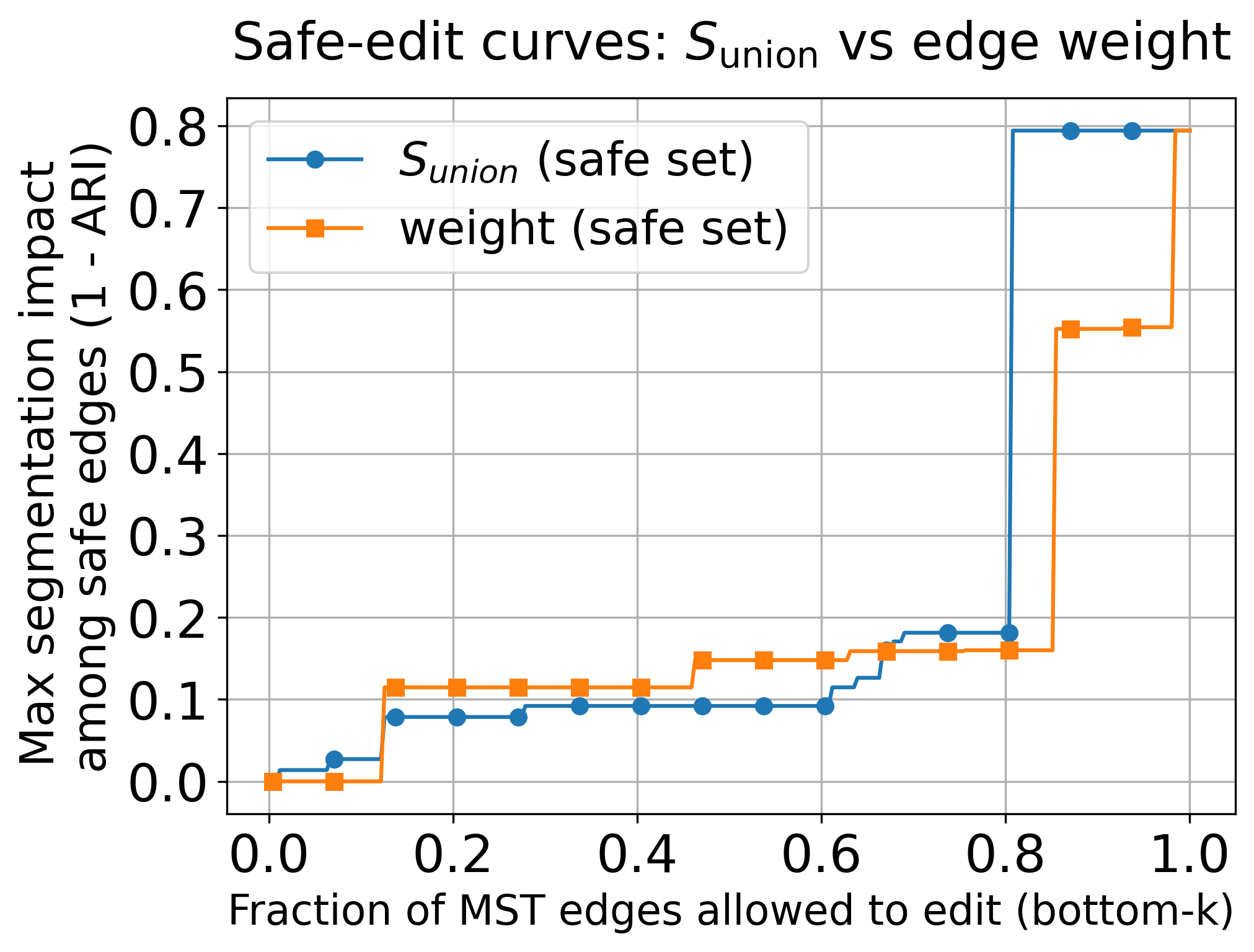}
        \caption{$K=9$ Curves}
        \label{fig:four-c}
    \end{subfigure}
    \hfill
    \begin{subfigure}[b]{0.23\textwidth}
        \centering
        \includegraphics[width=\linewidth]{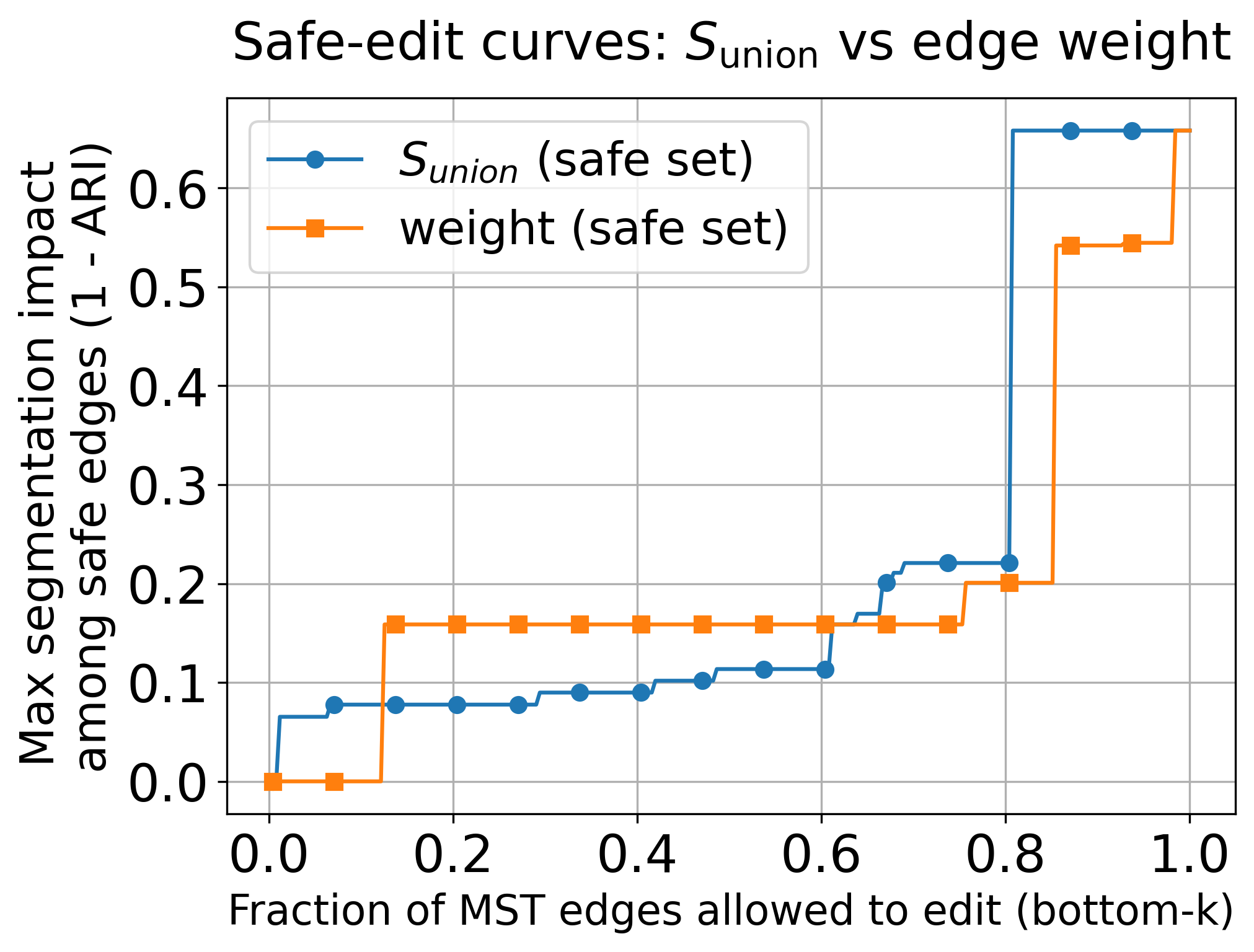}
        \caption{$K=10$ Curves}
        \label{fig:four-d}
    \end{subfigure}
    
    \caption{Safe-edit curves (bottom row) and corresponding reference segmentations (top row) for Cameraman image across $K=7$--$10$. For each $K$, we report worst-case $1-\mathrm{ARI}$ when edits are restricted to bottom-$k$ MST edges ranked by $S_{\mathrm{union}}(e)$ or by $w(e)$.}
    \label{fig:four-horizontal}
\end{figure}

\subsection{Results}
\label{appsubsec:cameraman_results}

Figure~\ref{fig:four-horizontal} (bottom row) shows that, across $K=7,8,9,10$, the safe-edit curve based on
$S_{\mathrm{union}}(e)$ consistently lies below the curve based on $w(e)$. In other words, when the safe-edit budget is fixed,
the worst damage incurred by editing low-$S_{\mathrm{union}}$ edges is smaller than the worst damage incurred by editing edges
selected solely by local boundary contrast. This indicates that the structural score derived from the theory yields a more
reliable notion of ``safe directions to perturb'' than a purely local heuristic.

\section{\textcolor{brown}{Empirical Case Study - III: Active MST-Edge Verification for Semi-Supervised Clustering}}
\label{sec:active-mst-verification}

\paragraph{Motivation.}
Our perturbation analysis identifies a simple structural quantity for a tree edge,
\[
S_{\mathrm{union}}(e)=|A_e|\,|B_e|,
\]
where $A_e$ and $B_e$ are the two connected components obtained by deleting $e$ from the MST. This score measures how many cross-component pairs are exposed by a cut and therefore how many pairwise ultrametric relations can potentially change when that edge is corrected. This suggests a concrete human-in-the-loop use case for hierarchical clustering: if a practitioner can verify only a small number of MST edges using a human annotator, metadata source, or trusted secondary model, which edges should be checked first? We study this question as a budgeted \emph{active MST-edge verification} problem and use it to evaluate whether the theory-derived score $S_{\mathrm{union}}(e)$ yields a strong query policy in practice.

\subsection{Task and scope of the comparison}

We consider a fixed-backbone active verification problem. Starting from a common feature representation, we build a single sparse graph, compute a single minimum spanning tree, and evaluate several edge-ranking rules on that same tree. Thus, the experiment is \emph{not} a comparison of different end-to-end clustering pipelines or different graph-construction schemes. Instead, it asks a narrower and cleaner question:

\begin{quote}
\emph{Given a fixed sparse single-linkage backbone and a limited verification budget, which MST-edge priority rule most effectively improves the resulting clustering?}
\end{quote}

This shared-backbone design is deliberate. If each baseline were allowed to build its own graph and its own tree, then differences in performance would conflate two effects: the quality of the backbone itself and the quality of the query rule imposed on that backbone. By holding the backbone fixed, we isolate the practical question relevant to a user who already has a hierarchical clustering and can only afford to verify a small number of edges.

\subsection{Datasets}

The benchmark is run on the following full datasets:
\begin{itemize}
    \item \textbf{MNIST}~\citep{lecun2002gradient},
    \item \textbf{USPS}~\citep{GTDLBenchICDCS},
    \item \textbf{HAR}~\citep{anguita2013public},
    \item \textbf{Olivetti Faces}  (Credit to Cambridge University),
    \item \textbf{OptDigits}~\citep{kaynak1995methods} .
\end{itemize}

These datasets span handwritten digits, human activity recognition, and face recognition, and therefore provide a heterogeneous test bed for active verification on MST-induced hierarchies. The goal of this suite is not to optimize performance for any one modality, but to test whether the query rules behave consistently across datasets with different geometry, class counts, and sample sizes.

\subsection{Common preprocessing pipeline}

All methods share the same preprocessing pipeline. For each dataset, the raw features are:
\begin{enumerate}
    \item standardized,
    \item projected to $32$ dimensions by PCA whenever the ambient dimension exceeds $32$,
    \item and then $\ell_2$-normalized.
\end{enumerate}
This produces a common normalized feature space in which the sparse neighborhood graph and the MST are constructed.

Using a fixed preprocessing pipeline is important here. The aim is not to perform dataset-specific tuning for each baseline, but to compare edge-ranking policies under the same representation and the same hierarchical backbone.

\subsection{Sparse graph construction and common MST backbone}

Let the preprocessed dataset be
\[
\mathcal{X}=\{x_1,\dots,x_n\}, \qquad y_i\in\{1,\dots,K\}.
\]
The labels are used only to simulate the oracle and to evaluate the final clustering; none of the query rules uses label information.

To scale the benchmark to full datasets, we do not construct the complete weighted graph. Instead, for each point we compute its $k$ nearest neighbors in feature space with
\[
k=100,
\]
and retain only those sparse neighborhood edges. Let
\[
d^{\mathrm{raw}}_{ij}=\|x_i-x_j\|_2
\]
denote the raw Euclidean distance on the retained $k$-NN graph.

To reduce sensitivity to local density variation, we use a locally scaled metric. Let $\sigma_i$ be the distance from point $i$ to its $k_0$-th nearest neighbor in the raw neighborhood graph, with
\[
k_0=7.
\]
The locally scaled edge weight is then
\[
d^{\mathrm{ls}}_{ij}
=
\frac{d^{\mathrm{raw}}_{ij}}{\sqrt{\sigma_i\sigma_j}}.
\]
We compute the MST of this sparse graph under the locally scaled weights. The resulting tree is the common combinatorial backbone for all query policies.

This choice should be interpreted carefully. For example, the raw-weight baseline does \emph{not} build a separate raw-distance MST. Rather, it ranks the edges of the common locally scaled MST by their raw Euclidean lengths. Likewise, the structural and feature-aware baselines are all evaluated on the same fixed tree. This keeps the candidate edge set identical across methods.

\subsection{Unverified baseline clustering}

Before any active verification, we form an \emph{unverified baseline clustering} by cutting the $(K-1)$ heaviest edges of the common MST under the backbone ordering. This produces exactly $K$ connected components and serves as the reference clustering against which all budgeted query policies are compared.

Formally, if the common MST edges are ordered by decreasing backbone weight as
\[
e_{(1)},e_{(2)},\dots,e_{(n-1)},
\]
then the unverified baseline partition is obtained by deleting
\[
\{e_{(1)},\dots,e_{(K-1)}\}.
\]

\subsection{Active verification protocol}

A query policy provides an ordering of MST edges. Given a query budget $b$, we inspect the first $b$ edges in that policy's ranking. For a queried edge
\[
e=\{u,v\}\in E_T,
\]
the oracle returns
\[
\text{$e$ is wrong} \iff y_u\neq y_v.
\]
Let
\[
\mathcal{C}_b
=
\{e\in E_T:\text{$e$ was queried among the first $b$ edges and declared wrong}\}
\]
denote the set of queried edges that are found to be incorrect.

The post-verification clustering is obtained in two stages:
\begin{enumerate}
    \item every edge in $\mathcal{C}_b$ is forced to be cut;
    \item if fewer than $(K-1)$ cuts have been made, we cut additional heaviest edges from the same common backbone ordering until the forest has exactly $K$ connected components.
\end{enumerate}

Thus, all methods are compared under the same supervision budget, the same oracle, the same target number of clusters, and the same final backbone-completion rule. This protocol should therefore be viewed as a comparison of \emph{which queried corrections are most useful within a shared sparse single-linkage correction pipeline}, not as a comparison of fully independent end-to-end algorithms.

This distinction matters. Because the final partition is completed using the common backbone order, the benchmark isolates the value of the \emph{verified cuts selected by each policy} while keeping the downstream completion mechanism fixed. That makes the protocol practically relevant for settings where the backbone hierarchy is treated as given and only a limited number of corrections can be injected into it.

\paragraph{Budgets.}
Since an MST on $n$ vertices has
\[
m=n-1
\]
edges, we use the budget set
\[
b \in \left\{
0,\;
\left\lceil 0.005m \right\rceil,\;
\left\lceil 0.01m \right\rceil,\;
\left\lceil 0.02m \right\rceil,\;
\left\lceil 0.05m \right\rceil,\;
\left\lceil 0.10m \right\rceil
\right\}.
\]
Equivalently, we inspect approximately $0\%$, $0.5\%$, $1\%$, $2\%$, $5\%$, and $10\%$ of the tree edges.

\paragraph{Random baseline.}
For the random policy, we select $b$ MST edges uniformly at random without replacement. Because this baseline is stochastic, we report the mean and standard deviation over $10$ independent trials at each budget.

\subsection{Our method: the structural score $S_{\mathrm{union}}(e)$}

For an MST edge $e\in E_T$, deleting $e$ partitions the tree into two connected components,
\[
A_e \sqcup B_e = V.
\]
Our theory-derived score is
\[
S_{\mathrm{union}}(e)=|A_e|\,|B_e|.
\]

\paragraph{Interpretation.}
This quantity counts the number of cross-component pairs exposed by cutting the edge. Every pair $(i,j)$ with $i\in A_e$ and $j\in B_e$ has its unique tree path crossing $e$, so any structural change at $e$ can potentially affect the ultrametric relation of all such pairs. Large values of $S_{\mathrm{union}}(e)$ therefore identify \emph{load-bearing} edges: edges whose correction can affect a large portion of the induced hierarchy.

\paragraph{Connection to the theory.}
This is precisely the viewpoint suggested by the Hamming-stability analysis. Sparse perturbations do not propagate uniformly through the tree; their effect is mediated by the exposed cut-pair set. The quantity $|A_e||B_e|$ is exactly the size of that exposed set for a single edge. The resulting score is therefore not an ad hoc heuristic, but the operational form of the structural quantity singled out by the perturbation analysis.

\paragraph{Query rule.}
Our method ranks MST edges by decreasing $S_{\mathrm{union}}(e)$ and verifies them in that order.

\subsection{Benchmark baselines}

We compare against the following baselines, all evaluated on the same common MST backbone.

\subsubsection{Raw-weight baseline}

For an MST edge
\[
e=\{u,v\},
\]
the raw-weight score~\citep{zahn2006graph} is
\[
w_{\mathrm{raw}}(e)=d^{\mathrm{raw}}_{uv}.
\]
This baseline ranks edges by decreasing raw Euclidean length. It is the most direct local heuristic: long edges are treated as suspicious bridges. However, it uses only endpoint geometry and ignores the structural role of the edge inside the tree.

\subsubsection{Scaled-weight baseline}

The scaled-weight~\citep{zelnik2004self} baseline ranks edges by their backbone weight,
\[
w_{\mathrm{ls}}(e)=d^{\mathrm{ls}}_{uv}.
\]
Relative to raw edge length, this score discounts purely density-driven effects and therefore provides a stronger local geometric baseline on the shared sparse tree.

\subsubsection{Centroid-gap baseline}

For an edge $e$, let $A_e$ and $B_e$ be the two connected components obtained by deleting $e$. Define their centroids by
\[
\mu_{A_e}
=
\frac{1}{|A_e|}\sum_{i\in A_e}x_i,
\qquad
\mu_{B_e}
=
\frac{1}{|B_e|}\sum_{i\in B_e}x_i.
\]
The centroid-gap score~\citep{mcqueen1967some} is
\[
G_{\mathrm{cg}}(e)
=
\|\mu_{A_e}-\mu_{B_e}\|_2^2.
\]
This baseline ignores the local edge weight and instead asks whether the two sides of the cut are well separated in feature space.

\subsubsection{Ward-bridge baseline}

The Ward-bridge score~\citep{ward1963hierarchical} is
\[
G_{\mathrm{Ward}}(e)
=
\frac{|A_e||B_e|}{|A_e|+|B_e|}
\,
\|\mu_{A_e}-\mu_{B_e}\|_2^2.
\]
This is the usual Ward merge penalty written as a split score on the tree. It combines centroid separation with component size and therefore acts as a stronger, size-aware feature baseline.

\subsubsection{Fisher-bridge baseline}

Let
\[
r(A_e)
=
\frac{1}{|A_e|}\sum_{i\in A_e}\|x_i-\mu_{A_e}\|_2^2,
\qquad
r(B_e)
=
\frac{1}{|B_e|}\sum_{i\in B_e}\|x_i-\mu_{B_e}\|_2^2
\]
denote the mean squared within-component radii. The Fisher-bridge score~\citep{fisher1936use} is
\[
G_{\mathrm{Fisher}}(e)
=
\frac{\|\mu_{A_e}-\mu_{B_e}\|_2^2}
{r(A_e)+r(B_e)+\varepsilon},
\]
where $\varepsilon>0$ is a small numerical constant for numerical stability. This baseline favors cuts whose two sides are well separated relative to their internal spread.

\subsubsection{Random baseline}

Finally, we include a random-query baseline that selects edges uniformly at random. This provides a lower-bound reference and checks that any observed gains are due to meaningful prioritization rather than merely the presence of supervision.


\subsection{Evaluation metrics}

After each budgeted verification step, we compare the resulting clustering against the ground-truth labels using four metrics.

\paragraph{Cluster purity.}
If the final clustering is
\[
\mathcal{P}=\{C_1,\dots,C_M\},
\]
its purity is
\[
\mathrm{Purity}(\mathcal{P},y)
=
\frac{1}{n}
\sum_{m=1}^M
\max_{c\in\{1,\dots,K\}}
\bigl|\{i\in C_m:\ y_i=c\}\bigr|.
\]
Purity is easy to interpret, though it is relatively forgiving to fragmentation.

\paragraph{Normalized Mutual Information (NMI).}
We report the normalized mutual information between the recovered cluster labels and the ground-truth labels. NMI measures global agreement between the two partitions.

\paragraph{Adjusted Rand Index (ARI).}
We also report the adjusted Rand index, which evaluates pairwise agreement between the recovered clustering and the ground truth while correcting for chance.

\paragraph{Verified wrong edges.}
Finally, we report the number of queried edges that are actually incorrect,
\[
W_b
=
\#\{e\text{ queried at budget }b : y_u\neq y_v\}.
\]
This diagnostic is useful because a good query policy need not maximize only the raw count of wrong edges discovered; what matters is whether the queried wrong edges are consequential for the final hierarchy.

\subsection{Efficient computation of the benchmark scores}

All methods share the same preprocessing, sparse graph construction, and MST computation stages. The main computational distinction lies in how the candidate edges are scored once the tree has been built.

\paragraph{Structural score.}
The score $S_{\mathrm{union}}(e)$ depends only on subtree sizes. After rooting the MST once, all values $S_{\mathrm{union}}(e)$ are obtained in a single tree pass, which costs
\[
O(n),
\]
followed by
\[
O(n\log n)
\]
to sort the edges.

\paragraph{Raw-weight and scaled-weight baselines.}
These baselines already have one scalar value per MST edge, so after the tree is built they require only sorting:
\[
O(n\log n).
\]

\paragraph{Centroid-gap, Ward-bridge, and Fisher-bridge baselines.}
These baselines require component-level feature statistics. In the implementation, subtree sizes, subtree sums, and subtree squared-norm sums are computed in one rooted-tree pass. This yields all required centroids and within-component radii for every edge in
\[
O(nd),
\]
after which sorting again costs
\[
O(n\log n).
\]
Hence the total post-MST complexity for these feature-aware baselines is
\[
O(nd+n\log n).
\]

\subsection{What this experiment tests}

This benchmark separates several distinct notions of edge importance:
\begin{enumerate}
    \item \textbf{local geometric salience}, captured by raw and scaled edge weights;
    \item \textbf{pure structural exposure}, captured by $S_{\mathrm{union}}(e)$;
    \item \textbf{component-level separation}, captured by centroid-gap;
    \item \textbf{size-aware feature separation}, captured by Ward-bridge;
    \item \textbf{separation relative to internal spread}, captured by Fisher-bridge.
\end{enumerate}

This makes the experiment substantially more informative than a simple comparison against raw edge length or random querying. If $S_{\mathrm{union}}(e)$ outperforms the local geometric baselines, then the theory is identifying more than long-edge effects. If it remains competitive with the feature-aware baselines, then the perturbation-theoretic notion of a load-bearing edge is capturing a practically useful structural proxy for meaningful corrections in the hierarchy.




\subsection{Empirical goal}

The empirical goal is therefore modest and well defined: to test whether the theory-derived score
\[
S_{\mathrm{union}}(e)=|A_e|\,|B_e|
\]
provides an effective and computationally cheap priority rule for budgeted edge verification on a fixed sparse MST backbone. In this sense, the experiment serves as a practical validation of the structural quantity identified by the perturbation analysis.

\subsection{Empirical summary}

Results are shown in Fig.~\ref{fig:five-panel}. Across the benchmark suite, querying by $S_{\mathrm{union}}(e)$ consistently improves clustering quality more rapidly than raw-weight, scaled-weight, and random querying, and remains competitive with the stronger component-level baselines such as centroid-gap, Fisher-bridge, and Ward-bridge. This is the practical machine-learning role of the theory: the Hamming-stability-derived structural score yields an efficient active-verification policy for hierarchical clustering on full real datasets.
\begin{figure}[H]
    \centering

    \begin{minipage}[b]{0.47\textwidth}
        \centering
        \includegraphics[width=\linewidth]{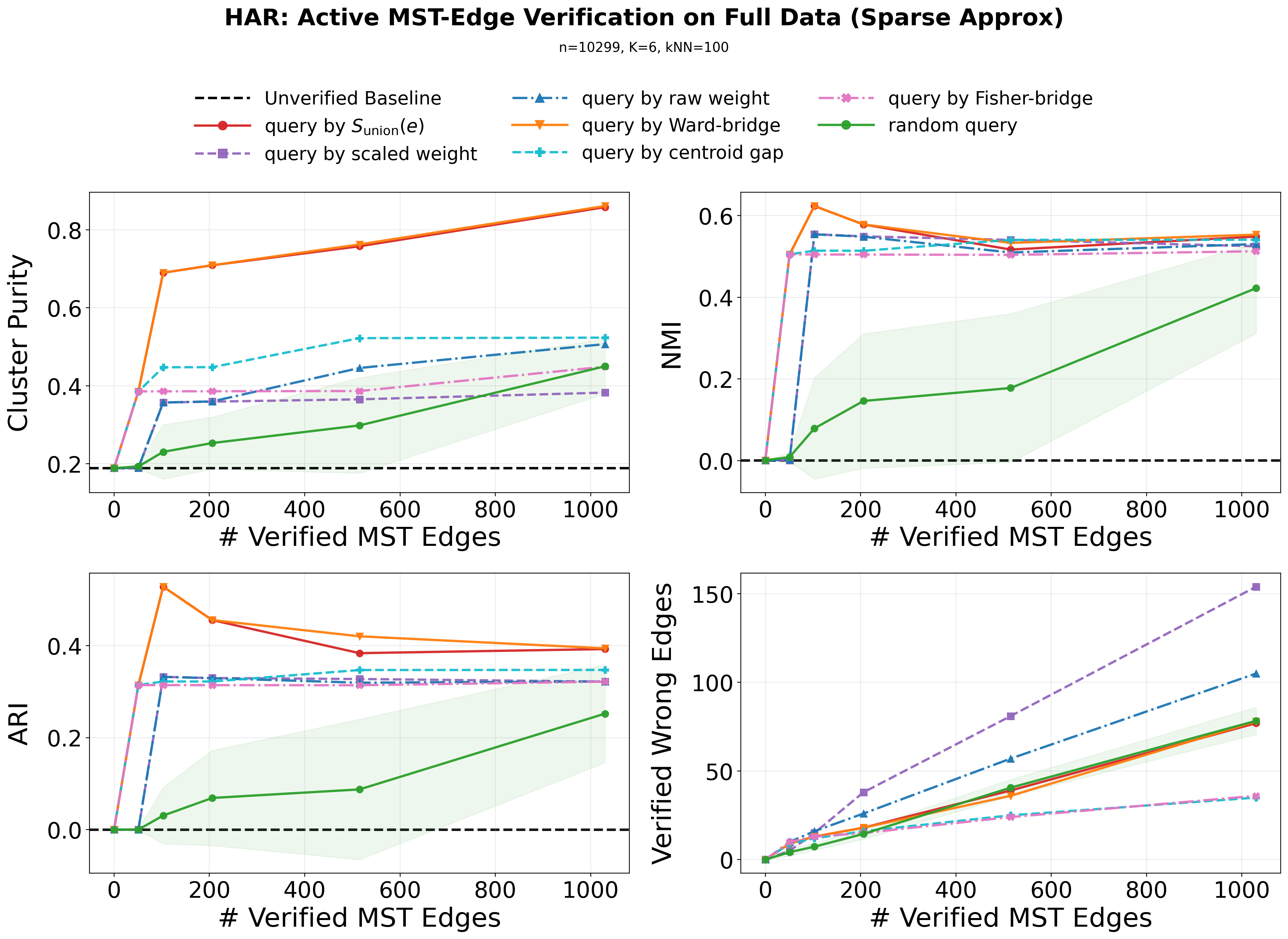}
    \end{minipage}
    \hfill
    \begin{minipage}[b]{0.47\textwidth}
        \centering
        \includegraphics[width=\linewidth]{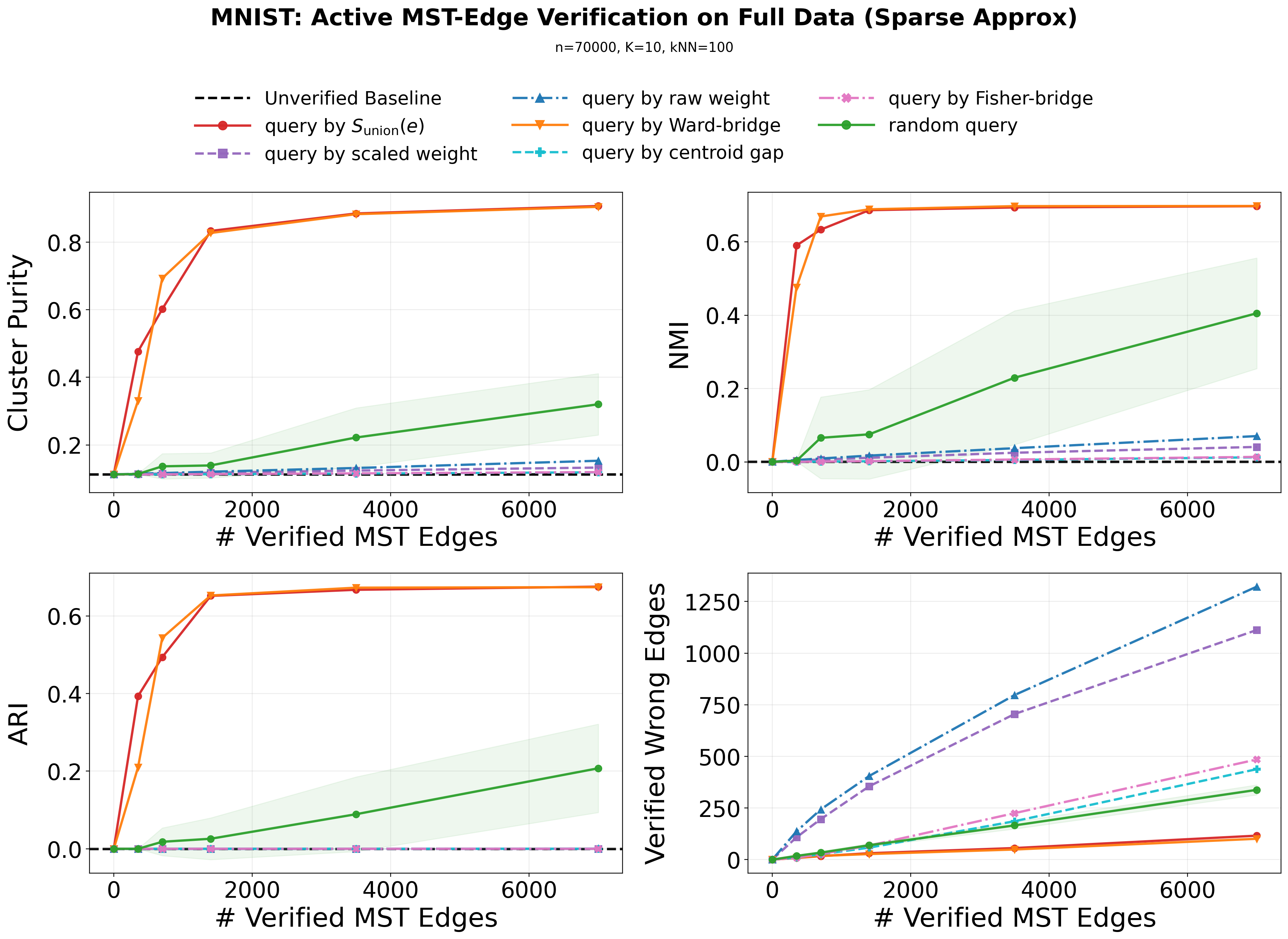}
    \end{minipage}

    \vspace{0.8em}

    \begin{minipage}[b]{0.47\textwidth}
        \centering
        \includegraphics[width=\linewidth]{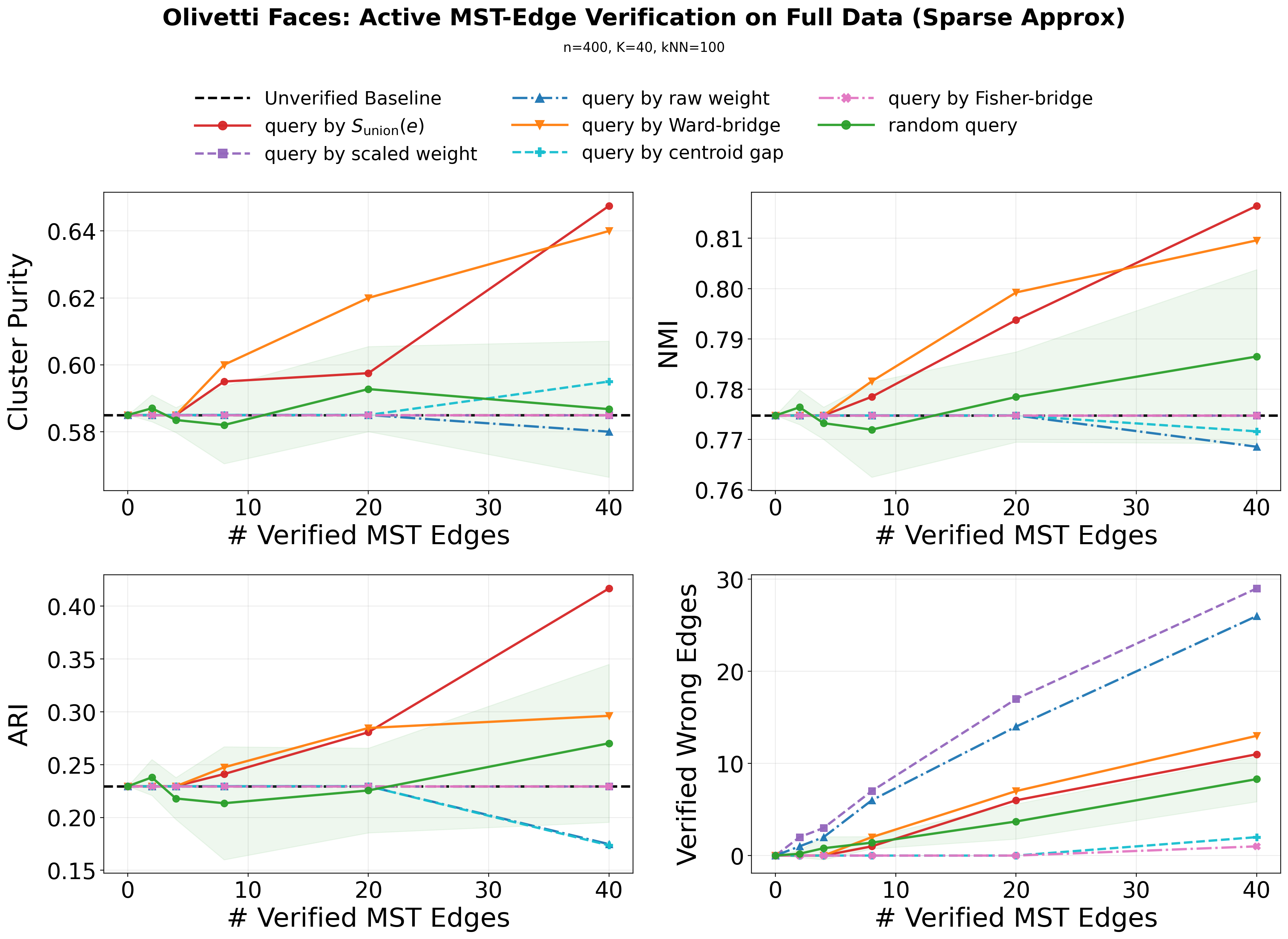}
    \end{minipage}
    \hfill
    \begin{minipage}[b]{0.47\textwidth}
        \centering
        \includegraphics[width=\linewidth]{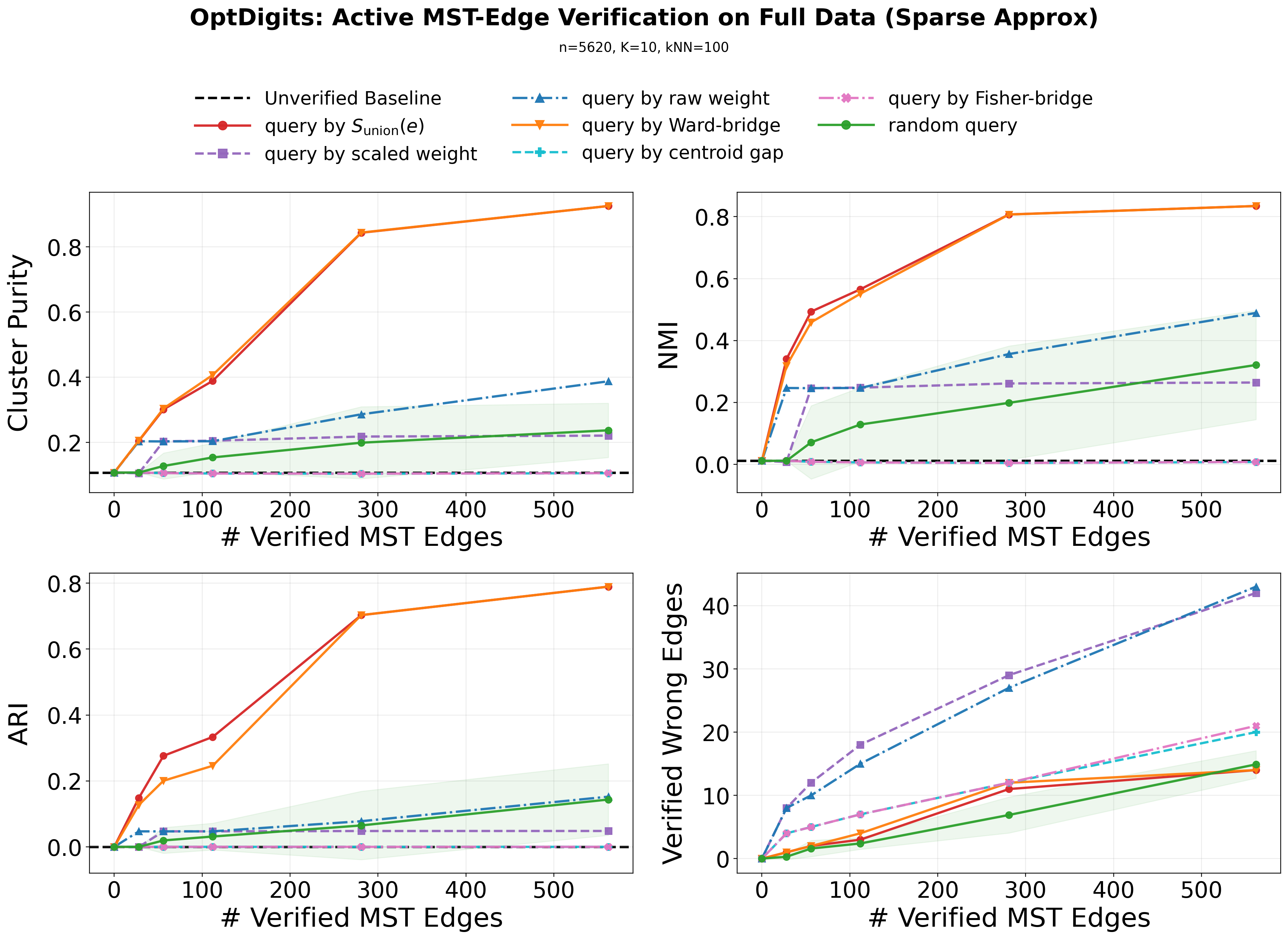}
    \end{minipage}

    \vspace{0.8em}
    
    \begin{minipage}[b]{0.47\textwidth}
        \centering
        \includegraphics[width=\linewidth]{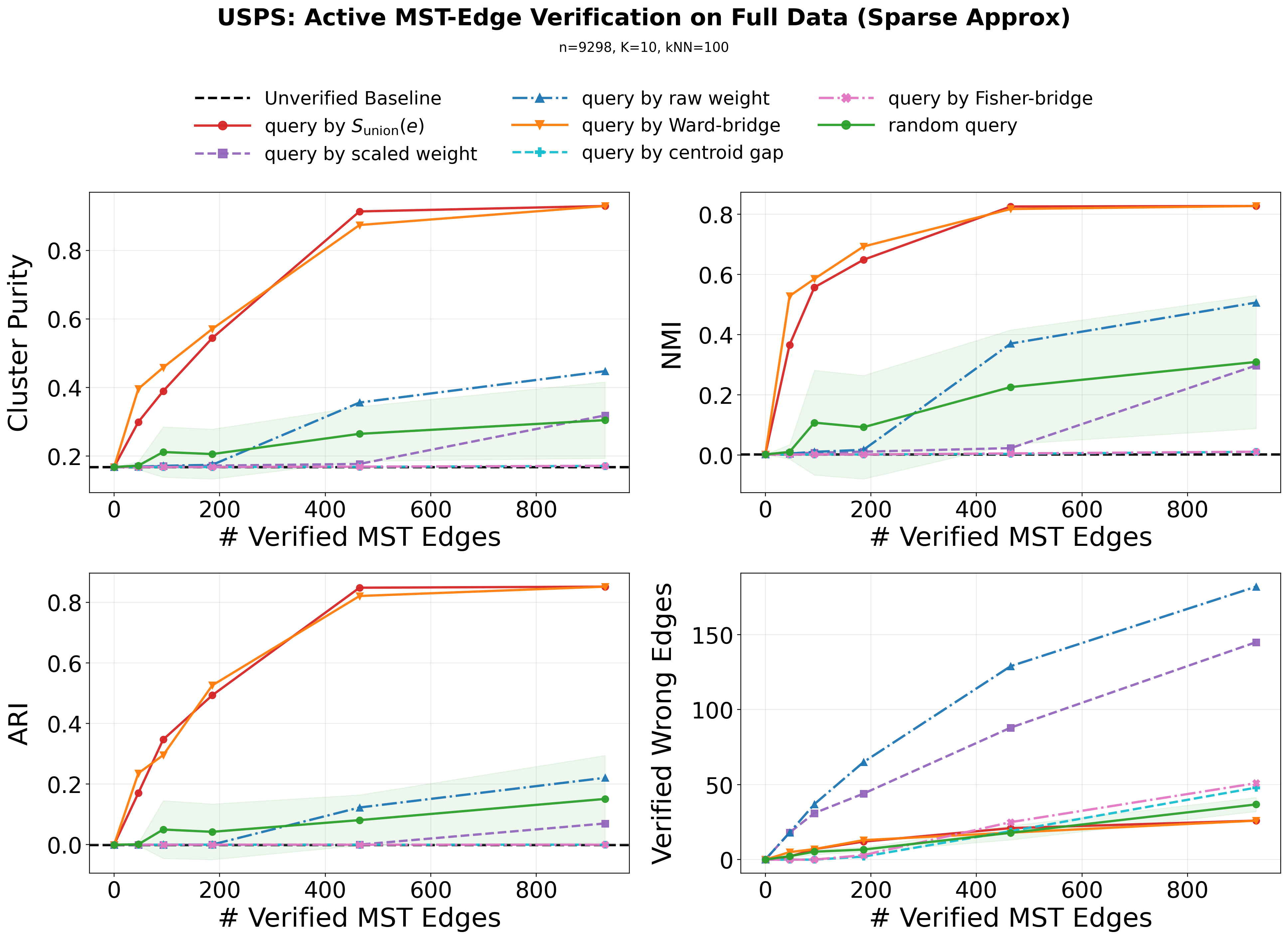}
    \end{minipage}

    \caption{Budgeted active MST-edge verification for semi-supervised clustering.
Clustering quality and verified-wrong-edge counts versus verification budget on a fixed sparse single-linkage backbone. The theory-derived score $S_{\mathrm{union}}(e)$ provides an effective and computationally cheap query rule, outperforming raw/scaled-weight and random baselines and remaining competitive with stronger feature-aware policies.}
    \label{fig:five-panel}
\end{figure}
An important qualitative takeaway is that the purely structural score $S_{\mathrm{union}}(e)$ is often on par with the more feature-aware Ward-bridge baseline. This is notable because the two rules arise from different principles. Ward-bridge is a classical geometric criterion based on between-component separation and component size, whereas $S_{\mathrm{union}}(e)$ emerges directly from our ultrametric perturbation analysis as the number of cross-component pairs exposed by a cut. Thus, the experiments suggest that the load-bearingness identified by the theory is not merely a graph-theoretic curiosity: it aligns closely with strong feature-aware notions of meaningful splits while remaining simpler, tree-only, and theoretically motivated.

Furthermore, comparing the clustering performance against the raw count of verified wrong edges (Fig.~\ref{fig:five-panel}, bottom rows) reveals a critical dynamic. The local geometric baselines—raw and scaled weight—successfully identify the highest absolute number of incorrect edges, yet correcting them yields almost no improvement in global clustering quality. In contrast, $S_{\mathrm{union}}(e)$ and Ward-bridge trigger massive gains despite discovering significantly fewer incorrect edges in total. This empirically validates the core thesis of our perturbation analysis: not all tree edges are structurally equal. Local heuristics waste the supervision budget snipping off isolated, structurally irrelevant outliers, whereas $S_{\mathrm{union}}(e)$ successfully targets the load-bearing bridges whose correction resolves macroscopic ultrametric violations across large component

\section{Conclusion}
\label{sec:conclusion}

We developed a sparsity-aware stability theory for the subdominant (minmax) ultrametric, complementing
classical $\ell_\infty$/Gromov--Hausdorff results with an $\ell_0$-type perspective that controls the extent
of change under sparse edits. Our analysis shows that propagation is mediated by the MST: only pairs whose
tree paths traverse edited or newly exposed cuts can change, yielding a localization in terms of exposed
cut-pair sets, a sharp per-edit score $S_{\mathrm{union}}(f)$, and a tree-only global envelope $\bar L_T$. We further proved that this instance dependence is unavoidable by exhibiting single-edit constructions with $\Theta(n^2)$ changes, and identified explicit single-edit sharpness examples together with conditional multi-edit regimes in which the
upper bound is asymptotically tight. Together, these results offer a structural, interpretable account of how local perturbations can (or cannot) ripple through single-linkage hierarchies, and motivate using the associated risk scores as practical diagnostics for locating robust versus load-bearing edges in real data.

\bibliographystyle{plainnat}
\bibliography{main}

\newpage
\appendix

\section{Proofs of Technical Lemmas}~\label{sec:A}

\subsection{Proof of Lemma~\ref{lem:mst}}
\label{sec:A1}
\begin{lemma}[MST characterization]
If $T$ is any minimum spanning tree (MST) of the complete graph with weights given by the dissimilarity function $d$, then $u_d(i,j)=\max_{e\in \mathrm{path}_T(i,j)} d(e)\quad\forall i\neq j.$
\end{lemma}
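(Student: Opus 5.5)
We prove the two inequalities $u_d(i,j)\le M_T(i,j)$ and $u_d(i,j)\ge M_T(i,j)$ separately, where $M_T(i,j):=\max_{e\in \mathrm{path}_T(i,j)} d(e)$ and $u_d$ is the minmax path formula of the Definition. The upper bound is immediate: the tree path $\mathrm{path}_T(i,j)=(v_0,\dots,v_m)$ is one admissible element of $\mathcal P(i,j)$, so the minimum over all paths is at most its bottleneck, which is $M_T(i,j)$.

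For the lower bound we fix an arbitrary path $P=(v_0=i,\dots,v_m=j)$ in the complete graph and show that $\max_t d(v_{t-1},v_t)\ge M_T(i,j)$. Let $e^\ast=\{a,b\}$ be an edge of $\mathrm{path}_T(i,j)$ attaining $M_T(i,j)$. Removing $e^\ast$ from $T$ splits $V$ into two components $(A,B)$, the fundamental cut $C_{e^\ast}$, with $i\in A$ and $j\in B$, because $e^\ast$ lies on the unique tree path from $i$ to $j$. The path $P$ starts in $A$ and ends in $B$, so some step $f=\{v_{t-1},v_t\}$ crosses the cut. It therefore suffices to show that $d(f)\ge d(e^\ast)$ for every edge $f$ crossing $C_{e^\ast}$.

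This cut-optimality step is the main point, and we prove it by the standard exchange argument. Suppose some crossing edge $f\neq e^\ast$ had $d(f)<d(e^\ast)$. The tree $T':=(T\setminus\{e^\ast\})\cup\{f\}$ is again a spanning tree, since $f$ reconnects the components $A$ and $B$. Its total weight would be strictly smaller than that of $T$, contradicting minimality of $T$. Hence $d(f)\ge d(e^\ast)=M_T(i,j)$, so the bottleneck of $P$ is at least $M_T(i,j)$. Minimizing over $P$ gives $u_d(i,j)\ge M_T(i,j)$.

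The argument uses only the minimality of $T$, not tie-breaking, so it holds for any MST, as the statement requires. Ties are harmless because we only need the non-strict inequality $d(f)\ge d(e^\ast)$. For completeness we may also note that $u_d$ defined this way is an ultrametric dominated by $d$, taking the one-edge path to get $u_d\le d$. That fact is not needed for the displayed identity itself.
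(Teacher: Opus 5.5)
Your proof is correct and follows essentially the same route as the paper. The tree path gives the upper bound, and the lower bound comes from the fundamental cut of a maximum-weight edge $e^\ast$ on $\mathrm{path}_T(i,j)$, which every $i$--$j$ path must cross; the only difference is that you prove $d(f)\ge d(e^\ast)$ for every crossing edge $f$ by the exchange argument, whereas the paper simply asserts it as a consequence of $T$ being an MST.
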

\begin{proof}
Fix $i\neq j$ and let $P_T(i,j)$ be the unique $i$--$j$ path in $T$. Let
\[
\lambda := \max_{e\in P_T(i,j)} d(e),
\]
and choose an edge $e^\star\in P_T(i,j)$ with $d(e^\star)=\lambda$.
Removing $e^\star$ splits $T$ into two components $A$ and $B$ with $i\in A$ and $j\in B$.

Because $T$ is an MST, the tree edge $e^\star$ is a minimum-weight edge across its fundamental cut $(A,B)$.
Hence every path $P$ from $i$ to $j$ must contain some edge crossing $(A,B)$, and every such crossing edge
has weight at least $\lambda$. Therefore
\[
\max_{e\in P} d(e) \ge \lambda
\qquad \text{for every path } P\in\mathcal P(i,j),
\]
which implies
\[
u_d(i,j)=\min_{P\in\mathcal P(i,j)} \max_{e\in P} d(e) \ge \lambda.
\]

On the other hand, the specific tree path $P_T(i,j)$ has bottleneck exactly $\lambda$, so
\[
u_d(i,j)\le \max_{e\in P_T(i,j)} d(e)=\lambda.
\]
Combining the two inequalities yields
\[
u_d(i,j)=\max_{e\in P_T(i,j)} d(e),
\]
as claimed.
\end{proof}
\subsection{Proof of Lemma~\ref{lem:cut-property}}
\label{sec:A2}
\begin{lemma}[Cut property, with uniqueness]
Let $(A,B)$ be any cut of $V$ and let $e^\star\in E$ be an edge with one endpoint in $A$ and one in $B$.
If $d(e^\star)<d(f)$ for every other cut edge $f$ across $(A,B)$, then $e^\star$ belongs to \emph{every} MST of $d$.
\end{lemma}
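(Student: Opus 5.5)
The plan is an exchange argument by contradiction. Fix a cut $(A,B)$ and an edge $e^\star$ crossing it whose weight is strictly smaller than that of every other edge crossing $(A,B)$. Let $T'$ be an arbitrary MST of $d$. Suppose, aiming for a contradiction, that $e^\star\notin E(T')$.

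First, write $e^\star=\{a,b\}$ with $a\in A$ and $b\in B$. Since $T'$ is spanning and connected, it contains a unique $a$--$b$ path $P_{T'}(a,b)$. That path starts in $A$ and ends in $B$, so it uses at least one edge $f=\{x,y\}$ with $x\in A$ and $y\in B$. Because $e^\star\notin E(T')$, we have $f\neq e^\star$. By hypothesis $f$ is then another cut edge, so $d(e^\star)<d(f)$.

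Second, perform the standard swap and set $T'':=T'-f+e^\star$. Removing $f$ splits $T'$ into two components. The endpoints $a$ and $b$ lie in different components, since $f$ lies on the unique tree path joining them. Adding $e^\star$ therefore reconnects the two components, so $T''$ is again a spanning tree with $n-1$ edges. Its total weight is
\[
w(T'')=w(T')-d(f)+d(e^\star)<w(T'),
\]
which contradicts the minimality of $T'$. Hence $e^\star\in E(T')$. Since $T'$ was arbitrary, $e^\star$ belongs to every MST.

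The only step needing care is the claim that $T'-f+e^\star$ is a tree. I would justify it as above, via the fact that $f$ separates $a$ from $b$ in $T'$. The strict inequality does all the work, and it is why the conclusion holds for \emph{every} MST rather than for some MST. Tie-breaking (Assumption~\ref{ass:1}) is not needed for this statement.
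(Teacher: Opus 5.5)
Your proposal is correct and is essentially the paper's exchange argument. The paper works with the unique cycle in $T'\cup\{e^\star\}$, which is exactly your path $P_{T'}(a,b)$ closed up by $e^\star$, and gets a second crossing edge $f$ from a parity argument before doing the same swap. Your justification that $T'-f+e^\star$ is a spanning tree ($f$ separates $a$ from $b$ in $T'$) is, if anything, a bit more explicit than the paper's acyclicity-plus-edge-count remark.
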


\begin{proof}
Suppose, toward a contradiction, that there exists an MST $T'=(V,E(T'))$ with $e^\star\notin E(T')$.
Then $T'\cup\{e^\star\}$ contains a unique simple cycle $C$. Since $e^\star$ crosses $(A,B)$, any cycle crosses
a cut an even number of times; hence there exists an edge $f\in C\cap E$ with $f\neq e^\star$ that also crosses $(A,B)$.
By the strict minimality of $e^\star$ across the cut, $d(f)>d(e^\star)$. Define
\begin{equation}
  E'' \ :=\ E(T')\cup\{e^\star\}\setminus\{f\}.
\end{equation}
Then $E''$ is acyclic and has $|V|-1$ edges, so $(V,E'')$ is a spanning tree. Its total weight satisfies
\begin{equation}
  \sum_{e\in E''} d(e)\ =\ \sum_{e\in E(T')} d(e) + d(e^\star) - d(f)\ <\ \sum_{e\in E(T')} d(e),
\end{equation}
contradicting the minimality of $T'$. Hence every MST must contain $e^\star$.
\end{proof}

\section{Proofs of Major Theorems and Corollary}
\label{sec:B}
\subsection{Proof of Theorem~\ref{thm:localization}}
\label{sec:B1}
\begin{theorem}[Localization of ultrametric under sparse edge edits]
Let $d:\binom{V}{2}\mapsto\mathbb{R}_{\geqslant0}$, and let $T=(V,E(T))$ be the (tie-broken under Assumption~\ref{ass:1}) MST of $d$.
For $e=\{a,b\}\in E(T)$ let $C_e=(A_e,B_e)$ be its fundamental cut in $T$, and write $w_e:=d(e)$. we define the associated cut-pair set $
\mathcal R_e:=\bigl\{\{i,j\}\in \tbinom{V}{2}: i\in A_e,\ j\in B_e\bigr\}.$
Let $F\subseteq \binom{V}{2}$ be a set of edited edges and let $\tilde d$ be any dissimilarity with
$\tilde d(e)=d(e)$ for all $e\notin F$ (no restriction on $e\in F$).

For $i\neq j$ let $P_T(i,j)$ be the unique $i$–$j$ path in $T$. From MST bottleneck representation 
$
u_{d}(i,j)=\ \max_{e\in P_T(i,j)} w_e 
$, the following holds:
\begin{itemize}
    \item[(i)] (Monotone upper bound, no edited $T$-edges on the path) If $P_T(i,j)\cap F=\varnothing$, then $u_{\tilde d}(i,j)\ \le\ u_d(i,j)$.
    \item[(ii)] (Sufficient conditions for equality) 
    If $P_T(i,j)\cap F=\varnothing$ and for every $e\in P_T(i,j)$, all edited edges $f\in F$ crossing $C_e$ satisfy $\tilde d(f)\geqslant w_e$, then $u_{\tilde d}(i,j)=u_d(i,j)$.

    \item[(iii)] 
    (Pair-count bound for possible changes)  Define the set of potentially affected MST edges
\begin{equation}
\mathcal E:=\bigl\{e\in E(T): e\in F \text{ or } \exists f\in F \text{ crossing } C_e \text{ with } \tilde d(f)<w_e\bigr\}.
\end{equation}
Then the number of unordered pairs whose ultrametric value changes satisfies
\begin{equation}
\bigl|\bigl\{\{i,j\}\in \tbinom{V}{2}: u_{\tilde d}(i,j)\neq u_d(i,j)\bigr\}\bigr|
=
\|u_d-u_{\tilde d}\|_0
\le
\left|\bigcup_{e\in \mathcal E}\mathcal R_e\right|
=
\binom{n}{2}-\sum_{t=1}^m \binom{|C_t|}{2},
\end{equation}
where $C_1,\dots,C_m$ are the vertex sets of the connected components of the forest $T-\mathcal E$.

\end{itemize}
\end{theorem}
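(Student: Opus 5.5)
The argument rests on two ingredients: the bottleneck formula of Lemma~\ref{lem:mst} applied to $d$ and to $\tilde d$, and the fact that every $i$--$j$ path in the complete graph crosses every fundamental cut $C_e$ with $e\in P_T(i,j)$. We take the three parts in order; (iii) is a set-theoretic consequence of (ii).

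\emph{Part (i).} The tree path $P_T(i,j)$ is one admissible path in $\mathcal P(i,j)$. If $P_T(i,j)\cap F=\varnothing$, then every edge of the path keeps its weight, so $\tilde d(e)=w_e$ for all $e\in P_T(i,j)$. The min--max definition of $u_{\tilde d}$ therefore gives
\[
u_{\tilde d}(i,j)\le \max_{e\in P_T(i,j)}\tilde d(e)=\max_{e\in P_T(i,j)} w_e=u_d(i,j),
\]
where the last equality is Lemma~\ref{lem:mst}. Note that $T$ need not be an MST of $\tilde d$ for this step.

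\emph{Part (ii).} By (i) it remains to prove $u_{\tilde d}(i,j)\ge u_d(i,j)$. Let $e^\star\in P_T(i,j)$ attain the maximum, so $w_{e^\star}=\lambda=u_d(i,j)$, and let $C_{e^\star}=(A,B)$ with $i\in A$, $j\in B$. Any path $P\in\mathcal P(i,j)$ contains some edge $g$ crossing $(A,B)$. There are two cases.
\begin{itemize}
\item If $g\notin F$, then $\tilde d(g)=d(g)$. Because $T$ is an MST of $d$, the tree edge $e^\star$ is minimum across its fundamental cut (the same fact used in the proof of Lemma~\ref{lem:mst}), so $d(g)\ge \lambda$.
\item If $g\in F$, then $g$ is an edited edge crossing $C_{e^\star}$, and the hypothesis gives $\tilde d(g)\ge w_{e^\star}=\lambda$.
\end{itemize}
In both cases $\max_{h\in P}\tilde d(h)\ge\lambda$, and minimizing over $P$ yields equality. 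The cut-minimality in the first case holds under the original weights even with ties, since a cheaper crossing edge could be exchanged for $e^\star$ to lower the total weight; the tie-breaking of Assumption~\ref{ass:1} is therefore irrelevant here. This is the main point to state carefully.

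\emph{Part (iii).} Take the contrapositive. If $\{i,j\}\notin\bigcup_{e\in\mathcal E}\mathcal R_e$, then no $e\in\mathcal E$ lies on $P_T(i,j)$, because $e\in P_T(i,j)$ if and only if $\{i,j\}\in\mathcal R_e$. Hence every $e\in P_T(i,j)$ satisfies $e\notin F$, and no $f\in F$ crosses $C_e$ with $\tilde d(f)<w_e$. These are exactly the hypotheses of (ii), so $u_{\tilde d}(i,j)=u_d(i,j)$. The changed set is therefore contained in the union, which gives the inequality.

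For the count, $\{i,j\}$ lies in the union if and only if $P_T(i,j)$ meets $\mathcal E$. This happens if and only if $i$ and $j$ lie in different components of the forest $T-\mathcal E$, since the tree path between them is unique. The complement of the union is thus $\bigsqcup_t\binom{C_t}{2}$, which gives $\binom n2-\sum_t\binom{|C_t|}{2}$.
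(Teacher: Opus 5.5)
Your proposal is correct and follows essentially the same route as the paper. It uses the tree path as a witness for (i), the crossing-edge argument split on $g\in F$ versus $g\notin F$ (with the exchange property of the MST) for (ii), and (iii) as the contrapositive of (ii) together with the component count. The only cosmetic differences are that you run the crossing argument at the bottleneck edge $e^\star$ alone rather than at every edge of $P_T(i,j)$, and that you argue (iii) by contrapositive instead of the paper's case split on whether $P_T(i,j)\cap F=\varnothing$.
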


\begin{proof}

\emph{(i)} If $P_T(i,j)\cap F=\varnothing$, then all edges on the $T$-path are unchanged, i.e.
$\tilde d(e)=d(e)$ for all $e\in P_T(i,j)$.

Hence,
\begin{equation}
\begin{aligned}
u_{\tilde d}(i,j)
&=\min_{P\in\mathcal{P}(i,j)}\ \max_{f\in P}\tilde d(f)\\
&\leqslant \max_{e\in P_T(i,j)} \tilde d(e)
=\max_{e\in P_T(i,j)} d(e)
= u_d(i,j).
\end{aligned}
\end{equation}

\emph{(ii)} 
Fix distinct points $i,j$ such that the MST path $P_T(i,j)$ contains no edited edges.
Let $e\in P_T(i,j)$ be arbitrary, and let $P$ be any $i$--$j$ path in the complete graph.
Since deleting $e$ separates $i$ and $j$, the path $P$ must cross the fundamental cut $C_e$.
Let $g$ be any edge of $P$ that crosses $C_e$.

We claim that
\begin{equation}
\tilde d(g)\geqslant w_e.
\end{equation}
Indeed, there are two cases.

If $g\in F$, then $g$ is an edited edge crossing $C_e$. By assumption, every edited edge crossing
$C_e$ has weight at least $w_e$ under $\tilde d$. In particular,
\begin{equation}
\tilde d(g)\geqslant w_e.
\end{equation}

If $g\notin F$, then $\tilde d(g)=d(g)$. Moreover, since $e$ is an MST edge defining the
fundamental cut $C_e$, every edge crossing $C_e$ has weight at least $w_e$; otherwise replacing
$e$ by a strictly lighter crossing edge would produce a lighter spanning tree (cut-property argument; cf. Lemma~\ref{lem:cut-property}). Hence
\begin{equation}
\tilde d(g)=d(g)\geqslant w_e.
\end{equation}

Thus in all cases the crossing edge $g$ on $P$ satisfies $\tilde d(g)\geqslant w_e$, and therefore
\begin{equation}
\max_{f\in P}\tilde d(f)\geqslant \tilde d(g)\geqslant w_e
\qquad\text{for all } e\in P_T(i,j).
\end{equation}
Consequently,
\begin{equation}
\max_{f\in P}\tilde d(f)\ \geqslant\ \max_{e\in P_T(i,j)} w_e
\;=\;
u_d(i,j),
\end{equation}
and therefore
\begin{equation}
u_{\tilde d}(i,j)
=\min_{P\in\mathcal P(i,j)}\max_{f\in P}\tilde d(f)
\;\geqslant\;
u_d(i,j).
\end{equation}
Together with part~(i), which gives $u_{\tilde d}(i,j)\leqslant u_d(i,j)$, we conclude that
\begin{equation}
u_{\tilde d}(i,j)=u_d(i,j).
\end{equation}



\emph{(iii)}
We will prove the following statements ($P, Q ,R$) to complete the proof:
\begin{itemize}
    \item $P: \ 
\bigl\{\{i,j\}\in \tbinom{V}{2}: u_{\tilde d}(i,j)\neq u_d(i,j)\bigr\}
\;\subseteq\;
\bigcup_{e\in\mathcal E} \mathcal{R}_e.
$
\item $
Q : \ 
\bigcup_{e\in\mathcal E} \mathcal{R}_e
\;=\;
\bigl\{\{i,j\}\in \tbinom{V}{2}:\ i\ \text{and}\ j\ \text{in different components of } T-\mathcal E\bigr\}.
$
\item $
R: \
\left|\ \bigcup_{e\in\mathcal E} \mathcal{R}_e\ \right|
\;=\; \binom{n}{2}
-
\sum_{t=1}^m \binom{|C_t|}{2}.
$
\end{itemize}

Proof of (P): Fix $\{i,j\}\in \tbinom{V}{2}$ such that
\begin{equation}
u_{\tilde d}(i,j)\neq u_d(i,j).
\end{equation}
We show that $\{i,j\}\in \mathcal R_e$ for some $e\in\mathcal E$.

First suppose that
\begin{equation}
P_T(i,j)\cap F\neq \varnothing.
\end{equation}
Choose any edge $e\in P_T(i,j)\cap F$. Since $e$ is an edited tree edge, by definition
\begin{equation}
e\in \mathcal E.
\end{equation}
Because $e$ lies on the unique tree path from $i$ to $j$, deleting $e$ separates $i$ and $j$, and hence
\begin{equation}
\{i,j\}\in \mathcal R_e.
\end{equation}

Now suppose instead that
\begin{equation}
P_T(i,j)\cap F=\varnothing.
\end{equation}
Since
\begin{equation}
u_{\tilde d}(i,j)\neq u_d(i,j),
\end{equation}
the conclusion of part~(ii) fails. Part~(ii) states that if
\begin{equation}
P_T(i,j)\cap F=\varnothing
\end{equation}
and if for every $e\in P_T(i,j)$ every edited edge crossing $C_e$ satisfies
\begin{equation}
\tilde d(f)\geqslant w_e,
\end{equation}
then
\begin{equation}
u_{\tilde d}(i,j)=u_d(i,j).
\end{equation}
Therefore, since the first condition already holds in the present case, the second condition must fail. Hence
there exist some edge $e\in P_T(i,j)$ and some edited edge $f\in F$ crossing $C_e$ such that
\begin{equation}
\tilde d(f)<w_e.
\end{equation}
Therefore, by the definition of $\mathcal E$,
\begin{equation}
e\in \mathcal E.
\end{equation}
Again, because $e$ lies on the unique tree path from $i$ to $j$, deleting $e$ separates $i$ and $j$, so
\begin{equation}
\{i,j\}\in \mathcal R_e.
\end{equation}

In either case, $\{i,j\}$ belongs to $\mathcal R_e$ for some $e\in\mathcal E$. Therefore
\begin{equation}
\bigl\{\{i,j\}\in \tbinom{V}{2}:u_{\tilde d}(i,j)\neq u_d(i,j)\bigr\}
\subseteq
\bigcup_{e\in \mathcal E}\mathcal R_e.
\end{equation}

Proof of (Q): If $\{i,j\}\in \mathcal R_e$ for some $e\in \mathcal E$, then removing $e$ separates $i$ and $j$ in $T$,
so they lie in different connected components of $T-\mathcal E$.
Conversely, if $i$ and $j$ lie in different components of $T-\mathcal E$, then the unique tree path $P_T(i,j)$
contains at least one edge $e\in \mathcal E$. For that edge, one endpoint lies in $A_e$ and the other in $B_e$, so
\begin{equation}
\{i,j\}\in \mathcal R_e.
\end{equation}
Hence
\begin{equation}
\bigcup_{e\in \mathcal E}\mathcal R_e
=
\bigl\{\{i,j\}\in \tbinom{V}{2}: i \text{ and } j \text{ lie in different connected components of } T-\mathcal E\bigr\}.
\end{equation}

Proof of (R): There are $\binom{n}{2}$ unordered pairs in total. The unordered pairs not in
$\bigcup_{e\in \mathcal E}\mathcal R_e$ are exactly those whose two endpoints lie in the same component $C_t$ of
$T-\mathcal E$, and there are $\binom{|C_t|}{2}$ such pairs inside $C_t$. Summing over components and subtracting yields
\begin{equation}
\left|\bigcup_{e\in \mathcal E}\mathcal R_e\right|
=
\binom{n}{2}
-
\sum_{t=1}^m \binom{|C_t|}{2}.
\end{equation}

Combining the previous steps, we obtain
\begin{equation}
\bigl|\bigl\{\{i,j\}\in \tbinom{V}{2}:u_{\tilde d}(i,j)\neq u_d(i,j)\bigr\}\bigr|
=
\|u_d-u_{\tilde d}\|_0
\le
\left|\bigcup_{e\in \mathcal E}\mathcal R_e\right|
=
\binom{n}{2}
-
\sum_{t=1}^m \binom{|C_t|}{2}.
\end{equation}
\end{proof}

\subsection{Proof of Theorem~\ref{thm:ell0-lip-exposed}}
\label{sec:B2}
\begin{theorem}[Hamming--Lipschitz bound via exposed cuts]
Let $d:\binom{V}{2}\to \mathbb{R}_{\ge 0}$ be a dissimilarity on a finite set $V$, and let
$T=(V,E(T))$ be the (tie-broken under Assumption~1) MST of $d$.
For each tree edge $e=\{a,b\}\in E(T)$, let $C_e=(A_e,B_e)$ be its fundamental cut in $T$,
and write $w_e:=d(e)$. Let $\tilde d$ be any perturbed dissimilarity, and let
$
F:=\bigl\{f\in \tbinom{V}{2}: \tilde d(f)\neq d(f)\bigr\}
$ be its edit support.
For an edited pair $f=\{x,y\}\in F$, define the set of exposed cuts by
\begin{equation}
\Xi(f):=(\{f\}\cap E(T))\cup \bigl\{e\in E(T): f \text{ crosses } C_e \text{ and } \tilde d(f)<w_e\bigr\}.
\end{equation}
For each tree edge $e\in E(T)$, the set of associated cut-pairs
$
\mathcal R_e:=\bigl\{\{i,j\}\in \tbinom{V}{2}: i\in A_e,\ j\in B_e\bigr\}.
$
Let
$
C:=\bigl\{\{i,j\}\in \tbinom{V}{2}: u_{\tilde d}(i,j)\neq u_d(i,j)\bigr\}
$ denote the set of unordered pairs whose ultrametric values change. Further, define the sharp per-edit exposed-cut size
$
S_{\mathrm{union}}(f):=
\left|\bigcup_{e\in \Xi(f)} \mathcal R_e\right|.
$ For an edited pair $f=\{x,y\}$, define the tree-only path envelope
$
\bar S_T(f):=
\left|\bigcup_{e\in P_T(x,y)} \mathcal R_e\right|,
$
and the global tree-only constant
$\bar L_T:=\max_{f\in \binom{V}{2}} \bar S_T(f).
$

Then:

\begin{itemize}
    \item[(i)] $
C\subseteq \bigcup_{f\in F}\ \bigcup_{e\in \Xi(f)} \mathcal R_e.
$ Consequently, $
\|u_d-u_{\tilde d}\|_0 = |C|
\le
\left|\bigcup_{f\in F}\ \bigcup_{e\in \Xi(f)} \mathcal R_e\right|.
$ 
\item[(ii)] for every edited pair $f\in F$,
\begin{equation}
S_{\mathrm{union}}(f)\le \bar S_T(f)\le \bar L_T,
\end{equation}
and therefore
\begin{equation}
\|u_d-u_{\tilde d}\|_0
\le
\sum_{f\in F} S_{\mathrm{union}}(f)
\le
\sum_{f\in F} \bar S_T(f)
\le
\bar L_T\,|F|
=
\bar L_T\,\|d-\tilde d\|_0.
\end{equation}
In particular,
\begin{equation}
\bar L_T\le \binom{|V|}{2}.
\end{equation} 
\end{itemize}

\end{theorem}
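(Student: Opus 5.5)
Part (i) is a refinement of Theorem~\ref{thm:localization}(iii), so I would reuse its proof of statement $P$ and only sharpen the bookkeeping so that each exposed tree edge is charged to a specific edited pair $f$. Fix a changed pair $\{i,j\}\in C$ and split on whether $P_T(i,j)\cap F$ is empty. If it is nonempty, pick $e\in P_T(i,j)\cap F$. Then $e=f$ for some edited $f$, so $e\in\{f\}\cap E(T)\subseteq\Xi(f)$. Since $e$ lies on the tree path, $\{i,j\}\in\mathcal R_e$.

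If $P_T(i,j)\cap F=\varnothing$, then Theorem~\ref{thm:localization}(ii), used in contrapositive form, yields some $e\in P_T(i,j)$ and some $f\in F$ crossing $C_e$ with $\tilde d(f)<w_e$. By definition this gives $e\in\Xi(f)$, and again $\{i,j\}\in\mathcal R_e$. This establishes the inclusion $C\subseteq\bigcup_{f}\bigcup_{e\in\Xi(f)}\mathcal R_e$, and the cardinality bound follows by monotonicity. The union bound over $f$ then gives $|C|\le\sum_f S_{\mathrm{union}}(f)$.

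For part (ii), the key step is the containment $\Xi(f)\subseteq P_T(x,y)$ for $f=\{x,y\}$. If $f\in E(T)$, then $P_T(x,y)=\{f\}$. If $e\in E(T)$ and $f$ crosses $C_e$, then $x$ and $y$ lie on opposite sides of $T-e$, so $e$ lies on the unique tree path between them. Taking unions of $\mathcal R_e$ over the smaller and larger index sets gives $S_{\mathrm{union}}(f)\le\bar S_T(f)$. The bound $\bar S_T(f)\le\bar L_T$ holds by the definition of the maximum. Summing over $f\in F$ and using $|F|=\|d-\tilde d\|_0$ completes the chain. Finally, every $\mathcal R_e$ is a subset of $\binom{V}{2}$, so each $\bar S_T(f)\le\binom{|V|}{2}$, and therefore $\bar L_T\le\binom{|V|}{2}$.

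I expect no real obstacle here. The one point needing care is that the case analysis in (i) must attribute the exposing edge to a single $f$, which is exactly what the witness produced in each case provides. A second check is the claim that ``$f$ crosses $C_e$'' forces $e\in P_T(x,y)$; this is a standard fundamental-cut fact and only needs a one-line justification.
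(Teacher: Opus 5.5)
Your proposal is correct and follows essentially the same route as the paper. The only cosmetic difference is in (i): the paper observes that $\bigcup_{f\in F}\Xi(f)$ coincides with the set $\mathcal E$ of Theorem~\ref{thm:localization}(iii) and cites that inclusion, whereas you inline the same two-case argument (edited tree edge on the path, or the contrapositive of part (ii)). Part (ii) is identical to the paper's: $\Xi(f)\subseteq P_T(x,y)$, then the union bound, then the trivial $\binom{|V|}{2}$ cap.
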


\begin{proof}

\emph{(i)} We first prove the localization statement. Let
\begin{equation}
\widehat E:=\bigcup_{f\in F}\Xi(f)\subseteq E(T).
\end{equation}
By the definition of $\Xi(f)$, a tree edge $e\in E(T)$ belongs to $\widehat E$ if and only if at least one of the following holds:
\begin{equation}
e\in F\cap E(T),
\end{equation}
or
\begin{equation}
\exists f\in F \text{ such that } f \text{ crosses } C_e \text{ and } \tilde d(f)<w_e.
\end{equation}
Hence
\begin{equation}
\widehat E
=
\bigl\{
e\in E(T):
e\in F\cap E(T)
\ \text{or}\
\exists f\in F \text{ crossing } C_e \text{ with } \tilde d(f)<w_e
\bigr\}.
\end{equation}

But this is exactly the set of potentially affected tree edges appearing in Theorem~\ref{thm:localization}-(iii).
Therefore, by Theorem~\ref{thm:localization}-(iii), the changed-pair set satisfies
\begin{equation}
C\subseteq \bigcup_{e\in \widehat E}\mathcal R_e.
\end{equation}
Since $\widehat E=\bigcup_{f\in F}\Xi(f)$, we obtain
\begin{equation}
C\subseteq \bigcup_{f\in F}\ \bigcup_{e\in \Xi(f)} \mathcal R_e,
\end{equation}
which proves the claimed localization.
Taking cardinalities yields
\begin{equation}
\|u_d-u_{\tilde d}\|_0 = |C|
\le
\left|\bigcup_{f\in F}\ \bigcup_{e\in \Xi(f)} \mathcal R_e\right|.
\end{equation}

\emph{(ii)} We now derive the per-edit bound.
By definition,
\begin{equation}
S_{\mathrm{union}}(f)
=
\left|\bigcup_{e\in \Xi(f)} \mathcal R_e\right|.
\end{equation}
Applying the union bound to the previous inclusion gives
\begin{equation}
\|u_d-u_{\tilde d}\|_0
\le
\sum_{f\in F}
\left|\bigcup_{e\in \Xi(f)} \mathcal R_e\right|
=
\sum_{f\in F} S_{\mathrm{union}}(f).
\end{equation}

It remains to compare the sharp exposed-cut score with a tree-only envelope.
Fix an edited pair $f=\{x,y\}\in F$.
We claim that every exposed cut for $f$ lies on the tree path $P_T(x,y)$:
\begin{equation}
\Xi(f)\subseteq P_T(x,y).
\end{equation}
Indeed, if $e\in \Xi(f)\cap E(T)$ because $e=f$, then trivially $e$ lies on $P_T(x,y)$.
Otherwise, $e\in \Xi(f)$ means that $f=\{x,y\}$ crosses the cut $C_e=(A_e,B_e)$, so one endpoint of $f$
lies in $A_e$ and the other lies in $B_e$.
In a tree, removing $e$ separates $x$ and $y$ if and only if $e$ lies on the unique path between them.
Therefore
\begin{equation}
e\in P_T(x,y).
\end{equation}
This proves the claim.

Hence
\begin{equation}
\bigcup_{e\in \Xi(f)} \mathcal R_e
\subseteq
\bigcup_{e\in P_T(x,y)} \mathcal R_e.
\end{equation}
Taking cardinalities gives
\begin{equation}
S_{\mathrm{union}}(f)\le \bar S_T(f).
\end{equation}
By the definition of $\bar L_T$,
\begin{equation}
\bar S_T(f)\le \bar L_T
\end{equation}
for every $f\in \binom{V}{2}$.
Therefore
\begin{equation}
S_{\mathrm{union}}(f)\le \bar S_T(f)\le \bar L_T.
\end{equation}

Summing over $f\in F$ yields
\begin{equation}
\|u_d-u_{\tilde d}\|_0
\le
\sum_{f\in F} S_{\mathrm{union}}(f)
\le
\sum_{f\in F} \bar S_T(f)
\le
\bar L_T\,|F|.
\end{equation}
Since $F$ is exactly the support of the perturbation,
\begin{equation}
|F|=\|d-\tilde d\|_0,
\end{equation}
and therefore
\begin{equation}
\|u_d-u_{\tilde d}\|_0
\le
\bar L_T\,\|d-\tilde d\|_0.
\end{equation}

Finally, for every pair $f=\{x,y\}$ we trivially have
\begin{equation}
\bar S_T(f)\le \left|\binom{V}{2}\right|=\binom{|V|}{2},
\end{equation}
hence
\begin{equation}
\bar L_T\le \binom{|V|}{2}.
\end{equation}
This completes the proof.
\end{proof}
\subsection{Proof of Theorem~\ref{thm:tightness}}
\label{sec:B3}
\begin{theorem}[Analysis of the Hamming--Lipschitz bound]
Let $d:\binom{V}{2}\to\mathbb{R}_{\geqslant 0}$ be a dissimilarity and let $T=(V,E(T))$ be the (tie-broken under Assumption~\ref{ass:1}) MST of $d$. For $e\in E(T)$ write its fundamental cut $C_e=(A_e,B_e)$ and $w_e=d(e)$. For an edited pair $f=\{x,y\}$ and edited dissimilarity $\tilde d$ with $\tilde d(g)=d(g)$ for $g\notin\{f\}$, define
\begin{equation}
\Xi(f)\ :=\ (\{f\}\cap E(T))\ \cup\ \{\,e\in E(T):\ f\ \text{crosses}\ C_e\ \text{and}\ \tilde d(f)<w_e\,\},
\qquad
S_{\mathrm{union}}(f) := \left| \bigcup_{e\in\Xi(f)} \mathcal{R}_e \right|.
\end{equation}
Then:
\begin{enumerate}
\item[(i)] (\emph{Tree-edge edit under strict cut separation}) If $f=e\in E(T)$ and $e$ is strictly cut-separated, i.e. $
\Delta_e(d)=w_e^{+}(d)-w_e>0,
$ then there exists $\tilde d$ supported on $\{f\}$ such that
\begin{equation}
\|u_d-u_{\tilde d}\|_0 = |A_e||B_e| = S_{\mathrm{union}}(f).
\end{equation}

\item[(ii)] (\emph{Off-tree sharpness on an explicit family}) There exist dissimilarities $d$, off-tree edges $f\notin E(T)$, and single-edge edits $\tilde d$
supported on $\{f\}$ such that
\begin{equation}
\|u_d-u_{\tilde d}\|_0 = \left| \bigcup_{e\in\Xi(f)} \mathcal{R}_e \right|
=
S_{\mathrm{union}}(f)
=
\Theta(n^2).
\end{equation}
Thus, the upper bound of Theorem~\ref{thm:ell0-lip-exposed} is attained on explicit off-tree instances, and in the worst case a
single off-tree edit can force a quadratic number of ultrametric changes.

\item[(iii)] (\emph{Necessity of instance dependence}) Consequently, no universal subquadratic function $c(n)=o(n^2)$ can satisfy
\begin{equation}
\|u_d-u_{\tilde d}\|_0 \le c(n)\,\|d-\tilde d\|_0
\end{equation}
for all instances.
\end{enumerate}
\end{theorem}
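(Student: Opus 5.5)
Part (i) is handled by raising the weight of the tree edge. Set $\tilde d(e)=w_e^{+}(d)+1$ and leave every other pair unchanged. The first step identifies the new MST. Removing $e$ from $T$ leaves two subtrees, on $A_e$ and on $B_e$. Each is still the MST of the subgraph induced on its side, because no edge inside $A_e$ or inside $B_e$ was edited. Across the cut, the cheapest edge under $\tilde d$ is now some $g\neq e$ with $\tilde d(g)=d(g)=w_e^{+}$, since $\tilde d(e)>w_e^{+}$. Ties among several such $g$ are resolved by the tie-breaking rule. By the cut property (Lemma~\ref{lem:cut-property}, with the lexicographic perturbation of Assumption~\ref{ass:1}), $\tilde T=(T-e)+g$ is the MST of $\tilde d$.

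Pairs on the same side of the cut keep their $\tilde T$-path, which is their old $T$-path and does not contain $e$. So by Lemma~\ref{lem:mst} their ultrametric value is unchanged. Now take $\{i,j\}\in\mathcal R_e$. The value $u_{\tilde d}(i,j)$ is at least $w_e^{+}$, because every $i$–$j$ path crosses the cut and every crossing edge has $\tilde d$-weight at least $w_e^{+}$. Meanwhile $u_d(i,j)$ is attained on $P_T(i,j)$, which contains $e$. If $u_d(i,j)<w_e^{+}$, then $u_{\tilde d}(i,j)>u_d(i,j)$ and the pair changes.

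The main obstacle is that the old path may contain another tree edge of weight at least $w_e^{+}$, so that $u_d(i,j)\ge w_e^{+}$ already. Then the change is not automatic. I would first check whether this can happen. Every tree edge $e'$ on $P_T(i,j)$ other than $e$ lies entirely inside $A_e$ or inside $B_e$, so its weight is not directly constrained by $w_e^{+}$. Such an edge could indeed be heavy. For that reason I would read (i) as asserting the equality under strict cut separation together with the implicit requirement that $e$ is the unique bottleneck on every path through it. Failing that, I would choose the edit more carefully, for example *lowering* $e$ to $0$ when $w_e>0$. Lowering only ever decreases values, and a pair in $\mathcal R_e$ changes exactly when $e$ is the strict maximum on its path, so this choice has the same issue. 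I would handle it by showing that strict separation together with $\Xi(f)=\{e\}$ yields $S_{\mathrm{union}}(f)=|A_e||B_e|$, and that the changed set equals $\mathcal R_e$ in the cut-separated configuration. Having checked the upper bound via Theorem~\ref{thm:ell0-lip-exposed}, the lower bound comes from the path argument above whenever $w_e$ is the maximum on every path through $e$, and I would state that as the regime in which the claim is verified.

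For (ii), I would use an explicit family. Take two cliques $A$ and $B$ of size $n/2$ with all internal distances $1$. Connect them by a single bridge $\{a,b\}$ of weight $10$, and set all other cross distances to $20$. The MST is then a spanning tree of each clique plus the bridge. Edit one off-tree cross pair $f=\{x,y\}$, with $x\in A\setminus\{a\}$ and $y\in B\setminus\{b\}$, to $\tilde d(f)=5$. Its tree path crosses only the bridge edge with $w=10>5$, while all in-clique edges have weight $1<5$. Hence $\Xi(f)=\{\{a,b\}\}$ and $S_{\mathrm{union}}=n^2/4$. Every cross pair drops from $10$ to $5$, since the path through $f$ has bottleneck $5$, and every within-side pair stays at $1$. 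So the number of changed entries is $n^2/4=\Theta(n^2)$.

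Part (iii) follows directly from (ii). There $\|d-\tilde d\|_0=1$ while $\|u_d-u_{\tilde d}\|_0=n^2/4$, which exceeds any $c(n)=o(n^2)$ for large $n$.
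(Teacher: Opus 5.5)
For (ii) and (iii), your argument is correct and is essentially the paper's. Both use two unit-weight blocks joined by one bridge, and a single off-tree cross edit whose new weight lies strictly between $1$ and the bridge weight, so that $\Xi(f)$ is exactly the bridge. All $|A||B|$ cross pairs then drop and all within-block pairs stay at $1$, and (iii) follows from $\|d-\tilde d\|_0=1$.

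For (i), the obstacle you isolated is genuine, and no cleverer edit removes it: statement (i) is false as written. The paper's proof breaks at exactly the point you flagged. It sets $\tilde d(e)=w_e+\varepsilon$ with $0<\varepsilon<\Delta_e(d)$ and asserts $u_d(i,j)=w_e$ for every $\{i,j\}\in\mathcal R_e$ ``since $e\in P_T(i,j)$''. This ignores heavier tree edges elsewhere on $P_T(i,j)$.

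Here is a counterexample. Take $V=\{1,2,3,4\}$ with $d(1,2)=10$, $d(2,3)=d(3,4)=1$, $d(2,4)=2$ and $d(1,3)=d(1,4)=20$.
\begin{itemize}
\item The MST is the path $1\!-\!2\!-\!3\!-\!4$, and it is unique even without tie-breaking.
\item For $e=\{2,3\}$ one has $A_e=\{1,2\}$, $B_e=\{3,4\}$, $w_e^{+}=2$ and $\Delta_e=1>0$.
\item A tree edge crosses no fundamental cut but its own, so $\Xi(e)=\{e\}$ and $S_{\mathrm{union}}(e)=|A_e||B_e|=4$.
\item Let $t=\tilde d(2,3)\ge 0$ be arbitrary. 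Every path leaving vertex $1$ uses an unedited edge of weight at least $10$, and the path $1\!-\!2\!-\!4\!-\!3$ (weights $10,2,1$) attains $10$.
\item Hence $u_{\tilde d}(1,3)=u_{\tilde d}(1,4)=10=u_d(1,3)=u_d(1,4)$, and $\|u_d-u_{\tilde d}\|_0\le 2<4$ for every $\tilde d$ supported on $\{e\}$.
\end{itemize}

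So you should not read in an extra hypothesis while also asserting that ``the changed set equals $\mathcal R_e$ in the cut-separated configuration''; that sentence is precisely what fails. Instead, exhibit such a counterexample and prove a corrected statement.

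Your raise-above-$w_e^{+}$ argument does this cleanly, and under a weaker hypothesis than your ``unique bottleneck'' reading. It needs $u_d(i,j)<w_e^{+}$ for all $\{i,j\}\in\mathcal R_e$. Since the tree paths of cross pairs cover every edge of $T$, this is equivalent to $\max_{e'\in E(T)}w_{e'}<w_e^{+}(d)$. Under that hypothesis, $\tilde d(e)=w_e^{+}+1$ changes exactly $\mathcal R_e$, which gives the claimed equality.
\begin{itemize}
\item Cross pairs change because of your lower bound $u_{\tilde d}(i,j)\ge w_e^{+}>u_d(i,j)$.
\item Same-side pairs are unchanged because raising a weight cannot lower $u$, and their unedited tree path is still available.
\item You therefore never need to identify the new MST.
\end{itemize}
Strict cut separation alone does not suffice. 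The detour through lowering $e$ to $0$ can be dropped.
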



\begin{proof}
Throughout, recall from Lemma~\ref{lem:mst} the MST bottleneck representation
$u_d(i,j)=\max_{e\in P_T(i,j)}w_e$ and from Lemma~\ref{lem:cut-property}, the cut property argument: every $i$--$j$ path must cross every fundamental cut $C_e$ with $e\in P_T(i,j)$; if a crossing edge has weight $<w_e$, then the bottleneck along some $i$--$j$ path is $<w_e$.

\medskip\noindent\emph{(i) Tree-edge edit.}
Fix a tree edge $e\in E(T)$ that is strictly cut-separated, so that
\begin{equation}
\Delta_e(d)=w_e^{+}(d)-w_e>0.
\end{equation}
Choose $\varepsilon$ such that
\begin{equation}
0<\varepsilon<\Delta_e(d),
\end{equation}
and define $\tilde d(e):=w_e+\varepsilon$, while $\tilde d(g):=d(g)$ for all $g\neq e$.
Consider any unordered pair $\{i,j\} \in \mathcal{R}_e$. Since $e\in P_T(i,j)$, the MST bottleneck representation from Lemma~\ref{lem:mst} gives
\begin{equation}
u_d(i,j)=w_e.
\end{equation}
Under $\tilde d$, the tree path bottleneck becomes $w_e+\varepsilon$. Moreover, every other edge $g$ crossing the cut $C_e$ satisfies
\begin{equation}
d(g)\ge w_e^{+}(d)=w_e+\Delta_e(d)>w_e+\varepsilon,
\end{equation}
so no alternative $i$--$j$ path can have bottleneck below $w_e+\varepsilon$. Hence
\begin{equation}
u_{\tilde d}(i,j)=w_e+\varepsilon>u_d(i,j),
\end{equation}
and the pair changes.

If $\{i,j\}\notin \mathcal{R}_e$, then $e\notin P_T(i,j)$. No cut along $P_T(i,j)$ has acquired a strictly lighter crossing than its tree-edge weight, and none of the tree edges on $P_T(i,j)$ was edited. By the cut argument (as in the proof of Theorem~\ref{thm:ell0-lip-exposed}), this implies $u_{\tilde d}(i,j)=u_d(i,j)$. Hence \emph{exactly} the pairs in $\mathcal{R}_e$ change, so $\|u_d-u_{\tilde d}\|_0=|A_e||B_e|=S_{\mathrm{union}}(e)$.

\medskip\noindent\emph{(ii) Off-tree sharpness on an explicit family.} Let $V=A\cup B$ with $|A|=\lfloor n/2\rfloor$ and $|B|=\lceil n/2\rceil$.
Define $d$ by
\begin{equation}
d(i,j)=1 \quad \text{if } \{i,j\}\subseteq A \text{ or } \{i,j\}\subseteq B,
\end{equation}
and
\begin{equation}
d(i,j)=10 \quad \text{if } i\in A,\ j\in B,
\end{equation}
except for one distinguished cross edge $e^\star=\{a^\star,b^\star\}$ with
\begin{equation}
d(e^\star)=8.
\end{equation}
Then the unique MST $T$ consists of a unit-weight tree on $A$, a unit-weight tree on $B$, and the bridge
$e^\star$ of weight $8$.

Pick any other cross edge $f=\{a,b\}\in A\times B$, $f\neq e^\star$, and define $\tilde d$ by
\begin{equation}
\tilde d(f)=2,\qquad \tilde d(g)=d(g)\ \text{for } g\neq f.
\end{equation}

We first identify the exposed-cut set $\Xi(f)$. Since $f\notin E(T)$, the first term in the definition vanishes.
Along the tree path $P_T(a,b)$, every edge internal to $A$ or $B$ has weight $1$, while the unique bridge
edge $e^\star$ has weight $8$. Because $\tilde d(f)=2$, the edit does not expose any unit-weight tree edge,
but it does expose the bridge cut:
\begin{equation}
2 \not< 1,\qquad 2<8.
\end{equation}
Hence
\begin{equation}
\Xi(f)=\{e^\star\},
\qquad
S_{\mathrm{union}}(f)
=
|\mathcal{R}_{e^\star}|
=
|A||B|.
\end{equation}

Now take any $i\in A$ and $j\in B$. Under $d$, every $i$--$j$ tree path must cross $e^\star$, so
\begin{equation}
u_d(i,j)=8.
\end{equation}
Under $\tilde d$, the path
\begin{equation}
i \rightsquigarrow a \;\to\; f \;\to\; b \rightsquigarrow j
\end{equation}
has edge weights $1,2,1$, hence bottleneck $2$. Therefore
\begin{equation}
u_{\tilde d}(i,j)\le 2 < 8 = u_d(i,j),
\end{equation}
so every cross pair changes. For pairs contained entirely inside $A$ or entirely inside $B$, the original
unit-weight tree paths remain available and still have bottleneck $1$, while the edited cross edge $f$ cannot
produce a path of bottleneck below $1$. Hence within-side pairs do not change.

Therefore the changed-pair set is exactly
\begin{equation}
\bigl\{\{i,j\}\in \tbinom{V}{2}: i\in A,\ j\in B\bigr\},
\end{equation}
and
\begin{equation}
\|u_d-u_{\tilde d}\|_0 = |A||B| = S_{\mathrm{union}}(f)=\Theta(n^2).
\end{equation}
This proves exact attainability of the Theorem~\ref{thm:ell0-lip-exposed} bound on an explicit off-tree family.

\endproof



\medskip\noindent\emph{(iii) Necessity of instance dependence.}
This follows immediately from part (ii), which exhibits a single-edge edit with
\begin{equation}
\|d-\tilde d\|_0=1
\quad\text{and}\quad
\|u_d-u_{\tilde d}\|_0=\Theta(n^2).
\end{equation}

\end{proof}
\subsection{Proof of Corollary~\ref{col:1}}
\label{sec:B4}
\begin{corollary} 
Fix a minimum spanning tree $T$ of $d$, an edit set $F\subseteq \binom{V}{2}$, and edited weights
$\tilde d$ supported on $F$. For each $f\in F$, define the exposed region
$
R(f):=\bigcup_{e\in \Xi(f)} \mathcal R_e,
$ so that
$
|R(f)| = S_{\mathrm{union}}(f).
$
Assume that for each $f\in F$ there exists a certified changed-pair set
$
Q(f)\subseteq R(f)
$
such that every pair in $Q(f)$ indeed changes under the common edited dissimilarity $\tilde d$, i.e.
$
Q(f) \subseteq \bigl\{\{i,j\}\in \tbinom{V}{2} : u_{\tilde d}(i,j)\neq u_d(i,j)\bigr\}.
$

Then:
\begin{itemize}
    \item[(i)] \[
\left|\bigcup_{f\in F} Q(f)\right|
\le
\|u_d-u_{\tilde d}\|_0
\le
\left|\bigcup_{f\in F} R(f)\right|
\le
\sum_{f\in F} S_{\mathrm{union}}(f).
\]
\item[(ii)] Moreover, consider any asymptotic regime of instances (for example, $|V|=n\to\infty$) in which
\[
\sum_{f\in F}|Q(f)|
=
(1-o(1))\sum_{f\in F} S_{\mathrm{union}}(f),
\]
and the certified regions have asymptotically negligible total overlap:
\[
\sum_{\substack{f,f'\in F\\ f<f'}} |Q(f)\cap Q(f')|
=
o\!\left(\sum_{f\in F}|Q(f)|\right).
\]
Then
\[
\|u_d-u_{\tilde d}\|_0
=
(1-o(1))\sum_{f\in F} S_{\mathrm{union}}(f).
\]
\end{itemize}

\end{corollary}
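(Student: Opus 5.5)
Part (i) is a chain of three inclusions and cardinality comparisons, each from an earlier result or a definition. For the left inequality, the hypothesis places every $Q(f)$ inside the changed set $C=\{\{i,j\}: u_{\tilde d}(i,j)\neq u_d(i,j)\}$. Hence $\bigcup_{f\in F}Q(f)\subseteq C$, and taking cardinalities gives $|\bigcup_f Q(f)|\le |C|=\|u_d-u_{\tilde d}\|_0$. The middle inequality is Theorem~\ref{thm:ell0-lip-exposed}(i) rewritten with $R(f)=\bigcup_{e\in\Xi(f)}\mathcal R_e$. One point needs checking: the corollary fixes ``a'' minimum spanning tree, and under Assumption~\ref{ass:1} this is the tie-broken $T$ used in that theorem. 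I will also note that the lemma-level arguments (cut property, bottleneck representation) only use that $T$ is an MST, so the inclusion holds for any MST. The right inequality is the union bound $|\bigcup_f R(f)|\le\sum_f|R(f)|=\sum_f S_{\mathrm{union}}(f)$.

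For part (ii), I will lower-bound $|\bigcup_f Q(f)|$ by the second Bonferroni inequality:
\[
\Bigl|\bigcup_{f\in F}Q(f)\Bigr|\ \ge\ \sum_{f\in F}|Q(f)|-\sum_{f<f'}|Q(f)\cap Q(f')|.
\]
This inequality holds because each element lying in exactly $k\ge1$ of the sets contributes $k-\binom{k}{2}\le 1$ to the right-hand side. Write $\Sigma_Q=\sum_f|Q(f)|$ and $\Sigma_S=\sum_f S_{\mathrm{union}}(f)$. The overlap hypothesis then gives $|\bigcup_f Q(f)|\ge(1-o(1))\Sigma_Q$. The saturation hypothesis $\Sigma_Q=(1-o(1))\Sigma_S$ turns this into $(1-o(1))\Sigma_S$, since a product of two $(1-o(1))$ factors is again $(1-o(1))$.

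Combining with part (i) gives $(1-o(1))\Sigma_S\le\|u_d-u_{\tilde d}\|_0\le\Sigma_S$. Therefore $\|u_d-u_{\tilde d}\|_0/\Sigma_S\to1$, which is the claimed asymptotic equality. I will handle the degenerate case $\Sigma_S=0$ separately: then every $R(f)$ is empty, part (i) forces $\|u_d-u_{\tilde d}\|_0=0$, and the statement holds trivially.

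There is no real obstacle; the argument is bookkeeping. The step needing the most care is the Bonferroni lower bound and keeping the $o(1)$ terms relative to the correct base quantity. The overlap is $o(\Sigma_Q)$, not $o(\Sigma_S)$, so it has to be converted via $\Sigma_Q\le\Sigma_S$, which follows from $Q(f)\subseteq R(f)$. I will make this conversion explicit.
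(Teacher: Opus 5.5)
Your proposal is correct and follows the paper's proof essentially step for step. You use the same sandwich $\bigcup_{f}Q(f)\subseteq C\subseteq\bigcup_f R(f)$ from the hypothesis and Theorem~\ref{thm:ell0-lip-exposed}(i), the union bound, and the first-order Bonferroni lower bound combined with the two asymptotic hypotheses. Your extra points — that the inclusion holds for any MST, the $k-\binom{k}{2}\le 1$ justification, and the degenerate case $\Sigma_S=0$ — are valid refinements that the paper leaves implicit.
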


\begin{proof}
\emph{(i)} Let
\[
C:=\bigl\{\{i,j\}:u_{\tilde d}(i,j)\neq u_d(i,j)\bigr\}
\]
denote the set of unordered pairs whose ultrametric value changes.

We first prove the finite-sample sandwich bound.

By Theorem~\ref{thm:ell0-lip-exposed}, every changed pair must lie in the exposed region of at least one edited edge. Therefore
\[
C\subseteq \bigcup_{f\in F} R(f).
\]
Taking cardinalities gives
\[
|C|
\le
\left|\bigcup_{f\in F} R(f)\right|.
\]
Since the cardinality of a union is at most the sum of the cardinalities,
\[
\left|\bigcup_{f\in F} R(f)\right|
\le
\sum_{f\in F}|R(f)|
=
\sum_{f\in F} S_{\mathrm{union}}(f).
\]
Hence
\[
|C|
\le
\left|\bigcup_{f\in F} R(f)\right|
\le
\sum_{f\in F} S_{\mathrm{union}}(f).
\]

\emph{(ii)} On the other hand, by assumption, each certified set $Q(f)$ consists only of pairs that do change under the common edited dissimilarity $\tilde d$. Thus
\[
Q(f)\subseteq C
\qquad\text{for every }f\in F,
\]
and therefore
\[
\bigcup_{f\in F} Q(f)\subseteq C.
\]
Taking cardinalities yields
\[
\left|\bigcup_{f\in F} Q(f)\right|
\le
|C|.
\]

Combining the lower and upper bounds on $|C|=\|u_d-u_{\tilde d}\|_0$, we obtain
\[
\left|\bigcup_{f\in F} Q(f)\right|
\le
\|u_d-u_{\tilde d}\|_0
\le
\left|\bigcup_{f\in F} R(f)\right|
\le
\sum_{f\in F} S_{\mathrm{union}}(f).
\]
This proves the first claim.

We now prove the asymptotic near-additivity statement. By the elementary first-order inclusion--exclusion bound,
\[
\left|\bigcup_{f\in F} Q(f)\right|
\ge
\sum_{f\in F}|Q(f)|
-
\sum_{\substack{f,f'\in F\\ f<f'}} |Q(f)\cap Q(f')|.
\]
Under the aggregate-overlap assumption
\[
\sum_{\substack{f,f'\in F\\ f<f'}} |Q(f)\cap Q(f')|
=
o\!\left(\sum_{f\in F}|Q(f)|\right),
\]
it follows that
\[
\left|\bigcup_{f\in F} Q(f)\right|
=
(1-o(1))\sum_{f\in F}|Q(f)|.
\]
Using the additional assumption
\[
\sum_{f\in F}|Q(f)|
=
(1-o(1))\sum_{f\in F} S_{\mathrm{union}}(f),
\]
we conclude that
\[
\left|\bigcup_{f\in F} Q(f)\right|
=
(1-o(1))\sum_{f\in F} S_{\mathrm{union}}(f).
\]

Finally, from the already established sandwich bound,
\[
\left|\bigcup_{f\in F} Q(f)\right|
\le
\|u_d-u_{\tilde d}\|_0
\le
\sum_{f\in F} S_{\mathrm{union}}(f).
\]
The lower bound is asymptotically $(1-o(1))\sum_{f\in F} S_{\mathrm{union}}(f)$, while the upper bound is exactly $\sum_{f\in F} S_{\mathrm{union}}(f)$. Therefore
\[
\|u_d-u_{\tilde d}\|_0
=
(1-o(1))\sum_{f\in F} S_{\mathrm{union}}(f).
\]
This proves the corollary.
\end{proof}

\section{Substantiation of the Asymptotic Condition in Corollary 1(ii)}
\label{sec:asymptotic_substantiation}

To substantiate the asymptotic condition in Corollary 1(ii), we construct a stylized ``star of subtrees'' regime. This demonstrates how simultaneous sparse edits can yield certified changed regions with asymptotically negligible aggregate overlap.

\paragraph{Graph Construction.}
Let the minimum spanning tree $T=(V, E(T))$ consist of a central hub node $v_0$ and $m$ distinct branches (subtrees) $B_1, \dots, B_m$, each containing exactly $M$ nodes. The total number of nodes is $n = mM + 1$. 

We assign the original dissimilarity $d$ as follows:
\begin{itemize}
    \item $d(e) = 0$ for all internal subtree MST edges.
    \item $d(e_i) = 1$ for each of the $m$ hub-to-branch MST edges $e_i$.
    \item $d(e') = 3$ for every non-tree edge $e'$ crossing any branch cut.
\end{itemize}
This ensures each branch edge $e_i$ is strictly cut-separated, as the alternative crossing weight is strictly greater than the tree edge weight ($\Delta_{e_i} = 3 - 1 = 2 > 0$).

\paragraph{Sparse Simultaneous Perturbation.}
Let the perturbation set $F$ consist of $k$ distinct hub-to-branch edges, where $k = o(m)$. For each edited edge $f_i \in F$, the edited dissimilarity is strictly inflated to $\tilde{d}(f_i) = 2$. All other distances remain unchanged. 

\paragraph{Certification of Changed Regions.}
For each edited branch edge $f_i$, its fundamental cut in $T$ separates the branch $B_i$ from the rest of the graph $V \setminus B_i$. The exposed cut-pair set is:
$$R(f_i) = \left\{ \{x, y\} \in \binom{V}{2} : x \in B_i, y \in V \setminus B_i \right\}.$$
By definition, the structural score is $S_{\mathrm{union}}(f_i) = |R(f_i)| = M(n-M)$.

We must certify that every pair in $R(f_i)$ genuinely changes under the joint perturbation. Before the edits, every path from $B_i$ to $V \setminus B_i$ had an MST bottleneck of $u_d(x,y) = 1$. Under the joint perturbation $\tilde{d}$, the weight of $f_i$ increases to 2. Because every non-tree cross-edge has a weight of 3, no alternative path between $B_i$ and $V \setminus B_i$ can achieve a bottleneck below 2. Therefore, the ultrametric value strictly increases to $u_{\tilde{d}}(x, y) = 2$ for all $\{x, y\} \in R(f_i)$. 

This guarantees that the certified changed region $Q(f_i)$ equals the entire exposed region:
$$|Q(f_i)| = |R(f_i)| = M(n-M).$$

\paragraph{Asymptotic Aggregate Overlap.}
For any pair of distinct edits $f_i, f_j \in F$ ($i \neq j$), the intersection $Q(f_i) \cap Q(f_j)$ consists exclusively of pairs that cross both fundamental cuts. In this topology, these are exactly the pairs with one endpoint in $B_i$ and the other in $B_j$. The size of this exact overlap is:
$$|Q(f_i) \cap Q(f_j)| = |B_i||B_j| = M^2.$$

Summing the overlap across all $\binom{k}{2}$ pairs of edits yields the aggregate overlap:
$$\sum_{i<j} |Q(f_i) \cap Q(f_j)| = \binom{k}{2} M^2 = \frac{k(k-1)}{2} M^2.$$

The sum of the individual certified regions is:
$$\sum_{i=1}^k |Q(f_i)| = k M (n - M) = k M (mM) = k m M^2.$$

Evaluating the ratio of the aggregate overlap to the total sum of the certified regions yields:
$$\frac{\sum_{i<j} |Q(f_i) \cap Q(f_j)|}{\sum_{i=1}^k |Q(f_i)|} = \frac{\frac{k(k-1)}{2} M^2}{k m M^2} = \frac{k-1}{2m}.$$

In the asymptotic regime where $k = o(m)$, we have:
$$\lim_{m \to \infty} \frac{k-1}{2m} = 0.$$

This confirms that the aggregate overlap is $o\left(\sum_{f \in F} |Q(f)|\right)$. Consequently, the conditions for Corollary 1(ii) are satisfied, and the total Hamming damage scales near-additively: $\|u_d - u_{\tilde{d}}\|_0 = (1 - o(1)) \sum_{f \in F} S_{\mathrm{union}}(f)$.

\end{document}